\newcommand*{\addFileDependency}[1]{
  \typeout{(#1)}
  \@addtofilelist{#1}
  \IfFileExists{#1}{}{\typeout{No file #1.}}
}
\newcommand*{\myexternaldocument}[1]{%
    \externaldocument{#1}%
    \addFileDependency{#1.tex}%
    \addFileDependency{#1.aux}%
}
\theoremstyle{theorem}
\newtheorem{theorem}{Theorem}[section]
\newtheorem{lemma}[theorem]{Lemma}
\newtheorem{proposition}[theorem]{Proposition}
\newtheorem*{example}{Example}
\theoremstyle{definition}
\newtheorem{definition}{Definition}[section]
\newtheorem{assumption}{Assumption}
\theoremstyle{remark}
\newtheorem*{remark}{Remark}
\begin{document}
\twocolumn[
\aistatstitle{Gradient Descent in RKHS with Importance Labeling}

\aistatsauthor{ Tomoya Murata\And Taiji Suzuki}

\aistatsaddress{ NTT DATA Mathematical Systems Inc. \\The University of Tokyo\footnotemark[1]\\murata@msi.co.jp \And The University of Tokyo\footnotemark[1]\\RIKEN AIP\footnotemark[2]\\taiji@mist.i.u-tokyo.ac.jp} ]

\begin{abstract}
Labeling cost is often expensive and is a fundamental limitation of supervised learning. In this paper, we study importance labeling problem, in which we are given many unlabeled data and select a limited number of data to be labeled from the unlabeled data, and then a learning algorithm is executed on the selected one. We propose a new importance labeling scheme that can effectively select an informative subset of unlabeled data in least squares regression in Reproducing Kernel Hilbert Spaces (RKHS). We analyze the generalization error of gradient descent combined with our labeling scheme and show that the proposed algorithm achieves the optimal rate of convergence in much wider settings and especially gives much better generalization ability in a small label noise setting than the usual uniform sampling scheme. Numerical experiments verify our theoretical findings.
\end{abstract}

\footnotetext[1]{Department of Mathematical Informatics, Graduate School of Information Science and Technology, The University of Tokyo}
\footnotetext[2]{Center for Advanced Intelligence Project, RIKEN, Tokyo, Japan}
\addtocounter{footnote}{2}
\section{Introduction}
One of the most popular task in machine learning is supervised learning, in which we estimate a function that maps an input to its label based on finite labeled examples called training data. The goodness of the learned function is measured by the generalization ability, that is roughly the accuracy of the learned function for previously unseen data. Statistical learning theory is a powerful tool which gives a framework for analysing the generalization errors of learning algorithms \citep{vapnik1998statistical}. Enormous learning algorithms have been proposed and their generalization abilities are analysed in various settings. \par
In spite of the great successes of supervised learning, it has a fundamental limitation due to the expensive cost for making training examples. Particularly, it is often the case that collecting input data is cheap but to give labels of them is limited or expensive and that is one of bottlenecks in supervised learning \citep{roh2019survey}. The dilemma is that the more labeled data, better generalization ability is guaranteed but the higher labeling cost is incurred. \par
In this limited situation, {\it{importance labeling}} problem naturally arises, which is a special case of active learning \citep{settles2009active}. In the importance labeling settings, we first collect many unlabeled examples. Then we choose a limited number of examples to be labeled from unlabeled ones. The most naive selection of labeled examples is based on uniform subsampling from unlabeled data. What we expect here is that if we choose labeled samples effectively, then better generalization ability may be acquired.
\par
Despite the significance of the problem, theoretical aspects of importance labeling is little known. The essential question is what importance labeling scheme surpasses the standard uniform labeling in what settings. \par

In this paper, we consider this quite general question in the context of least squares  regression in Reproducing Kernel Hilbert Spaces (RKHS). Kernel method is classical and promising approach for learning nonlinear functions \citep{scholkopf2002learning}. In kernel method, input data is mapped to an (potentially) infinite dimensional feature space and then a linear predictor on the feature space is learned. The feature space is determined by the user-defined kernel function and numerous kernel functions are known, e.g., classical Gaussian kernel and more modern neural tangent kernel (NTK) \citep{jacot2018neural}. Least squares regression in RKHS has a long history and its generalization ability has been thoroughly studied in supervised learning settings \citep{caponnetto2007optimal,steinwart2009optimal,rosasco2015learning,dieuleveut2016nonparametric,rudi2017generalization}. However, these papers do not consider the utilization of the unlabeled data and hence the derived theoretical generalization ability may be sub-optimal because the uniform labeling never captures the ``importance" of each data point. This paper gives a novel sampling scheme from unlabeled data by defining the importance of each data point as the {\it{contribution ratio to effective dimension}}. \par
\subsection*{Main Contributions}
\begin{itemize}
    \item We propose a new importance labeling scheme called CRED (Contribution Ratios to Effective Dimension), which employs so-called \emph{contribution ratio} as the importance of each data point so that we can efficiently exploit information of input data. The contribution ratio measures how each data point contributes to the \emph{effective dimensionality} of RKHS which plays the essential role for characterizing the estimation performance of kernel ridge regression. 
    \item The generalization error of gradient descent on the labeled dataset selected by CRED is theoretically analysed in the settings of kernel ridge regression. It is shown that our algorithm achieves wider optimality than existing methods in general settings and significantly better generalization ability particularly under low label noise (i.e., near interpolation) settings.
    \item The algorithm and the theoretical results are extended to random features settings and the potential computational intractability of CRED from infinite dimensionality of RKHS is resolved.
\end{itemize}
The comparison of theoretical generalization errors between our proposed algorithms with the most relevant existing methods is summarised in Table \ref{tab: theoretical_comparison}. 

\begin{table*}[t]
    \centering
    \scalebox{0.9}{
    \begin{tabular}{c c c}\hline
        Method & Generalization Error & Additional Assumptions \\ \hline 
        (S)GD \citep{pillaud2018statistical}& $\left(\frac{C}{n}\right)^\frac{2r}{\mu}+ \left(\frac{\sigma^2\mathrm{Tr}(\Sigma^\frac{1}{\alpha})}{n}\right)^\frac{2r\alpha}{2r\alpha+1}$ & $\exists\mu \in [\frac{1}{\alpha}, 1]:\|\Sigma^{\frac{\mu}{2}-\frac{1}{2}}K_x\|_H^2 \leq C$ a.e. $x$ \\ 
        KTR$^3$ \citep{jun2019kernel} &  $\left(n^{-2r}+\left(\frac{\sigma^2}{n}\right)^\frac{2r}{2r+1}\right)\wedge \left(\frac{M^2\mathrm{Tr}(\Sigma^\frac{1}{\alpha})}{n}\right)^\frac{2r\alpha}{2r\alpha+1}$ & None \\ 
        SSSL \citep{ji2012simple}& $n^{-\frac{(\alpha-1)}{2}}$ & $r\geq0.5$, $\sigma^2 = 0$ \par
        sufficiently large $N$ \\
        \color{red}CRED-GD (this paper) & \color{red}$\left(\frac{\mathrm{Tr}(\Sigma^{\frac{1}{\alpha}})}{n}\right)^{2r\alpha} +
        \left(\frac{\sigma^2\mathrm{Tr}(\Sigma^{\frac{1}{\alpha}})}{n}\right)^{\frac{2r\alpha}{2r\alpha+1}}$ & \color{red}sufficiently large $N$ \\ \hline 
        RF-KRLS \citep{rudi2017generalization}& $n^{-2r} + \left(\frac{\sigma^2\mathrm{Tr}(\Sigma^\frac{1}{\alpha})}{n}\right)^\frac{2r\alpha}{2r\alpha+1}$ & $r \geq 0.5$, sufficiently large $m$ \\ 
        \color{red}RF-CRED-GD (this paper) & \color{red}$\left(\frac{\mathrm{Tr}(\Sigma^{\frac{1}{\alpha}})}{n}\right)^{2r\alpha} +
        \left(\frac{\sigma^2\mathrm{Tr}(\Sigma^{\frac{1}{\alpha}})}{n}\right)^{\frac{2r\alpha}{2r\alpha+1}}$ & \color{red}sufficiently large $m, N$ \\ \hline         
    \end{tabular}
    }
    \caption{Comparison of theoretical generalization errors between our proposed algorithms and most relevant existing methods (The bottom two methods use approximation by $m$ random features). $n$ is the number of labeled data, $\sigma^2$ is the variance of label noise, $M$ is the uniform upper bound of labels, $r \in [0, 1]$ represents the smoothness of the target function and $\alpha > 1$ captures the simplicity of the feature space. In column ``Additional Assumptions," $N$ means the number of unlabeled data. Please refer to Section \ref{sec: problem_settings_and_assumtions} for the detailed definitions of these parameters. Extra log factors $\mathrm{poly}(\mathrm{log}(n), \mathrm{log}(\delta^{-1}))$ are hided for simplicity, where $\delta$ is confidence parameter for high probability bounds.}
    \label{tab: theoretical_comparison}
\end{table*}

\subsection*{Related Work}
Here, we briefly overview the most relevant research areas and methods to our work. \par
{\bf{Supervised Learning.}} 
Supervised least squares regression in RKHS has been thoroughly studied \citep{yao2007early,caponnetto2007optimal,steinwart2009optimal,rosasco2015learning,dieuleveut2016nonparametric, rudi2017generalization, lin2017optimal, carratino2018learning,pillaud2018statistical, jun2019kernel}. \cite{caponnetto2007optimal, steinwart2009optimal} have shown the minimax optimal generalization ability of kernel ridge regression  under suitable assumptions. In \cite{yao2007early, rosasco2015learning},  gradient descent for kernel ridgeless regression has been considered and the effect of early stopping as implicit regularization has been theoretically justified. The analysis has been further improved with additional assumption about eigenvalues decay of the covariance operator of the feature space \citep{lin2017optimal}. Online stochastic gradient descent (SGD) has been studied in \citep{dieuleveut2016nonparametric} and the minimax optimal rate has been established when the true function is (nearly) attainable.  Recently the authors of \citep{pillaud2018statistical} have considered Multi-Pass SGD and shown its optimality without attainability of the true function under additional assumption about the capacity of the feature space in terms of infinity norm. Random features technique \citep{rahimi2008random} can be applicable to kernel regression and reduces the computational time. The generalization ability of kernel regression with random features has been studied in \cite{rudi2017generalization, carratino2018learning} and it has been shown that random features technique doesn't hurt the generalization ability when the number of random features is sufficiently large and the true function is attainable. More recently, in \citep{jun2019kernel}, low label noise cases have been particularly discussed and their proposed Kernel Truncated  Randomized Ridge Regression (KTR$^3$) achieves an improved rate when the label noise is low. However, these papers do not consider the utilization of the unlabeled data and hence the generalization ability may be sub-optimal in the importance labeling settings considered in this paper. \par
{\bf{Semi-Supervised Learning.}}  Semi-supervised learning has a close relation to importance labeling. In semi-supervised learning, we are given many unlabeled data and small number of labeled data. Typically the labeled data is uniformly selected from unlabeled data. Semi-supervised learning aims to get better generalization ability by the effective use of unlabeled examples typically under so-called cluster assumption \citep{balcan2005pac,rigollet2007generalization,ben2008does,wasserman2008statistical}. In contrast, the importance labeling scheme in this paper aims to get better generalization ability by the effective choice of labeled examples without the assumption. In \cite{ji2012simple}, a simple semi-supervised kernel regression algorithm called SSSR has been proposed and they have shown that the generalization ability surpasses the one of supervised learning when the true function is attainable and deterministic. Roughly speaking, the algorithm first computes eigen-system of covariance operator in the feature space using unlabeled data. Then, linear regression is executed on the principle eigen-functions as features. The theory of SSSR does not require the cluster assumption and is on the standard theoretical settings of kernel regression, but the generalization ability may be still sub-optimal. \par
{\bf{Active Learning}}. Active learning is also a close concept to importance labeling. In active learning, we are given learned model on small labeled data and then select new labeled data from unlabeled one by utilizing the information of the learned model. In some sense, active learning is a generalized concept of important labeling. However, in active learning, how to select the  initially labeled data is out-of-scope and typically assumed to be uniform selection. Enormous active learning strategies have been proposed \citep{brinker2003incorporating,dasgupta2005analysis,yu2006active,kapoor2007active,guo2008discriminative,wei2015submodularity,gal2017deep,sener2017active} (\citep{settles2009active} for extensive survey) and empirically studied their performances but their theoretical aspects are little known at least in our kernel regression setting. \par
{\bf{Importance Sampling.}} Importance sampling is a general technique to reduce the variance of estimations and typically used in Monte Carlo methods and stochastic optimization \citep{needell2014stochastic,zhao2015stochastic,alain2015variance,csiba2018importance,chen2019fast}. The behind idea is that if the realizations that potentially cause large variance are more frequently sampled, the variance of a bias-corrected estimator can be reduced. However, the definition of importance is strongly problem-dependent and to the best of our knowledge, any algorithms for importance labeling problem have not been proposed so far.

\section{Problem Settings and Assumptions}\label{sec: problem_settings_and_assumtions}
In this section, we provide the formal problem settings in this paper and theoretical assumptions for our analysis.
\subsection{Kernel Regression with Importance Labeling}
Let $Z_N = \{(x_j, y_j)\}_{j=1}^N$ be i.i.d. samples from some distribution $\rho_\mathcal{Z}$, where $z_j = (x_j, y_j) \in \mathcal{X}\times \mathcal{Y} \subset \mathbb{R}^d\times\mathbb{R}$, and $X_N = \{x_j\}_{j=1}^N$, $\bm y_N = \{y_j\}_{j=1}^N$. We denote $\rho_\mathcal{X}$ as the marginal distribution of $\mathcal{Z}$ on $\mathcal{X}$ and $\rho_{\mathcal{Y}|x}$ as the conditional distribution of $\mathcal{Y}$ with respect to $x \in \mathcal{X}$. We subsample $Z_n = \{(x_{j(i)}, y_{j(i)})\}_{i=1}^n$ ($n < N$) from $Z_N$ according to user-defined distribution $q$ on $Z_N$ and we denote $X_n = \{x_{j(i)}\}_{i=1}^n$, $\bm y_n = \{y_{j(i)}\}_{i=1}^n$. \par
The objective of this paper is to minimize the excess risk $\mathcal{E}(f) - \mathrm{inf}_{f' \in H} \mathcal{E}(f')$ only using the information of labeled observations $Z_n$, where $\mathcal{E}(w) = \int_\mathcal{Z} \frac{1}{2}(y - w(x))^2d\rho_\mathcal{Z}(x, y)$ and $H \subset L^2(\rho_\mathcal{X}) (\subset \mathbb{R}^\mathcal{X})$ is some 
Reproducing Kernel Hilbert Space (RKHS) with inner product $\langle \cdot, \cdot \rangle_H: H\times H \to \mathbb{R}$ and kernel $K(\cdot, \cdot): \mathcal{X}\times\mathcal{X} \to \mathbb{R}$. 
\paragraph{Notation} We denote by $\|\cdot\|_H$ the norm induced by $\langle \cdot, \cdot\rangle_H$ and $\|\cdot\|_2$ as the Euclidean norm. Let $\Sigma = S^*S: H \to H$ and $\mathcal{L}=SS^*: L^2(\rho_\mathcal{X}) \to L^2(\rho_\mathcal{X})$, where the operator $S$ is the natural embedding from $H$ to $L^2(\rho_\mathcal X)$ and $S^*$ is the adjoint operator of $S$. We define $T_\lambda$ as $T+\lambda I$ for operator $T$. For natural number $m$, We denote $\{1, \ldots, m\}$ by $[m]$. $K_x$ denotes the operator $K(x, \cdot) = K(\cdot, x): \mathcal{X}\to\mathbb{R} \in H$ for $x \in \mathcal{X}$. $K_x$ can be regard as a ``feature" of input $x$.

\subsection{Theoretical Assumptions}
We make the following assumptions for our theoretical analysis. These are fairly standard in the literature of statistical learning theory for kernel methods  \citep{steinwart2009optimal,dieuleveut2016nonparametric, lin2017optimal,pillaud2018statistical}. 
\begin{assumption}[Boundedness of feature]\label{assump: kernel_boundedness}
 For some $\kappa > 0$, $\mathrm{sup}_{x\in\mathrm{supp}(\rho_{\mathcal X})} \|K_x\|_H \leq \kappa$.
\end{assumption}

\begin{assumption}[Smoothness of true function]\label{assump: smoothness_of_true}
There exists $r \in (0, 1]$ such that $f_* = \mathcal L^r \phi$ for some $\phi \in L^2(\rho_{\mathcal X})$ with $\|\phi\|_{L^2(\rho_{\mathcal X})} \leq R$ ($R>0$). Here $f_*(\cdot) = \int_\mathcal{Y}yd\rho_{\mathcal{Y}|\cdot}$ that is the regression function (or true function).
\end{assumption}
Assumption \ref{assump: smoothness_of_true} quantifies the complexity of true function $f_*$ in terms of the eigen-system of $\mathcal L$. It is known that when $r \geq 1/2$, $\mathcal{L}^r(L^2(\rho_\mathcal X))$ becomes a subset of $H$ and particularly $r = 1/2$, it exactly matches to $H$. Thus, we have $f_* \in H$ whenever $r \geq 1/2$. However, when $r < 1/2$, generally $f_* \notin H$. As $r \to 0$, roughly $\mathcal{L}^r(L^2(\rho_\mathcal X)) \to L^2(\rho_\mathcal X)$. This means that $f_*$ can be more complex (or non-smooth) for smaller $r$.

\begin{assumption}[Polynomial decay of eigenvalues]\label{assump: eigen_decay}
There exists $\alpha > 1$ such that $\mathrm{Tr}(\Sigma^{1/\alpha}) < \infty$. 
\end{assumption}
Parameter $\alpha$ characterizes the complexity of feature space $H$. For larger $\alpha$, the feature space becomes more simple and particularly when $\alpha = \infty$, the feature space must have finite dimension. Note that even for feature spaces with finite dimensionality $d$, discussions of the case $\alpha < \infty$ are important because $\mathrm{Tr}(\Sigma^{1/\alpha})$ can be much smaller than $\mathrm{Tr}(\Sigma^{1/\infty}) = d$ for some $\alpha \in (1, \infty)$.

\begin{assumption}[Bounded variance and uniform bounededness of labels]\label{assump: noise}
 There exists $\sigma \geq 0$ and $M \geq 1$ such that $\mathbb{E}(y - f_*(x))^2 \leq \sigma^2$ and $|y|\leq M$ almost surely.
\end{assumption}
Generally label noise $\sigma > 0$, but we are particularly interested in the case $\sigma \to 0$. 

\section{Proposed Algorithm}
In this section, first the behind ideas are described and then formal descriptions of the proposed algorithm are given. \par
{\bf{Behind Ideas. }}Our proposed importance labeling scheme is based on the contribution ratios to {\it{effective dimension}} which plays the essential role for characterizing the estimation performance of kernel ridge regression \citep{zhang2005learning}. First recall the notion of effective dimension $\mathcal{N}_\infty(\lambda) = \mathbb{E}_x\|\Sigma_\lambda^{-1/2}K_x\|_H^2$, that is roughly the mean of the squared Mahalanobis distances of the features if $\mathbb{E}_x[K_x] = 0$. The essential intuition of our scheme is that labeling input $x$ that has a large contribution to effective dimension reduces the estimation variance . To realize this intuition, we construct an importance sampling distribution proportional to  $\|\Sigma_\lambda^{-1/2}K_x\|_H^2$ on unlabeled data samples. After sampling the data to be labeled, correcting the bias of the empirical risk caused by the importance labeling is needed. This situation is very similar to the one in the well-known {\it{importance sampling}} in the literature of classical Monte Carlo methods. \par
Next, for supporting the intuition and understanding how our sampling scheme works, we conduct simple synthetic experiments. We focus on a two dimensional feature space in $\mathbb{R}^2$. First we generated $100,000$ unlabeled samples $\{(x_1^{(i)}, x_2^{(i)})\}_{i=1}^{100000}$ according to $X_1 \sim N(0, 1)$ and $X_2 \sim N(0, 0.01)$ independently. For comparing our scheme with uniform labeling, we labeled $100$ data samples from unlabeled one using two sampling scheme independently. Figure \ref{fig: sampling_comp} shows the comparison of the labeled data by the two schemes. We can see that the data samples labeled by our proposed CRED covers a wider range of areas than uniform labeling. For making sure that CRED reduces the estimation variance, we conducted $1,000$ runs of least square regression on randomly labeled $3$ data samples using CRED and uniform labeling independently. We set true function $f_*$ to $f_*(x_1, x_2) = x_1 + x_2$ and added Gaussian noise with mean zero and variance $0.01$ for generating labels. Note that for each labeled sample we multiplied the inverse of the labeling probability of the sample to the correspondence loss and corrected the bias of the empirical risk caused by the importance labeling as in the standard importance sampling scheme. Figure \ref{fig: estimations_dev} shows the comparison of the deviation of the estimated regression coefficients. We can see that CRED in fact significantly reduces the estimation variance. 

\begin{figure}[t]
    \centering
    \hspace{-0.09\hsize}
    \includegraphics[width=7.9cm]{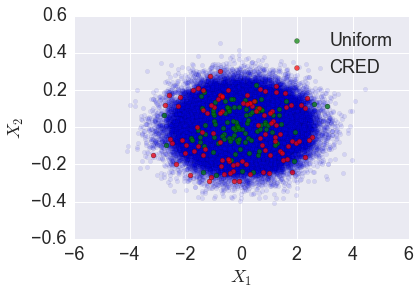}
    \caption{Comparision of the selected (labeled) data samples. (Blue) Unlabeled data ($100,000$ points). (Green) Labeled data selected by uniform sampling ($100$ points). (Red) Labeled data samples selected by CRED ($100$ points). }
    \label{fig: sampling_comp}
\end{figure}
\begin{figure}[t]
    \centering
    \includegraphics[width=7cm]{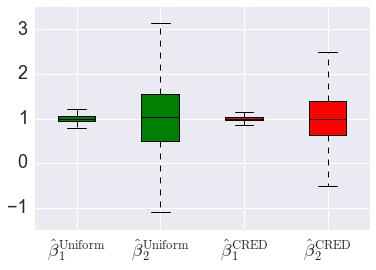}
    \caption{Comparison of the deviation of the estimated regression coefficients using $3$ labeled points selected by uniform sampling and CRED ($1,000$ independent trials). The true coefficients were $(\beta_1^*, \beta_2^*) = (1, 1)$. }
    \label{fig: estimations_dev}
\end{figure}

{\bf{Concrete Algorithm. }}Our proposed algorithm is illustrated in Algorithm \ref{alg: cred_il}. The algorithm consists of two blocks of importance labeling and optimization by gradient descent. \par
First we select a subset of the unlabeled data using a sampling distribution proportional to $\|\Sigma_\lambda^{-1/2}K_x\|_H^2$ on unlabeled data $x$, that can be regard as contribution ratio to effective dimension. For stability of sampling, we add the mean of the contribution ratios to it. Finally, since covariance operator $\Sigma$ is unknown, we replace it by empirical covariance operator $\Sigma_{N, \lambda}$ using $N$ unlabeled data. Line 1 in Algorithm \ref{alg: cred_il_rf} gives the formal description of this procedure. \par
Next, we run the standard gradient descent to minimize the empirical risk estimated by the labeled data, but each loss is weighed by the inverse labeling probability to guarantee the unbiasedness of the risk. Thus, the gradient of the bias corrected risk is used for updating the solution. Concretely, since gradient at $g$ with respect to given single observation $(x, y)$ is $(\langle K_x, g \rangle_H - y)K_x = (K_x \otimes K_x)g - yK_x$, if the sampling probability of $(x, y)$ from $N$ unlabeled data is $q$, we need to correct the bias of the sampling by multiplying a factor $1/(Nq)$ to the gradient. Then all the gradient with respect to labeled data is averaged. The formal description of this procedure is given in Line 5-6. Note that when the labeling distribution is uniform, i.e., $q = 1/N$, the algorithm matches to the standard gradient descent.\par 

\begin{remark}[Computational Tractability]
Gradient descent on RKHS can be efficiently executed even in infinite dimensional feature spaces thanks to kernel trick. However the computation of the contribution ratios to effective dimension is generally intractable due to the inapplicability of kernel trick \citep{scholkopf2002learning}. This computational problem can be avoided by introducing random features technique. For the details, see Section \ref{sec: random_features}.
\end{remark}

\begin{algorithm}[t]
\label{alg: cred_il}
\caption{CRED-GD($\eta$, $\lambda_q$, $T$)}
\begin{algorithmic}[1]
\STATE Set $q_j = \frac{\left\|\Sigma_{N, \lambda_q}^{-\frac{1}{2}}K_{x_j}\right\|_H^2+\frac{1}{N}\sum_{i=1}^N \left\|\Sigma_{N, \lambda_q}^{-\frac{1}{2}}K_{x_j}\right\|_H^2}{2\sum_{j=1}^N\left\|\Sigma_{N, \lambda_q}^{-\frac{1}{2}}K_{x_j}\right\|_H^2}$ for $j \in [N]$.
\STATE Sample $\{x_{j(i)}\}_{i=1}^n$ independently according to $q$ and get their labels $\{y_{j(i)}\}_{i=1}^n$.
\STATE Set $g_0 = 0$.
\FOR {$t=1$ to $T$}
\STATE $A = \frac{1}{n}\sum_{i=1}^n \frac{1}{Nq_{j(i)}}(K_{x_{j(i)}} \otimes K_{x_{j(i)}})$, $b=\frac{1}{n}\sum_{i=1}^n \frac{1}{Nq_{j(i)}} y_{x_{j(i)}}K_{x_{j(i)}}$.
\STATE $g_t = g_{t-1} - \eta\left(A g_{t-1} - b\right)$.
\ENDFOR
\RETURN $g_T$. 
\end{algorithmic}
\end{algorithm}

\section{Generalization Error Analysis}\label{sec: generalization} 
Here, we give the main theoretical results of CRED-GD (Algorithm \ref{alg: cred_il}). The proofs are found in Section \ref{app_sec: main_results} of the supplementary material. We use $\widetilde{O}$ and $\widetilde \Omega$ notation to hide extra $\mathrm{poly}(\mathrm{log}(n), \mathrm{log}(\delta^{-1}))$ factors for simplicity, where $\delta$ is a confidence parameter for high probability bounds. \par
Our analysis starts from bias-variance decomposition $\left\|Sg_t - f_*\right\|_{L^2(\rho_{\mathcal X})}^2 \leq 2 \left\|Sf_t - f_*\right\|_{L^2(\rho_{\mathcal X})}^2 + 2\left\|S(g_t - f_t)\right\|_{L^2(\rho_{\mathcal X})}^2$,
where $\{f_t\}_{t=1}^\infty$ is the ideal GD path on excess risk, i.e., $f_t = f_{t-1} - \eta(\Sigma f_{t-1} - S^*f_*) = f_{t-1} - \eta (\mathbb{E}_x[K_x\otimes K_x] - \mathbb{E}_{x, y}[yK_x])$ with $f_0 = 0$. The first term is called as bias and the second term is called as variance. The bias can be bounded by the following Proposition:
\begin{proposition}[{\bf{Bias bound}}, simplified version of Lemma \ref{lem: f_boundedness}]\label{main_lem: f_boundedness}
Suppose that Assumptions \ref{assump: kernel_boundedness} and \ref{assump: smoothness_of_true} hold.
Let $\eta = O(1/\kappa^2)$ be sufficiently small. Then, for any $t \in \mathbb{N}$,
\begin{align*}
    \|Sf_t - f_*\|_{L^2(\rho_\mathcal X)}^2 = O\left(R^2(\eta t)^{-2r}\right).
\end{align*}
\end{proposition}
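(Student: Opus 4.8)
The plan is to track the residual $r_t := S f_t - f_*$ directly in $L^2(\rho_{\mathcal X})$ and show it evolves by a simple linear recursion, thereby avoiding any inversion of $\Sigma$ or $\mathcal L$. First I would record the two intertwining identities that follow immediately from the definitions $\Sigma = S^* S$ and $\mathcal L = S S^*$: namely $S\Sigma = S S^* S = \mathcal L S$ and $S S^* = \mathcal L$. Applying $S$ to the ideal update $f_t = (I - \eta\Sigma) f_{t-1} + \eta S^* f_*$ then gives $S f_t = (I - \eta\mathcal L) S f_{t-1} + \eta\mathcal L f_*$, and subtracting $f_*$ and factoring yields the clean recursion $r_t = (I - \eta\mathcal L)\, r_{t-1}$. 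Since $f_0 = 0$ we have $r_0 = -f_*$, so unrolling gives $r_t = -(I - \eta\mathcal L)^t f_*$.

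Next I would insert the source condition $f_* = \mathcal L^r \phi$ from Assumption \ref{assump: smoothness_of_true}, so that $r_t = -(I - \eta\mathcal L)^t \mathcal L^r \phi$ and hence $\|r_t\|_{L^2(\rho_{\mathcal X})} \le \|(I - \eta\mathcal L)^t \mathcal L^r\|_{\mathrm{op}}\, \|\phi\|_{L^2(\rho_{\mathcal X})} \le R\, \|(I - \eta\mathcal L)^t \mathcal L^r\|_{\mathrm{op}}$. The operator norm reduces to a scalar supremum over the spectrum of $\mathcal L$ by the spectral theorem. This is the only place Assumption \ref{assump: kernel_boundedness} enters: it guarantees $\|\mathcal L\|_{\mathrm{op}} \le \kappa^2$ (since $\|\Sigma\|_{\mathrm{op}} = \|\mathbb{E}_x[K_x \otimes K_x]\|_{\mathrm{op}} \le \kappa^2$ and $\mathcal L$, $\Sigma$ share the same nonzero spectrum), so that the choice $\eta = O(1/\kappa^2)$ keeps $1 - \eta\lambda \in [0,1]$ for every spectral value $\lambda \in [0, \kappa^2]$. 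This is precisely what makes the factor $(1-\eta\lambda)^t$ a genuine contraction rather than a blow-up.

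Finally I would bound the scalar function $g(\lambda) = (1 - \eta\lambda)^t \lambda^r \le e^{-\eta t \lambda}\lambda^r$ using $(1-\eta\lambda)^t \le e^{-\eta\lambda t}$, maximize the right-hand side over $\lambda \ge 0$ (the optimum sits at $\lambda = r/(\eta t)$), and obtain $\sup_{\lambda \ge 0} e^{-\eta t\lambda}\lambda^r = (r/e)^r (\eta t)^{-r}$. Squaring gives $\|r_t\|_{L^2(\rho_{\mathcal X})}^2 \le (r/e)^{2r} R^2 (\eta t)^{-2r} = O(R^2 (\eta t)^{-2r})$, which is exactly the claimed bias bound.

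The computation is almost entirely routine once the residual recursion is in hand; the only conceptual step requiring care is the passage from operators on $H$ to operators on $L^2(\rho_{\mathcal X})$ through the intertwining relations, together with the justification that spectral calculus is legitimate for the bounded, nonnegative, self-adjoint operator $\mathcal L$ — in particular that $\mathcal L^r$ and the polynomial $(I-\eta\mathcal L)^t$ can be treated jointly via the functional calculus so that the operator norm factors into the single scalar supremum above. I expect this functional-analytic bookkeeping to be the main (though standard) obstacle; everything downstream is the elementary scalar optimization.
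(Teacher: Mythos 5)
Your proposal is correct and follows essentially the same route as the paper: the paper does not spell out this proof but defers it to Proposition 2 of \cite{lin2017optimal}, whose argument is exactly your spectral-filtering computation, and the paper's own Lemma \ref{lem: spectral_filters} encodes the same scalar bound $\sup_{x}(1-\eta x)^t x^u = O\left((\eta t)^{-u}\right)$ (with $u = r \in (0,1]$) that you derive by hand via $(1-\eta\lambda)^t \leq e^{-\eta t \lambda}$. Your intertwining step $S\Sigma = \mathcal{L}S$, the resulting residual recursion $r_t = (I-\eta\mathcal{L})r_{t-1}$ in $L^2(\rho_{\mathcal X})$, and the use of $\eta = O(1/\kappa^2)$ to keep the spectrum of $I - \eta\mathcal{L}$ in $[0,1]$ are all sound, so your write-up is a valid self-contained version of the argument the paper cites.
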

Lemma \ref{main_lem: f_boundedness} shows that the bias converges to $0$ as $t \to \infty$. Moreover, the convergence speed is controlled by the smoothness of the true function.  
\begin{definition}
We define $\mathcal N_\infty (\lambda) = \mathbb{E}_x\|\Sigma_\lambda^{-1/2}K_x\|_H^2$ and $\mathcal F_\infty (\lambda) = \mathrm{sup}_{x\in\rho(\mathcal X)} \|\Sigma_\lambda^{-1/2}K_x\|_H^2$.
\end{definition}
These quantities play the essential roles for characterizing the estimation performance. We can bound these quantities as follows:
\begin{lemma}\label{main_lem: F_N_bound}
Suppose that Assumption \ref{assump: kernel_boundedness} holds. For any $\lambda > 0$, $\mathcal F_\infty(\lambda) \leq \kappa^2\lambda^{-1}$.  
Additionally, under Assumption \ref{assump: eigen_decay}, for any $\lambda > 0$, $\mathcal N_\infty(\lambda) \leq \mathrm{Tr}(\Sigma^{1/\alpha})\lambda^{-1/\alpha}$.
\end{lemma}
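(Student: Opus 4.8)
The plan is to handle the two bounds separately, each reducing to the spectral decomposition of the covariance operator $\Sigma$. For $\mathcal F_\infty$, I would first use the self-adjointness of $\Sigma_\lambda^{-1/2}$ to rewrite the integrand as
\[
\|\Sigma_\lambda^{-1/2}K_x\|_H^2 = \langle \Sigma_\lambda^{-1}K_x, K_x\rangle_H .
\]
Since $\Sigma = S^*S$ is positive self-adjoint, $\Sigma_\lambda = \Sigma + \lambda I \succeq \lambda I$, so by operator monotonicity of the inverse $\Sigma_\lambda^{-1} \preceq \lambda^{-1}I$. Hence $\langle \Sigma_\lambda^{-1}K_x, K_x\rangle_H \le \lambda^{-1}\|K_x\|_H^2 \le \kappa^2\lambda^{-1}$ by Assumption \ref{assump: kernel_boundedness}, and taking the supremum over $x$ gives the first claim.

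For $\mathcal N_\infty$, the idea is to convert the expected squared norm into a trace. Writing the same inner product as the trace of a rank-one operator,
\[
\|\Sigma_\lambda^{-1/2}K_x\|_H^2 = \langle \Sigma_\lambda^{-1}K_x, K_x\rangle_H = \mathrm{Tr}\!\left(\Sigma_\lambda^{-1}(K_x\otimes K_x)\right),
\]
I would then exchange the expectation with the trace and invoke the defining identity $\mathbb{E}_x[K_x\otimes K_x] = \Sigma$ (which follows from the reproducing property $g(x)=\langle K_x,g\rangle_H$) to obtain
\[
\mathcal N_\infty(\lambda) = \mathrm{Tr}\!\left(\Sigma_\lambda^{-1}\Sigma\right).
\]
Diagonalizing $\Sigma$ with eigenvalues $\{\mu_i\}$, this becomes the scalar sum $\sum_i \mu_i/(\mu_i+\lambda)$.

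The final step is a per-eigenvalue estimate that reduces this sum to $\mathrm{Tr}(\Sigma^{1/\alpha})\lambda^{-1/\alpha} = \lambda^{-1/\alpha}\sum_i \mu_i^{1/\alpha}$. It suffices to show, for each $\mu \ge 0$, that $\mu/(\mu+\lambda) \le (\mu/\lambda)^{1/\alpha}$. Clearing denominators, this is equivalent to $\mu^{1-1/\alpha}\lambda^{1/\alpha} \le \mu+\lambda$, which follows from the weighted AM--GM (Young) inequality $a^p b^q \le pa + qb$ with $p = 1-1/\alpha$, $q=1/\alpha$, $a=\mu$, $b=\lambda$, since then $\mu^{1-1/\alpha}\lambda^{1/\alpha} \le (1-1/\alpha)\mu + (1/\alpha)\lambda \le \mu+\lambda$. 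Summing over $i$ yields $\mathcal N_\infty(\lambda)\le \lambda^{-1/\alpha}\sum_i\mu_i^{1/\alpha} = \mathrm{Tr}(\Sigma^{1/\alpha})\lambda^{-1/\alpha}$.

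I expect the only delicate point to be justifying the interchange of expectation and trace together with the identity $\mathbb{E}_x[K_x\otimes K_x]=\Sigma$; the remaining ingredients are elementary operator monotonicity and a one-line convexity inequality. Under Assumption \ref{assump: kernel_boundedness} the features are uniformly bounded, so $K_x\otimes K_x$ is trace-class with $\mathbb{E}_x\|K_x\otimes K_x\|_{\mathrm{Tr}} = \mathbb{E}_x\|K_x\|_H^2 \le \kappa^2$ and $\Sigma$ is trace-class; the exchange is then legitimate (e.g. by applying monotone convergence to the spectral expansion of $\Sigma_\lambda^{-1}$), which removes the obstacle.
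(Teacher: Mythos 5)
Your proof is correct and takes essentially the same approach as the paper's: both arguments reduce $\mathcal N_\infty(\lambda)$ to the trace identity $\mathcal N_\infty(\lambda) = \mathrm{Tr}(\Sigma_\lambda^{-1}\Sigma) = \sum_i \mu_i/(\mu_i+\lambda)$ and conclude via the per-eigenvalue bound $\mu/(\mu+\lambda) \le (\mu/\lambda)^{1/\alpha}$, and both obtain the $\mathcal F_\infty$ bound directly from $\Sigma_\lambda^{-1} \preceq \lambda^{-1}I$ together with $\|K_x\|_H \le \kappa$. The only cosmetic difference is in the one-line justification of the scalar inequality: the paper uses $t \le t^{1/\alpha}$ for $t \in [0,1]$ applied to $t = \mu_i/(\mu_i+\lambda)$, whereas you use the weighted AM--GM (Young) inequality; the two are interchangeable.
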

Since $\alpha > 1$, $\mathcal N_\infty(\lambda)$ has a much tighter bound than $\mathcal F_\infty(\lambda)$ for small $\lambda$. \par
Now, we bound the second term, that is called as variance, using th following proposition:
\begin{proposition}[{\bf{Variance bound}}, simplified version of Proposition \ref{prop: cred_il}]\label{main_prop: cred_il}
Suppose that $\eta = O(1/\kappa^2)$ be sufficiently small. Let $t \in \mathbb{N}$, $\lambda =1/(\eta t) \geq \lambda_q = \Omega((\mathrm{Tr}(\Sigma^{1/\alpha})/n)^\alpha)$, $\delta \in (0, 1)$ and $n \geq \widetilde \Omega(1+\mathrm{Tr}(\Sigma^{1/\alpha})\lambda_q^{-1/\alpha})$ and $N \geq \widetilde \Omega(1+\kappa^2\lambda_q^{-1})$. Then there exits event $A$ with $P(A) \geq 1 - \delta$ such that
\begin{align*}
    &\mathbb{E}\left[\|S(g_t - f_t)\|_{L^2(\rho_{\mathcal X})}^2 \mid A\right] \\
    =&\ \widetilde O\left( \frac{(\sigma^2+R^2\lambda^{2r})\mathcal N_\infty(\lambda_q)}{n} + \lambda^{2r} + r_N\right),
\end{align*}
where $r_N = \mathcal F_\infty(\lambda_q)(\sigma^2+R^2\lambda^{2r}+(M^2+\kappa^{4r-2}R^2+R^2\lambda^{-1+2r}/N))/(nN) \to 0$ as $N \to \infty$.
\end{proposition}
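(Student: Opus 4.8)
The plan is to track the deviation of the realized path from the ideal population path. Since $g_0=f_0=0$, unrolling the two linear recursions gives the closed spectral-filter forms $g_t = \psi_t(A)b$ and $f_t = \psi_t(\Sigma)S^*f_*$, where $\psi_t(u) = (1-(1-\eta u)^t)/u$ and $\varphi_t(u):=u\psi_t(u) = 1-(1-\eta u)^t$; throughout I would set $\lambda = 1/(\eta t)$ and bound $\mathbb{E}[\|\Sigma^{1/2}(g_t-f_t)\|_H^2\mid\mathcal E]$ on a high-probability event $\mathcal E$ (the event called $A$ in the statement; I rename it to avoid clashing with the operator $A$). To obtain the sharp residual factor $\sigma^2+R^2\lambda^{2r}$ rather than a crude $M^2$, the key idea is to \emph{center} at a smooth proxy $f_\lambda\in H$ built from the source condition $f_* = \mathcal L^r\phi$ with $\|f_*-Sf_\lambda\|_{L^2(\rho_{\mathcal X})}^2 = O(R^2\lambda^{2r})$, and to split $g_t-f_t = \psi_t(A)(b-Af_\lambda) + (\varphi_t(A)f_\lambda - f_t)$. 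The first term is the genuine \emph{variance}, an importance-weighted empirical residual at the fixed vector $f_\lambda$ (crucially independent of the iteration index, so no martingale/recursion coupling is needed); the second is an \emph{operator-perturbation} term that is deterministic given $Z_N$.

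Next I would separate the two randomness sources by conditioning on the full sample $Z_N$, writing $\Sigma_N = \mathbb{E}[A\mid Z_N] = \frac1N\sum_j K_{x_j}\otimes K_{x_j}$ and $b_N = \mathbb{E}[b\mid Z_N] = \frac1N\sum_j y_j K_{x_j}$ for the unbiased targets of the sampler. I define $\mathcal E$ as the intersection of (i) the finite-$N$ concentration $\|\Sigma_{\lambda_q}^{-1/2}(\Sigma_N-\Sigma)\Sigma_{\lambda_q}^{-1/2}\|\le\frac14$, which by operator Bernstein holds with probability $1-\delta$ once $N\ge\widetilde\Omega(\mathcal F_\infty(\lambda_q)) = \widetilde\Omega(\kappa^2\lambda_q^{-1})$ and which pins the empirical effective dimension $\hat{\mathcal N}:=\mathrm{Tr}(\Sigma_{N,\lambda_q}^{-1}\Sigma_N) = \frac1N\sum_j\|\Sigma_{N,\lambda_q}^{-1/2}K_{x_j}\|_H^2$ to $\Theta(\mathcal N_\infty(\lambda_q))$; and (ii), conditionally on $Z_N$, the sampling concentration $\|A_{\lambda_q}^{-1/2}(A-\Sigma_N)A_{\lambda_q}^{-1/2}\|\le\frac14$, which holds with probability $1-\delta$ once $n\ge\widetilde\Omega(\mathcal N_\infty(\lambda_q))$. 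Here the importance design is essential: the per-draw operator $\frac{1}{Nq_j}K_{x_j}\otimes K_{x_j}$ has $\Sigma_{\lambda_q}$-relative norm $\frac{1}{Nq_j}\|\Sigma_{\lambda_q}^{-1/2}K_{x_j}\|_H^2\lesssim\hat{\mathcal N}$ (not $\mathcal F_\infty(\lambda_q)$), because $q_j\propto\|\Sigma_{N,\lambda_q}^{-1/2}K_{x_j}\|_H^2+\hat{\mathcal N}$ and the additive stabilizer forces $\frac{1}{Nq_j}\le 2$. On $\mathcal E$ the operators $\Sigma_\lambda,A_\lambda,\Sigma_{N,\lambda}$ are mutually equivalent for every $\lambda\ge\lambda_q$, so all change-of-metric factors $\|\Sigma_\lambda^{1/2}A_\lambda^{-1/2}\|$, $\|A_\lambda^{1/2}\Sigma_\lambda^{-1/2}\|$ are $O(1)$.

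For the variance term I would use the factorization $\Sigma^{1/2}\psi_t(A)\Sigma_\lambda^{1/2} = (\Sigma^{1/2}A_\lambda^{-1/2})(A_\lambda^{1/2}\psi_t(A)A_\lambda^{1/2})(A_\lambda^{-1/2}\Sigma_\lambda^{1/2})$: the outer factors are $O(1)$ on $\mathcal E$ while the middle factor is the scalar functional-calculus bound $\sup_{u\ge0}(u+\lambda)\psi_t(u) = O(1)$ with $\lambda = 1/(\eta t)$. Hence on $\mathcal E$, $\|\Sigma^{1/2}\psi_t(A)(b-Af_\lambda)\|_H\le O(1)\,\|\Sigma_\lambda^{-1/2}(b-Af_\lambda)\|_H$, which replaces the random filter by a deterministic constant and decouples it from the noise. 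Since the $n$ draws are i.i.d.\ from $q$ given $Z_N$ and $\mathbb{E}[b-Af_\lambda\mid Z_N] = b_N-\Sigma_N f_\lambda$, the conditional second moment is $\frac1n$ times a single-draw moment; combining $\frac{1}{Nq_j}\|\Sigma_\lambda^{-1/2}K_{x_j}\|_H^2\le\frac{1}{Nq_j}\|\Sigma_{\lambda_q}^{-1/2}K_{x_j}\|_H^2\lesssim\hat{\mathcal N}$ (valid because $\lambda\ge\lambda_q$) with the residual control $\mathbb{E}[(y-\langle K_x,f_\lambda\rangle_H)^2]\le 2\sigma^2+2\|f_*-Sf_\lambda\|_{L^2}^2 = O(\sigma^2+R^2\lambda^{2r})$ gives $\mathbb{E}[\|\Sigma_\lambda^{-1/2}(b-Af_\lambda)\|_H^2\mid Z_N]\lesssim\frac{(\sigma^2+R^2\lambda^{2r})\hat{\mathcal N}}{n}$. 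Taking $\mathbb{E}_{Z_N}[\,\cdot\,\mathbf 1_{\mathcal E}]$ and using $\hat{\mathcal N} = \Theta(\mathcal N_\infty(\lambda_q))$ on $\mathcal E$ yields the leading $\widetilde O((\sigma^2+R^2\lambda^{2r})\mathcal N_\infty(\lambda_q)/n)$.

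Finally, the perturbation term $\varphi_t(A)f_\lambda - f_t$ I would expand as $(\varphi_t(A)-\varphi_t(\Sigma))f_\lambda + (\varphi_t(\Sigma)f_\lambda - f_t)$: the second piece is a purely deterministic filter approximation of $f_*$ by $f_\lambda$ contributing the $\lambda^{2r}$ term, while the first piece is controlled on $\mathcal E$ by telescoping $\varphi_t(A)-\varphi_t(\Sigma)$ through $\Sigma_N$ and bounding the sampling perturbation $A-\Sigma_N$ together with the finite-$N$ perturbations $\Sigma_N-\Sigma$ and $b_N-S^*f_*$; these collapse into the stated remainder $r_N = \mathcal F_\infty(\lambda_q)(\sigma^2+R^2\lambda^{2r}+\cdots)/(nN)$, which $\to0$ as $N\to\infty$. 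The main obstacle, I expect, is establishing the operator-calculus bounds for the random filters $\psi_t(A),\varphi_t(A)$ in the $\Sigma_\lambda$-weighted norms \emph{uniformly}: transferring a single concentration inequality stated at the coarse scale $\lambda_q$ into $O(1)$ filter bounds at every finer early-stopping scale $\lambda\ge\lambda_q$, while simultaneously keeping the stabilized weights $\frac{1}{Nq_j}\le 2$ small enough that the single-draw variance sits at level $\mathcal N_\infty(\lambda_q)$ and its higher moments stay controlled for the Bernstein step in (ii). Tightening this interplay so that only the claimed $\widetilde O$ terms survive, rather than spurious extra powers of the effective dimension, is the delicate part.
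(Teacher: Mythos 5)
Your skeleton is essentially the paper's. The high-probability event you build combines (i) concentration of $\Sigma_N$ around $\Sigma$ at scale $\lambda_q$, needing $N=\widetilde\Omega(\kappa^2\lambda_q^{-1})$ (this is Lemma \ref{lem: emp_exp_prod}), with (ii) concentration of the importance-weighted covariance $A=\Sigma_n^{(q)}$ around $\Sigma_N$, where your key observation --- the additive stabilizer forces $1/(Nq_j)\le 2$, hence per-draw relative norm at most $2\hat{\mathcal N}$, hence only $n=\widetilde\Omega(\mathcal N_\infty(\lambda_q))$ draws rather than $\widetilde\Omega(\mathcal F_\infty(\lambda_q))$ --- is exactly the computation inside the paper's Lemma \ref{lem: is_emp_exp_prod}. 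The change-of-metric factors, the scalar bound $\sup_u (u+\lambda)\psi_t(u)=O(1)$, and the residual-centered noise term whose single-draw second moment carries the factor $O(\sigma^2+R^2\lambda^{2r})$ all appear in the paper's proof of Proposition \ref{prop: cred_il} (via Lemmas \ref{lem: spectral_filters}, \ref{lem: is_grad_diff} and \ref{lem: excess_diff}). The only structural difference is the centering point: you center at a ridge proxy $f_\lambda$, while the paper centers at the ideal iterate $f_t$ itself, using Lemma \ref{lem: excess_diff} to show the population gradient at $f_t$ is $O(R^2\lambda^{2r})$ in the $\Sigma_\lambda^{-1/2}$-weighted norm; the two choices are interchangeable. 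Also, the ``delicate'' uniformity issue you flag at the end is a non-issue: for $\lambda\ge\lambda_q$ one has $\Sigma_{\lambda_q}\preceq\Sigma_{\lambda}$, so concentration established at scale $\lambda_q$ transfers to every coarser scale $\lambda$ for free.

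The one step that fails as stated is your claim that the piece $(\varphi_t(A)-\varphi_t(\Sigma))f_\lambda$, telescoped through $\Sigma_N$, ``collapses into the stated remainder $r_N$.'' It cannot: $A$ is built from the $n$ labeled draws, so $A-\Sigma_N$ is a sampling perturbation whose relative size $\sim(\mathcal N_\infty(\lambda_q)/n)^{1/2}$ is independent of $N$; a term driven by it does not vanish as $N\to\infty$ with $n$ fixed, whereas $r_N=O(1/(nN))\to 0$ does. What is true is that this piece can be absorbed into the $\lambda^{2r}$ term of the claimed bound, but proving that requires using the smoothness of $f_\lambda$, not treating the term as a pure operator perturbation: either telescope carefully, pairing the $1/(\eta k)$ decay of $\|A_\lambda(1-\eta A)^k\|$ against the decay of the filter $(1-\eta\Sigma)^{t-1-k}$ acting on $f_\lambda=\Sigma_\lambda^{-1}S^*f_*$, which yields $\widetilde O(R^2\lambda^{2r})$ after summation; or, more simply, avoid the telescoping altogether by writing $\varphi_t(A)f_\lambda-f_t=(f_\lambda-f_t)-r_t(A)f_\lambda$, where $\|S(f_\lambda-f_t)\|_{L^2(\rho_{\mathcal X})}^2=O(R^2\lambda^{2r})$ by the triangle inequality through $f_*$, and $\|\Sigma^{1/2}r_t(A)f_\lambda\|_H^2=O(R^2\lambda^{2r})$ on your event by change of metric together with $\sup_u(u+\lambda)^{\frac12\vee r}(1-\eta u)^t=O\bigl(\lambda^{\frac12\vee r}\bigr)$ and the norm bounds on $f_\lambda$. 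The latter is precisely how the paper bounds its own residual term $r_t(\Sigma_n^{(q)})f_t$ in (\ref{ineq: decomposed_two_errors2}), using Lemmas \ref{lem: spectral_filters} and \ref{lem: f_boundedness}. With that repair, and with the resulting contribution accounted under $\lambda^{2r}$ rather than $r_N$, your argument coincides with the paper's proof.
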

Proposition \ref{main_prop: cred_il} shows that the variance diverges to $\infty$ as $t \to \infty$ (because $\mathcal N_\infty(\lambda) = O(\lambda^{-1/\alpha}) \to \infty$ as $t \to \infty$), but is scale to $1/n$. Thus, for moderate $t$, the variance can still be small. 

\begin{remark}
Proposition \ref{main_prop: cred_il} is the main novelty of our analysis. In \citep{pillaud2018statistical}, the variance bound of the standard GD is roughly $(\sigma^2\mathcal N_\infty(\lambda)+ \lambda^{2r}\mathcal F_\infty(\lambda))/n$ in our settings. In contrast, our bound is roughly $(\sigma^2+\lambda^{2r})\mathcal N_\infty(\lambda)/n$ for $\lambda \approx  \lambda_q$ and sufficiently large $N$ (note that $\lambda^{2r}$ can be ignored because it never dominates the bias term (see Proposition \ref{main_lem: f_boundedness})). Since $\mathcal N_\infty(\lambda) \leq \mathcal F_\infty(\lambda)$ always holds, CRED-GD improves the variance bound of the standard GD when $\sigma^2$ is small. Later, we discuss the case $\mathcal N_\infty(\lambda) \ll \mathcal F_\infty(\lambda)$ (see Lemma \ref{main_lem: F_N_bound} and Section \ref{sec: sufficient_condition}).
\end{remark}

\begin{remark}
In \cite{pillaud2018statistical}, under Assumption \ref{assump: kernel_boundedness} and additional assumption $\mathrm{sup}_{\mathrm{supp}(\rho_{\mathcal X})}\|\Sigma^{\mu/2-1/2}K_x\|_H^2 = O(\kappa_\mu^2 R^{2\mu})$ for some $\kappa_\mu > 0$ and $\mu \in [0, 1]$, the authors have shown that $\mathcal F_\infty(\lambda) = O(\kappa_\mu^2R^{2\mu}\lambda^{-\mu})$ (Lemma 13 in \citep{pillaud2018statistical}), which is a better bound than ours in Lemma \ref{main_lem: F_N_bound} when $\mu < 1$. However, in the worst case $\mu = 1$ their bound matches to ours in Lemma \ref{main_lem: F_N_bound}. For an example of this case, see Section \ref{sec: sufficient_condition}.
\end{remark}

For balancing the bias and variance term, we introduce a notion of the {\it{ optimal number of iterations}}:  
\begin{definition}[Optimal number of iterations]\label{def: opt_num_iter}
Optimal number of iterations for CRED-GD $t_\eta^*$ is defined by
$t_\eta^* = \left\lceil 1/(\eta \lambda_*)\right\rceil$,
where $\lambda_*$ is defined as
\begin{align*}
    \lambda_* = \widetilde{O}\left(
    \left(\frac{\sigma^2\mathrm{Tr}(\Sigma^{\frac{1}{\alpha}})}{n}\right)^{\frac{\alpha}{2r\alpha+1}}
    + 
    \left(\frac{R^2\mathrm{Tr}(\Sigma^{\frac{1}{\alpha}})}{n}\right)^\alpha 
    +
    \lambda_N
    \right),
\end{align*}
where $\lambda_N = \{\kappa^2\sigma^2+ \kappa^2M^2/N+\kappa^{4r}R^2/N)/(nN)\}^{1/(1+2r)} + \kappa^2R^2/(nN) + \kappa R/(\sqrt{n}N) \to 0$ as $N \to \infty$.
\end{definition}

Lemma \ref{main_lem: F_N_bound} and Proposition \ref{main_prop: cred_il} with $\lambda_q = \lambda_*$ yields the following main theorem:
\begin{theorem}[Generalization Error of CRED-GD]\label{main_thm: cred_il}
Suppose that Assumptions \ref{assump: kernel_boundedness}, \ref{assump: smoothness_of_true}, \ref{assump: eigen_decay} and \ref{assump: noise} hold. Let $\eta = \Theta(1/\kappa^2)$ be sufficiently small and $\delta \in (0, 1)$. Then setting $\lambda_q = \lambda_*$, $T = \widetilde \Theta (t_\eta^*)$, there exists event $A$ with $P(A) \geq 1 - \delta$ such that CRED-GD satisfies $\mathbb{E}\left[\left\|Sg_T - f_*\right\|_{L^2(\rho_{\mathcal X})}^2 \mid A \right] 
    = \widetilde{O}(\lambda_*^{2r})$,
where $\lambda_*$ is defined in Definition \ref{def: opt_num_iter}.
\end{theorem}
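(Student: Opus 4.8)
The plan is to combine the bias--variance decomposition stated just before Proposition \ref{main_lem: f_boundedness} with the two bounds of Proposition \ref{main_lem: f_boundedness} and Proposition \ref{main_prop: cred_il}, together with the capacity estimate of Lemma \ref{main_lem: F_N_bound}, all evaluated at the single scale $\lambda = \lambda_q = \lambda_*$. Concretely, I would first write
\begin{align*}
\|Sg_T - f_*\|_{L^2(\rho_{\mathcal X})}^2 \le\ & 2\|Sf_T - f_*\|_{L^2(\rho_{\mathcal X})}^2 \\
&+ 2\|S(g_T - f_T)\|_{L^2(\rho_{\mathcal X})}^2,
\end{align*}
take the conditional expectation given the event $A$ of Proposition \ref{main_prop: cred_il}, and bound the two terms separately. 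The choice $T = \widetilde\Theta(t_\eta^*)$ with $t_\eta^* = \lceil 1/(\eta\lambda_*)\rceil$ forces the effective regularization level $\lambda := 1/(\eta T)$ to be $\widetilde\Theta(\lambda_*)$; I would calibrate the hidden constant in $\widetilde\Theta$ so that $\lambda \ge \lambda_q = \lambda_*$, which is exactly the hypothesis required to apply Proposition \ref{main_prop: cred_il}.

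The bias is immediate: Proposition \ref{main_lem: f_boundedness} gives $\|Sf_T - f_*\|_{L^2(\rho_{\mathcal X})}^2 = O(R^2(\eta T)^{-2r}) = O(R^2\lambda^{2r}) = \widetilde O(\lambda_*^{2r})$, already of the target order. The substantive step is the variance. Substituting $\lambda \approx \lambda_q = \lambda_*$ into Proposition \ref{main_prop: cred_il} and replacing $\mathcal N_\infty(\lambda_*)$ by the Lemma \ref{main_lem: F_N_bound} bound $\mathrm{Tr}(\Sigma^{1/\alpha})\lambda_*^{-1/\alpha}$ gives
\begin{align*}
&\mathbb E\!\left[\|S(g_T - f_T)\|_{L^2(\rho_{\mathcal X})}^2 \mid A\right] \\
=\ &\widetilde O\!\left(\frac{(\sigma^2 + R^2\lambda_*^{2r})\mathrm{Tr}(\Sigma^{1/\alpha})\lambda_*^{-1/\alpha}}{n} + \lambda_*^{2r} + r_N\right).
\end{align*}
It then remains to verify termwise that this is $\widetilde O(\lambda_*^{2r})$. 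The $\sigma^2$-piece is $\le \lambda_*^{2r}$ precisely when $\lambda_* \ge (\sigma^2\mathrm{Tr}(\Sigma^{1/\alpha})/n)^{\alpha/(2r\alpha+1)}$, and the $R^2\lambda_*^{2r}$-piece is $\le \lambda_*^{2r}$ precisely when $\lambda_* \ge (R^2\mathrm{Tr}(\Sigma^{1/\alpha})/n)^{\alpha}$; both hold because Definition \ref{def: opt_num_iter} sets $\lambda_*$ to dominate exactly these two quantities (plus the $\lambda_N$ term). The residual $r_N$, whose prefactor involves $\mathcal F_\infty(\lambda_*) \le \kappa^2\lambda_*^{-1}$, is absorbed by the $\lambda_N$ contribution baked into $\lambda_*$, so the whole excess risk collapses to $\widetilde O(\lambda_*^{2r})$.

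I expect the main obstacle to be bookkeeping rather than conceptual difficulty, concentrated at the boundary condition $\lambda \ge \lambda_q$ and the finite-$N$ corrections. Since $t_\eta^* = \lceil 1/(\eta\lambda_*)\rceil \ge 1/(\eta\lambda_*)$, the literal choice $T = t_\eta^*$ yields $\lambda = 1/(\eta T) \le \lambda_*$, marginally violating $\lambda \ge \lambda_q$; I would exploit the slack in $T = \widetilde\Theta(t_\eta^*)$ to take $T$ a constant factor below $1/(\eta\lambda_*)$, so that $\lambda = \Theta(\lambda_*) \ge \lambda_q$ while remaining $\widetilde\Theta(\lambda_*)$, and push the constant into the $\widetilde O$. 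I must also check that the sample-size hypotheses $n \ge \widetilde\Omega(1 + \mathrm{Tr}(\Sigma^{1/\alpha})\lambda_*^{-1/\alpha})$ and $N \ge \widetilde\Omega(1 + \kappa^2\lambda_*^{-1})$ of Proposition \ref{main_prop: cred_il} are consistent with $\lambda_q = \lambda_*$; these hold once $N$ is large enough, matching the ``sufficiently large $N$'' entry for CRED-GD in Table \ref{tab: theoretical_comparison}. Confirming that $r_N$ and $\lambda_N$ are genuinely negligible as $N \to \infty$ is the most calculation-heavy part, but it is routine once the dominant powers of $\lambda_*$ are pinned down as above.
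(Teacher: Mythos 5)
Your proposal is correct and is essentially the paper's own argument: the paper obtains Theorem \ref{main_thm: cred_il} exactly by combining the bias--variance decomposition with Proposition \ref{main_lem: f_boundedness}, Proposition \ref{main_prop: cred_il} and Lemma \ref{main_lem: F_N_bound} at $\lambda_q=\lambda_*$, $T=\widetilde\Theta(t_\eta^*)$, and your termwise verification that each piece of the variance (including $r_N$ via the $\lambda_N$ component of $\lambda_*$) is $\widetilde O(\lambda_*^{2r})$ is precisely the bookkeeping the paper leaves implicit. Your handling of the boundary issue $\lambda=1/(\eta T)\ge\lambda_q$ via the slack in $\widetilde\Theta$, and of the sample-size hypotheses as the ``sufficiently large $N$'' condition, also matches the paper's intent.
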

From Theorem \ref{main_thm: cred_il}, we obtain the following observations:
{\bf{Wider Optimality.}} When $\sigma^2 = \Theta(1)$, the generalization error of CRED-GD with  sufficiently many unlabeled data becomes the optimal rate $n^{-2r\alpha/(2r\alpha+1)}$. The same rate is also achieved by supervised GD or SGD but under restrictive condition $r > (\alpha-1)/(2\alpha)$ in our theoretical settings \citep{dieuleveut2016nonparametric,pillaud2018statistical}, which is not necessary for CRED-GD. \par
{\bf{Low Noise Acceleration.}} When $\sigma^2 \to 0$, the rate  of CRED-GD with sufficiently many unlabeled data becomes $n^{-2r\alpha}$. In contrast, supervised GD or SGD only achieves $O(n^{-2r})$ in our theoretical settings when $\sigma^2 \to 0$, and thus CRED-GD significantly improves the generalization ability of supervised methods. Semi-supervised method SSSL \citep{ji2012simple} only achieves $n^{-(\alpha-1)/2}$ when $\sigma^2=0$ and $r = 1/2$, which is worse than ours. \par
\begin{remark}[Equivalence to Kernel Ridge Regression with Importance Labeling]
Using very similar arguments of our analysis, it can be shown that analytical kernel ridge regression solution $(\Sigma_{n, \lambda_*}^{(q)})^{-1}(S_n^{(q)})^*\bm y_n$ also achieves the generalization error bound in Theorem \ref{main_thm: cred_il} (see Section \ref{app_sec: equivalence} of supplementary material). When $\lambda_*$ is extremely small, the analytical solution is computationally cheap than gradient descent and sometimes useful.
\end{remark}

\section{Sufficient Condition for $\mathcal N_\infty(\lambda) \ll \mathcal F_\infty(\lambda)$}\label{sec: sufficient_condition}
In this section, we give a sufficient condition for $\mathcal N_\infty(\lambda) \ll \mathcal F_\infty(\lambda)$ and its simple example. The proofs are found in Section \ref{app_sec: sufficient_condition} of the supplementary material.
\begin{proposition}\label{main_prop: F_lower_bound}
Let $\{(\lambda_i, \phi_i)\}_{i=1}^d$ ($d \in \mathbb{N}\cup\{\infty\}$) be the eigen-system of $\Sigma$ in $L^2(\rho_{\mathcal X})$, where $\lambda_1 \geq \lambda_2 \geq \ldots > 0$. Assume that $\lambda_i = \Theta(i^{-\alpha})$ and  $\|\phi_i\|_{L^\infty(\rho_{\mathcal X})} = \Omega(i^{p/2})$ for any $i$ for some $\alpha = 1 + \Omega(1)$ and $p \geq 1$. Moreover if $d = \infty$, we additionally assume $\|\phi_i\|_{L^\infty(\rho_{\mathcal X})}^2 = O(i^{\alpha-1-\varepsilon})$ for any $i$ for some $\varepsilon > 0$. Then Assumption \ref{assump: kernel_boundedness} is satisfied and for any $\lambda \in (0, 1)$, $\mathcal F_\infty(\lambda) = \Omega\left(\lambda^{-\frac{p}{\alpha}}\wedge d^p\right)$.
\end{proposition}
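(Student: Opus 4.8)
The plan is to reduce the entire statement to the Mercer-type spectral representation of the feature map and then exploit nonnegativity of the resulting series. Let $\{e_i\}$ denote the orthonormal eigenbasis of $\Sigma$ in $H$ with $\Sigma e_i = \lambda_i e_i$. Under the embedding $S\colon H \to L^2(\rho_{\mathcal X})$ one has $S e_i = \sqrt{\lambda_i}\,\phi_i$ (a short computation shows $\mathcal L \phi_i = \lambda_i \phi_i$ and $\|\phi_i\|_{L^2(\rho_{\mathcal X})}=1$ for $\phi_i := \lambda_i^{-1/2} S e_i$), so the $L^2$-eigenfunctions $\phi_i$ of $\mathcal L = SS^*$ coincide with $\lambda_i^{-1/2} e_i$ viewed as functions. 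The reproducing property $\langle K_x, e_i\rangle_H = e_i(x)$ then gives the pointwise expansion $K_x = \sum_i \sqrt{\lambda_i}\,\phi_i(x)\,e_i$, and since $\Sigma_\lambda^{-1/2} e_i = (\lambda_i+\lambda)^{-1/2} e_i$ I obtain the central identity
\begin{align*}
\left\|\Sigma_\lambda^{-1/2} K_x\right\|_H^2 = \sum_i \frac{\lambda_i}{\lambda_i+\lambda}\,\phi_i(x)^2 ,
\end{align*}
whose $\lambda = 0$ specialization reads $\|K_x\|_H^2 = \sum_i \lambda_i\,\phi_i(x)^2$.

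Given this, Assumption \ref{assump: kernel_boundedness} follows by bounding $\phi_i(x)^2 \le \|\phi_i\|_{L^\infty(\rho_{\mathcal X})}^2$ termwise, so that $\sup_x \|K_x\|_H^2 \le \sum_i \lambda_i \|\phi_i\|_{L^\infty(\rho_{\mathcal X})}^2$. Inserting $\lambda_i = \Theta(i^{-\alpha})$ together with, in the case $d=\infty$, the extra hypothesis $\|\phi_i\|_{L^\infty(\rho_{\mathcal X})}^2 = O(i^{\alpha-1-\varepsilon})$ collapses the series to $\sum_i O(i^{-1-\varepsilon}) < \infty$ (for finite $d$ the sum is trivially finite), and this finite value serves as $\kappa^2$.

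For the lower bound on $\mathcal F_\infty(\lambda) = \sup_x \sum_i \frac{\lambda_i}{\lambda_i+\lambda}\phi_i(x)^2$ I would discard all but a single, well-chosen index. Because every summand is nonnegative, for each $k \le d$
\begin{align*}
\mathcal F_\infty(\lambda) \ge \frac{\lambda_k}{\lambda_k + \lambda}\,\|\phi_k\|_{L^\infty(\rho_{\mathcal X})}^2 ,
\end{align*}
where I pass from the essential supremum $\|\phi_k\|_{L^\infty(\rho_{\mathcal X})}$ to points of $\mathrm{supp}(\rho_{\mathcal X})$ approaching it (a routine step, since super-level sets of $\phi_k^2$ carry positive $\rho_{\mathcal X}$-mass). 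I would then take $k^\star = \min\{\lceil \lambda^{-1/\alpha}\rceil,\, d\}$, which is admissible because $\lambda \in (0,1)$ forces $\lambda^{-1/\alpha} > 1$. At this choice $\lambda_{k^\star} = \Theta\big((k^\star)^{-\alpha}\big) = \Theta(\lambda \vee d^{-\alpha}) = \Omega(\lambda)$, so the ratio $\lambda_{k^\star}/(\lambda_{k^\star}+\lambda)$ is bounded below by a constant, while $\|\phi_{k^\star}\|_{L^\infty(\rho_{\mathcal X})}^2 = \Omega\big((k^\star)^p\big) = \Omega(\lambda^{-p/\alpha}\wedge d^p)$. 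Multiplying the two factors yields $\mathcal F_\infty(\lambda) = \Omega(\lambda^{-p/\alpha}\wedge d^p)$.

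I expect the only genuine friction to lie in the rigorous justification of the pointwise Mercer expansion (selecting bounded representatives of the $\phi_i$ and controlling convergence of the series in the infinite-dimensional case) and in the measure-theoretic step relating $\|\phi_k\|_{L^\infty(\rho_{\mathcal X})}$ to the supremum over $\mathrm{supp}(\rho_{\mathcal X})$. The single-index reduction makes the otherwise delicate optimization over $k$ essentially free: it suffices to evaluate at the one threshold index $k^\star$, i.e.\ where the spectral cutoff $\lambda_{k^\star} \asymp \lambda$ keeps the ratio at $\Theta(1)$, so that the lower bound is governed purely by the growth $\|\phi_{k^\star}\|_{L^\infty(\rho_{\mathcal X})}^2 \gtrsim (k^\star)^p$.
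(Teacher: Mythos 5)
Your proposal is correct and follows essentially the same route as the paper's proof: the same spectral expansion $\|\Sigma_\lambda^{-1/2}K_x\|_H^2 = \sum_i \frac{\lambda_i}{\lambda_i+\lambda}\phi_i(x)^2$ (which the paper obtains via Mercer's theorem), the same termwise bound $\sum_i \lambda_i\|\phi_i\|_{L^\infty(\rho_{\mathcal X})}^2 = \sum_i O(i^{-1-\varepsilon}) < \infty$ for Assumption \ref{assump: kernel_boundedness}, and the same lower bound by retaining only the threshold index $k^\star \asymp \lambda^{-1/\alpha}\wedge d$, where the ratio $\lambda_{k^\star}/(\lambda_{k^\star}+\lambda)$ is $\Theta(1)$ and $\|\phi_{k^\star}\|_{L^\infty(\rho_{\mathcal X})}^2 = \Omega\left(\lambda^{-p/\alpha}\wedge d^p\right)$. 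Your explicit handling of the essential-supremum-versus-support subtlety is a minor refinement the paper leaves implicit, but it does not change the argument.
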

\begin{example}
Let $\mathcal X = [-1, 1]^d$ and $\rho_{\mathcal X} = TN(0, \sigma_1^2)\otimes\cdots\otimes TN(x_d, \sigma_d^2)$, that is the product measure of truncated normal distributions with mean $0$ and scale parameter $\sigma_i^2$, i.e., independent normal distributions with mean $0$ and variance $\sigma_i^2$ conditioned on $[-1, 1]$. Let $\sigma_1^2 \geq \cdots \geq \sigma_d^2$. 
We denote $\widetilde \sigma_i$ as the variance of $TN(0, \sigma_i^2)$ for $i \in [d]$. Note that for sufficiently small $\sigma_1^2 = \Theta(1)$, we have $\widetilde \sigma_i^2 = \Theta(\sigma_i^2)$ for any $i \in [d]$. Then we particularly consider linear kernel $K$ and thus $H = \mathcal X$. Since the covariance matrix is $\Sigma = \mathrm{diag}(\widetilde \sigma_1^2, \ldots, \widetilde \sigma_d^2)$, the eigen-system of $\Sigma$ in $L^2([-1, 1]^d)$ is $\{(\widetilde \sigma_i^2, e_i/\sqrt{\widetilde \sigma_i^2})\}_{i=1}^d$, where $e_i(x) = x_i$ for $x \in [-1, 1]^d$. Suppose that the polynomial decay of $\{\sigma_i^2\}_{i=1}^d$ holds: $\sigma_i^2 = \Theta(\widetilde \sigma_i^2) = \Theta(i^{-\alpha})$. Then from Lemma \ref{main_lem: F_N_bound}, $\mathcal N_\infty(\lambda) = O(\mathrm{log}(d)\lambda^{-1/\alpha}\wedge d)$.
On the other hand, from Proposition \ref{main_prop: F_lower_bound} with $p \leftarrow \alpha$, we have $\mathcal F_\infty(\lambda) = \Omega(\lambda^{-1}\wedge d^{\alpha})$.
\end{example}

\section{Extension to Random Features Settings}\label{sec: random_features}
In this section, we discuss the application of {\it{random features}} technique to Algorithm \ref{alg: cred_il} for computational tractability. Then we theoretically analyse the generalization error of the algorithm. The proofs are given in Section \ref{app_sec: random_features} of the supplementary material. \par
Suppose that kernel $K$ has an integral representation $K(x, x') = \mathbb{E}_{\omega \sim \pi}[\psi(x, \omega)\psi(x', \omega
)]$ for $x, x' \in \mathcal X$ for some $\psi$. Random features $\phi_{m, x}\in \mathbb{R}^m$ is defined by $m^{-1/2}(\psi(x, \omega_1), \ldots, \psi(x, \omega_m))$, where $\omega_1, \ldots, \omega_m \sim \pi$ independently, is used for an approximation of $K(x, x')$ by $\langle \phi_{m, x}, \phi_{m, x'}\rangle$. Here, the number of random features $m \in \mathbb{N}$ is a user-defined parameter and characterizes the goodness of the approximation. More details and concrete examples of random features are found in \cite{rudi2017generalization}.
{\bf{Algorithm. }}The random features version of CRED-GD is illustrated in Algorithm \ref{alg: cred_il_rf}. The difference from Algorithm \ref{alg: cred_il} is only the replacement of $K_x$ to random features $\phi_{M,x}$. Note that we can properly compute important labeling distribution $q$ using standard SVD solvers thanks to the finite dimensionality of the random features.  \par

\begin{algorithm}[t]
\label{alg: cred_il_rf}
\caption{RF-CRED-GD($\eta$, $\lambda_q$, $m$, $T$)}
\begin{algorithmic}[1]
\STATE Sample $\omega_1, \ldots, \omega_m \sim \pi$ independently.
\STATE Set $\phi_m(x_j) = m^{-\frac{1}{2}}(\phi(x_j, \omega_1), \ldots, \phi(x_j, \omega_m))$ for $j \in [N]$.
\STATE Set $q_j = \frac{\left\|\hat \Sigma_{N, \lambda_q}^{-\frac{1}{2}}\phi_{m, x_j}\right\|_2^2+\frac{1}{N}\sum_{i=1}^N \left\|\hat \Sigma_{N, \lambda_q}^{-\frac{1}{2}}\phi_{m, x_j}\right\|_2^2}{2\sum_{j=1}^N\left\|\hat \Sigma_{N, \lambda_q}^{-\frac{1}{2}}\phi_{m, x_j}\right\|_2^2}$ for $j \in [N]$, where $\hat \Sigma_{N, \lambda_q} = \frac{1}{N}\sum_{j=1}^N \phi_{m, x_j} \phi_{m, x_j}^\top + \lambda_q I$.
\STATE Sample $\{x_{j(i)}\}_{i=1}^n$ independently according to $q$ and get their labels $\{y_{j(i)}\}_{i=1}^n$.
\STATE Set $\hat g_0 = 0$.
\FOR {$t=1$ to $T$}
\STATE $A = \frac{1}{n}\sum_{i=1}^n \frac{1}{Nq_{j(i)}}(\phi_{m, x_{j(i)}} \phi_{m, x_{j(i)}}^\top)$, $b = \frac{1}{n}\sum_{i=1}^n \frac{1}{Nq_{j(i)}} y_{x_{j(i)}}\phi_{m, x_{j(i)}}$.
\STATE $\hat g_t = \hat g_{t-1} - \eta\left(A\hat g_{t-1} - b \right)$.
\ENDFOR
\RETURN $\hat g_T$. 
\end{algorithmic}
\end{algorithm}

We need the following additional assumption abound the boundedness of the random features for theoretical analysis:
\begin{assumption}\label{assump: rf_boundedness}
$\mathrm{sup}_{x \in \mathrm{supp}(\rho_{\mathcal X}), \omega\in\mathrm{supp}(\pi)}|\psi(x, \omega)| \leq \kappa$ for some $\kappa > 0$. \par
\end{assumption}
For example, random features of Gaussian kernel satisfies this assumption \citep{rudi2017generalization}. \par
We define $\hat S: \mathbb{R}^m \to L^2(\rho_{\mathcal X})$ by $(\hat S f)(x) = \langle \phi_{m, x}, f\rangle$ and $\hat S^*$ by the adjoint of $\hat S$. Then we denote $\hat \Sigma = \hat S^* \hat S$ and $\hat L = \hat S \hat S^*$. \par
{\bf{Generalization Error Analysis.}} We consider  generalization error $\|\hat S \hat g_{t} - f_*\|_{L^2(\rho_{\mathcal X})}^2$. We decompose the generalization error to bias and variance $
    \|\hat S \hat g_t - f_*\|_{L^2(\rho_{\mathcal X})}^2 \leq  2\|\hat S \hat f_t - f_*\|_{L^2(\rho_{\mathcal X})}^2 
    + 2\|\hat S \hat g_t - \hat S\hat f_t\|_{L^2(\rho_{\mathcal X})}^2$,
where where $\{\hat f_t\}_{t=1}^\infty$ is the ideal path of GD with RF on excess risk, i.e., $\hat f_t = \hat f_{t-1} - \eta(\hat \Sigma f_{t-1} - \hat S^*f_*) = \hat f_{t-1} - \eta (\mathbb{E}_x[\phi_{m, x}\phi_{m, x}^\top] - \mathbb{E}_{x, y}[y\phi_{m, x}])$ with $\hat f_0 = 0$. The bias term can be bounded similar to Proposition \ref{main_lem: f_boundedness}:
\begin{proposition}[Bias bound for RF setting, simplified version of Lemma \ref{lem: rf_f_boundedness}]\label{main_lem: rf_f_boundedness}
Suppose that Assumptions \ref{assump: smoothness_of_true} and \ref{assump: rf_boundedness} hold.
Let $\eta = O(1/\kappa^2)$ be sufficiently small and $t \in \mathbb{N}$ such that $m = \widetilde \Omega(1+\kappa^2 \eta t)$. Then for any $\delta > 0$, with probability at least $1 - \delta$,   
\begin{align*}
    \left\|\hat S \hat f_t - f_*\right\|_{L^2(\rho_\mathcal X)}^2 = O\left(R^2(\eta t)^{-2r}\right).
\end{align*}
\end{proposition}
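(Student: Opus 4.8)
The plan is to transplant the exact-space bias computation underlying Proposition~\ref{main_lem: f_boundedness} to the random-feature operators and then pay only for the discrepancy between $\hat L$ and $\mathcal L$. First I would observe that $\hat f_t$ is ordinary gradient descent on the quadratic $g\mapsto\tfrac12\langle g,\hat\Sigma g\rangle-\langle g,\hat S^* f_*\rangle$ started at $0$, so it admits the closed form $\hat f_t=g_t(\hat\Sigma)\hat S^* f_*$ with the polynomial $g_t(u)=\sum_{k=0}^{t-1}\eta(1-\eta u)^k$. The intertwining identities $\hat S\hat\Sigma^{k}=\hat L^{k}\hat S$ (immediate from $\hat\Sigma=\hat S^*\hat S$ and $\hat L=\hat S\hat S^*$) give $\hat S g_t(\hat\Sigma)\hat S^*=g_t(\hat L)\hat L$, and since $g_t(u)u=1-(1-\eta u)^t$ this yields the clean identity
\[\hat S\hat f_t-f_*=-(I-\eta\hat L)^t f_*=-(I-\eta\hat L)^t\mathcal L^r\phi,\]
where the last equality is Assumption~\ref{assump: smoothness_of_true}. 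Because $\eta=O(1/\kappa^2)$ and $\|\hat L\|\le\kappa^2$ under Assumption~\ref{assump: rf_boundedness}, the spectrum of $\eta\hat L$ sits in $[0,1]$, so $(I-\eta\hat L)^t$ is a contraction and the functional-calculus manipulations below are legitimate.

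The essential new feature, absent in Proposition~\ref{main_lem: f_boundedness}, is that $(I-\eta\hat L)^t$ is a function of $\hat L$ while the source condition lives on $\mathcal L$. To separate the two I would fix the scale $\lambda=1/(\eta t)$ and factor
\[(I-\eta\hat L)^t\mathcal L^r=\big[(I-\eta\hat L)^t\hat L_\lambda^{r}\big]\big[\hat L_\lambda^{-r}\mathcal L_\lambda^{r}\big]\big[\mathcal L_\lambda^{-r}\mathcal L^r\big].\]
The first bracket is purely spectral in $\hat L$: using $(1-\eta u)^t\le e^{-\eta t u}$ on the spectrum $[0,\kappa^2]$ one gets $\|(I-\eta\hat L)^t\hat L_\lambda^{r}\|\le\sup_{u\in[0,\kappa^2]}e^{-\eta t u}(u+\lambda)^r=O(\lambda^r)=O((\eta t)^{-r})$. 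The third bracket satisfies $\|\mathcal L_\lambda^{-r}\mathcal L^r\|\le1$ for $r\in(0,1]$. Everything therefore reduces to bounding the middle comparison factor $\|\hat L_\lambda^{-r}\mathcal L_\lambda^{r}\|$ by a constant, which I would in turn reduce to controlling the single scalar $\theta:=\|\hat L_\lambda^{-1/2}\mathcal L_\lambda^{1/2}\|$: operator monotonicity of $u\mapsto u^{2r}$ gives $\|\hat L_\lambda^{-r}\mathcal L_\lambda^{r}\|\le\theta^{2r}$ directly when $r\le\tfrac12$, while for $r\in(\tfrac12,1]$ I would write $\hat L_\lambda=\mathcal L_\lambda^{1/2}(I+\Delta)\mathcal L_\lambda^{1/2}$ with $\Delta=\mathcal L_\lambda^{-1/2}(\hat L-\mathcal L)\mathcal L_\lambda^{-1/2}$ and transfer the remaining power through this structured multiplicative perturbation. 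Carrying out this transfer uniformly over the full range $r\in(0,1]$ is the one genuinely delicate algebraic step.

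The remaining—and principal—obstacle is the probabilistic statement that $\theta=O(1)$, equivalently $\|\mathcal L_\lambda^{-1/2}(\mathcal L-\hat L)\mathcal L_\lambda^{-1/2}\|\le\tfrac12$, holds with probability at least $1-\delta$ for the stated feature budget (note that then $\|\Delta\|\le\tfrac12$, which is exactly what the previous paragraph needs). Here I would exploit that $\hat L$ and $\mathcal L$ are the integral operators of the approximate kernel $\langle\phi_{m,x},\phi_{m,x'}\rangle$ and of $K$, and that the former is an unbiased, bounded estimate of the latter under Assumption~\ref{assump: rf_boundedness}; the random-feature concentration of \citep{rudi2017generalization} then shows that $m=\widetilde\Omega(1+\kappa^2\eta t)$ features suffice, at scale $\lambda=1/(\eta t)$, to make this operator deviation at most $\tfrac12$ with the required confidence. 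The quantitative accounting—that the worst-case effective dimension $\mathcal N_\infty(\lambda)\le\kappa^2/\lambda=\kappa^2\eta t$ from Lemma~\ref{main_lem: F_N_bound} is precisely what sets the number of features needed—is what matches the hypothesis $m=\widetilde\Omega(1+\kappa^2\eta t)$ to the conclusion. On this event the three factors multiply to $O(\lambda^r)\cdot O(1)\cdot 1\cdot\|\phi\|\le O(R(\eta t)^{-r})$, and squaring gives the claimed $\|\hat S\hat f_t-f_*\|_{L^2(\rho_{\mathcal X})}^2=O(R^2(\eta t)^{-2r})$.
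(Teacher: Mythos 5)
Your proposal is correct and follows essentially the same route as the paper's proof of Lemma~\ref{lem: rf_f_boundedness}: the same identity $\hat S\hat f_t - f_* = -(I-\eta\hat{\mathcal L})^t\mathcal L^r\phi$, the same insertion of $\hat{\mathcal L}_{\lambda'}^{r}\hat{\mathcal L}_{\lambda'}^{-r}$ at scale $\lambda' = 1/(\eta t)$, the same spectral-filter bound $O(\lambda'^r)$ on $\|(I-\eta\hat{\mathcal L})^t\hat{\mathcal L}_{\lambda'}^{r}\|$, and the same concentration argument (via \citealp{rudi2017generalization}) showing $\|\hat{\mathcal L}_{\lambda'}^{-r}\mathcal L^r\| = O(1)$ once $m = \widetilde\Omega(1+\kappa^2\lambda'^{-1}) = \widetilde\Omega(1+\kappa^2\eta t)$. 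If anything, you are more explicit than the paper about the one delicate point---transferring the operator-norm comparison from the power $1/2$ to powers $r \in (1/2,1]$---which the paper dispatches with a brief ``similar to Lemma~\ref{lem: emp_exp_prod} with $s=0$.''
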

\begin{remark}
Compared to Lemma \ref{main_lem: f_boundedness}, additional condition $m = \widetilde \Omega(1+\kappa^2 \eta t)$ is assumed. This implies that to make bias small, appropriately large number of random features $m$ is required.
\end{remark}
The variance conditioned on random features $\{\omega_k\}_{k=1}^m$ can be bounded in a perfectly similar manner to the proof of Proposition \ref{main_prop: cred_il} with replacing $\mathcal N_\infty(\lambda_q)$ and $\mathcal F_\infty(\lambda)$ by random features approximations $\hat {\mathcal N}_\infty(\lambda_q)$ and  $\hat {\mathcal F}_\infty(\lambda)$ respectively. $\hat {\mathcal F}_\infty(\lambda)$ has a trivial bound $O(\lambda^{-1})$. The key lemma for bounding $\hat {\mathcal N}_\infty(\lambda_q)$ is the following:
\begin{lemma}[Proposition 10 in \cite{rudi2017generalization}] \label{main_lem: rf_N_diff_bound} Suppose that Assumption \ref{assump: rf_boundedness} holds.
We denote 
$\hat{\mathcal N}_\infty(\lambda) = \mathbb{E}_x\|\hat \Sigma_\lambda^{-1/2} \phi_{m, x}\|_2^2$ for $\lambda > 0$.
For any $\delta \in (0, 1)$ and sufficiently small $\lambda = O(1)$, if $m = \widetilde \Omega (1 + \kappa^2\lambda^{-1})$, with probability at least $1-\delta$ it holds that $\hat{\mathcal N}_\infty(\lambda) \leq 1.55\mathcal N_\infty(\lambda)$.
\end{lemma}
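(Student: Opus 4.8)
The plan is to reduce the claim to an operator-concentration statement comparing the random-feature integral operator $\hat L = \hat S\hat S^*$ with the true operator $\mathcal L = SS^*$, both acting on $L^2(\rho_{\mathcal X})$. First I would record the trace reformulations. Since $\hat\Sigma = \hat S^*\hat S$ and $\hat L = \hat S\hat S^*$ share the same nonzero spectrum, the reproducing computation $\mathbb{E}_x\|\hat\Sigma_\lambda^{-1/2}\phi_{m,x}\|_2^2 = \mathbb{E}_x\,\mathrm{Tr}(\hat\Sigma_\lambda^{-1}\phi_{m,x}\phi_{m,x}^\top)$ together with cyclicity of the trace gives
\[
\hat{\mathcal N}_\infty(\lambda) = \mathrm{Tr}(\hat\Sigma_\lambda^{-1}\hat\Sigma) = \mathrm{Tr}(\hat L_\lambda^{-1}\hat L), \qquad \mathcal N_\infty(\lambda) = \mathrm{Tr}(\mathcal L_\lambda^{-1}\mathcal L).
\]
Writing $\zeta_\omega := \psi(\cdot,\omega)\in L^2(\rho_{\mathcal X})$, the integral representation of $K$ yields $\mathcal L = \mathbb{E}_{\omega\sim\pi}[\zeta_\omega\otimes\zeta_\omega]$, while $\hat L = \frac1m\sum_{k=1}^m\zeta_{\omega_k}\otimes\zeta_{\omega_k}$ is exactly its empirical average over the $m$ sampled features. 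This is the key structural observation: the lemma is a statement about the effective dimension of an empirical mean of i.i.d.\ rank-one operators.

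Second, I would establish a relative spectral bound between $\hat L_\lambda$ and $\mathcal L_\lambda$. Set $X_k := \mathcal L_\lambda^{-1/2}(\zeta_{\omega_k}\otimes\zeta_{\omega_k})\mathcal L_\lambda^{-1/2}$, so that $\bar X := \frac1m\sum_k X_k = \mathcal L_\lambda^{-1/2}\hat L\mathcal L_\lambda^{-1/2}$ and $\mathbb{E}[\bar X] = P := \mathcal L_\lambda^{-1/2}\mathcal L\mathcal L_\lambda^{-1/2}$. Assumption \ref{assump: rf_boundedness} gives $\|\zeta_\omega\|_{L^2(\rho_{\mathcal X})}^2 = \mathbb{E}_x\psi(x,\omega)^2\le\kappa^2$, whence each $X_k$ is PSD with $\|X_k\| = \|\mathcal L_\lambda^{-1/2}\zeta_{\omega_k}\|^2\le\kappa^2\lambda^{-1}$, and $X_k^2\preceq\kappa^2\lambda^{-1}X_k$ controls the variance. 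An intrinsic-dimension operator Bernstein inequality then yields, for $m = \widetilde\Omega(1+\kappa^2\lambda^{-1})$, that $\|\bar X - P\|\le t$ with probability at least $1-\delta/2$ for a small absolute constant $t$. Since $\lambda\mathcal L_\lambda^{-1} = I - P$, this gives $\mathcal L_\lambda^{-1/2}\hat L_\lambda\mathcal L_\lambda^{-1/2} = \bar X + (I - P)\succeq(1-t)I$, i.e.\ $\hat L_\lambda\succeq(1-t)\mathcal L_\lambda$ and hence $\hat L_\lambda^{-1}\preceq(1-t)^{-1}\mathcal L_\lambda^{-1}$.

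Third, I would assemble the trace bound. Using $\mathrm{Tr}(AB)\le\mathrm{Tr}(AB')$ for $A\succeq0$ and $0\preceq B\preceq B'$, with $A = \hat L$, $B = \hat L_\lambda^{-1}$ and $B' = (1-t)^{-1}\mathcal L_\lambda^{-1}$, we get
\[
\hat{\mathcal N}_\infty(\lambda) = \mathrm{Tr}(\hat L\hat L_\lambda^{-1})\le\frac{1}{1-t}\,\mathrm{Tr}(\hat L\mathcal L_\lambda^{-1}).
\]
The remaining factor is the scalar average $\mathrm{Tr}(\hat L\mathcal L_\lambda^{-1}) = \frac1m\sum_k\|\mathcal L_\lambda^{-1/2}\zeta_{\omega_k}\|^2$ of i.i.d.\ terms lying in $[0,\kappa^2\lambda^{-1}]$, with mean $\mathcal N_\infty(\lambda)$ and variance at most $\kappa^2\lambda^{-1}\mathcal N_\infty(\lambda)$; a scalar Bernstein inequality gives $\mathrm{Tr}(\hat L\mathcal L_\lambda^{-1})\le(1+\epsilon)\mathcal N_\infty(\lambda)$ with probability at least $1-\delta/2$ under the same sample-size requirement (here I use that $\mathcal N_\infty(\lambda)\gtrsim1$ for small $\lambda$). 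A union bound, and choosing $t,\epsilon$ so that $(1+\epsilon)/(1-t)\le1.55$ (e.g.\ $t=\epsilon=0.2$), finishes the proof. Note that the operator domination alone is insufficient for the trace, since bounding $\mathrm{Tr}(\hat L\mathcal L_\lambda^{-1})$ by domination of $\hat L$ would introduce a non-trace-class term $t\,\mathrm{Tr}(I)$; this is why the separate scalar concentration step is essential.

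The main obstacle I anticipate is the operator Bernstein step in infinite-dimensional $L^2(\rho_{\mathcal X})$: a naive matrix Bernstein bound would carry an ambient-dimension factor that is vacuous here, so one must invoke an intrinsic/effective-dimension version (Minsker's or Tropp's) and verify that the relevant effective rank is $\mathrm{Tr}(P)/\|P\|\le\mathcal N_\infty(\lambda)$, which is precisely what keeps the feature budget at $m = \widetilde\Omega(1+\kappa^2\lambda^{-1})$ with only logarithmic overhead. Everything else is bounded-difference bookkeeping enabled by Assumption \ref{assump: rf_boundedness}.
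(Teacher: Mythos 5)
There is no in-paper proof to compare against here: the paper imports this statement wholesale as Proposition~10 of \citep{rudi2017generalization}, and the appendix restates it (Lemma~\ref{lem: rf_N_diff_bound}) again without proof. Your argument is correct and self-contained, and it effectively reconstructs the mechanism of the cited original: (i) the identification $\hat{\mathcal N}_\infty(\lambda) = \mathrm{Tr}(\hat L \hat L_\lambda^{-1})$ and $\mathcal N_\infty(\lambda) = \mathrm{Tr}(\mathcal L\mathcal L_\lambda^{-1})$ via the shared nonzero spectrum of $\hat S^*\hat S$ and $\hat S\hat S^*$, with $\hat L = \frac1m\sum_k \zeta_{\omega_k}\otimes\zeta_{\omega_k}$ an empirical mean of i.i.d.\ rank-one operators whose expectation is $\mathcal L$; (ii) an intrinsic-dimension operator Bernstein inequality yielding $\hat L_\lambda \succeq (1-t)\mathcal L_\lambda$, hence $\hat L_\lambda^{-1} \preceq (1-t)^{-1}\mathcal L_\lambda^{-1}$, under $m = \widetilde\Omega(1+\kappa^2\lambda^{-1})$; and (iii) a separate scalar Bernstein step for $\mathrm{Tr}(\hat L\mathcal L_\lambda^{-1})$, whose summands have mean exactly $\mathcal N_\infty(\lambda)$ and are bounded by $\kappa^2\lambda^{-1}$. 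Your remark that operator domination alone cannot close the trace bound (it would inject a non-trace-class $t\,\mathrm{Tr}(I)$ term) is precisely why the original proof also splits into an operator-norm factor and a trace factor, so your route is a faithful, correctly reasoned reconstruction rather than a genuinely different one. Two small points of hygiene: the effective rank of the variance proxy is $\mathrm{Tr}(P)/\|P\| = \mathcal N_\infty(\lambda)/\|P\|$, which is an \emph{upper} bound of $\mathcal N_\infty(\lambda)$ (since $\|P\|\le 1$), not a lower bound as written; under the lemma's hypothesis that $\lambda$ is sufficiently small one has $\lambda \le \|\mathcal L\|$, hence $\|P\|\ge 1/2$ and $\mathrm{Tr}(P)/\|P\| \le 2\mathcal N_\infty(\lambda)$, so this only perturbs a logarithmic factor. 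That same smallness hypothesis is what legitimizes your step $\mathcal N_\infty(\lambda)\gtrsim 1$, and it is worth flagging explicitly that this is the one place where the condition ``sufficiently small $\lambda = O(1)$'' is actually used.
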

Combining the bias and variance bounds with Lemma \ref{main_lem: rf_N_diff_bound} yields the following theorem:
\begin{theorem}[Generalization error of CRED-GD with RF, simplified version of Theorem \ref{thm: rf_cred_il}]\label{main_thm: rf_cred_il}
Suppose that Assumptions \ref{assump: smoothness_of_true}, \ref{assump: eigen_decay}, \ref{assump: noise} and \ref{assump: rf_boundedness} hold. Let $\eta = \Theta(1/\kappa^2)$ be sufficiently small, $\lambda_q = \lambda_*$ and $T = \widetilde \Theta (t_\eta^*)$. For any $\delta \in (0, 1)$, if $m \geq \widetilde O(1 + \kappa^2\lambda_*^{-1})$, there exists event $A$ with $P(A) \geq 1 - \delta$ such that RF-CRED-GD has the same generalization error bounds as CRED-GD in Theorem \ref{main_thm: cred_il}.
\end{theorem}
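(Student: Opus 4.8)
The plan is to mirror the proof of Theorem~\ref{main_thm: cred_il} line by line, replacing the exact kernel objects $S,\Sigma,K_x,\mathcal N_\infty,\mathcal F_\infty$ by their random-feature counterparts $\hat S,\hat\Sigma,\phi_{m,x},\hat{\mathcal N}_\infty,\hat{\mathcal F}_\infty$, and paying the only new price through the approximation guarantee of Lemma~\ref{main_lem: rf_N_diff_bound}. Concretely, I would begin from the bias--variance decomposition $\|\hat S\hat g_t-f_*\|_{L^2(\rho_{\mathcal X})}^2\le 2\|\hat S\hat f_t-f_*\|_{L^2(\rho_{\mathcal X})}^2+2\|\hat S(\hat g_t-\hat f_t)\|_{L^2(\rho_{\mathcal X})}^2$ stated just before the theorem, and control the two terms separately at the balancing choice $t=t_\eta^*$, $\lambda_q=\lambda_*$.

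For the bias, I would invoke Proposition~\ref{main_lem: rf_f_boundedness} at $t=t_\eta^*$. Since $\eta t_\eta^*\asymp 1/\lambda_*$ by Definition~\ref{def: opt_num_iter}, this gives $\|\hat S\hat f_{t_\eta^*}-f_*\|_{L^2(\rho_{\mathcal X})}^2=O(R^2(\eta t_\eta^*)^{-2r})=O(R^2\lambda_*^{2r})$, and the side condition $m=\widetilde\Omega(1+\kappa^2\eta t_\eta^*)=\widetilde\Omega(1+\kappa^2\lambda_*^{-1})$ needed there is exactly the hypothesis $m\ge\widetilde O(1+\kappa^2\lambda_*^{-1})$ imposed in the theorem. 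This bound holds on a high-probability event over the random-feature draw $\{\omega_k\}_{k=1}^m$.

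For the variance I would first note that, conditioned on the sampled features $\{\omega_k\}_{k=1}^m$, the entire argument of Proposition~\ref{main_prop: cred_il} takes place in the finite-dimensional space $\mathbb{R}^m$ with $\hat S,\hat\Sigma,\phi_{m,x}$ playing the roles of $S,\Sigma,K_x$, so it carries over verbatim and yields the same bound with $\mathcal N_\infty(\lambda_q),\mathcal F_\infty(\lambda_q)$ replaced by $\hat{\mathcal N}_\infty(\lambda_q),\hat{\mathcal F}_\infty(\lambda_q)$. I would then use the trivial bound $\hat{\mathcal F}_\infty(\lambda_q)=O(\lambda_q^{-1})$ for the vanishing $r_N$ term (this matches the $\mathcal F_\infty$ bound of Lemma~\ref{main_lem: F_N_bound}), and Lemma~\ref{main_lem: rf_N_diff_bound} to pass to $\hat{\mathcal N}_\infty(\lambda_q)\le 1.55\,\mathcal N_\infty(\lambda_q)$ on a further high-probability feature event, again under $m=\widetilde\Omega(1+\kappa^2\lambda_q^{-1})=\widetilde\Omega(1+\kappa^2\lambda_*^{-1})$. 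Finally Lemma~\ref{main_lem: F_N_bound} gives $\mathcal N_\infty(\lambda_q)\le \mathrm{Tr}(\Sigma^{1/\alpha})\lambda_q^{-1/\alpha}$, so the conditional variance is bounded by the identical expression as in the non-RF analysis. Plugging $\lambda_q=\lambda_*$ and $\lambda=1/(\eta t_\eta^*)\asymp\lambda_*$ and invoking the definition of $\lambda_*$ to balance the two contributions reproduces $\widetilde O(\lambda_*^{2r})$ exactly as in the proof of Theorem~\ref{main_thm: cred_il}.

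The main obstacle I anticipate is purely probabilistic bookkeeping across the two independent randomness sources, the feature draw $\{\omega_k\}$ and the importance sampling of the labels. The bias bound, the $\hat{\mathcal N}_\infty$-comparison of Lemma~\ref{main_lem: rf_N_diff_bound}, and the data event $A$ from the variance argument each hold only with high probability, so I would intersect them via a union bound (rescaling $\delta$ by a constant and absorbing the loss into the $\widetilde O$ factors) to produce a single event $A$ of probability at least $1-\delta$ on which all three estimates are simultaneously valid. A secondary but necessary check is that the single requirement $m\ge\widetilde O(1+\kappa^2\lambda_*^{-1})$ really suffices for both the bias step and Lemma~\ref{main_lem: rf_N_diff_bound}; this is the case because both conditions reduce to $m=\widetilde\Omega(1+\kappa^2\lambda_*^{-1})$. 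I expect the verification that Proposition~\ref{main_prop: cred_il} transfers to $\mathbb{R}^m$ without covertly relying on properties special to the (possibly infinite-dimensional) RKHS to be routine but worth an explicit pass.
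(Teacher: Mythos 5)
Your proposal matches the paper's own proof essentially step for step: the same bias--variance decomposition, the bias controlled by Proposition~\ref{main_lem: rf_f_boundedness} under the condition $m=\widetilde\Omega(1+\kappa^2\lambda_*^{-1})$, the variance obtained by rerunning Proposition~\ref{main_prop: cred_il} conditionally on the feature draw with $\hat{\mathcal N}_\infty,\hat{\mathcal F}_\infty$ in place of $\mathcal N_\infty,\mathcal F_\infty$, the trivial bound $\hat{\mathcal F}_\infty(\lambda)=O(\lambda^{-1})$, and Lemma~\ref{main_lem: rf_N_diff_bound} to transfer $\hat{\mathcal N}_\infty(\lambda_*)\leq 1.55\,\mathcal N_\infty(\lambda_*)$ before balancing at $\lambda_q=\lambda_*$, $T=\widetilde\Theta(t_\eta^*)$. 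Your added care about intersecting the high-probability events over the two randomness sources is bookkeeping the paper leaves implicit, so the argument is correct and takes the same route.
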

Theorem \ref{main_thm: rf_cred_il} ensures that Algorithm \ref{alg: cred_il_rf} achieves still the same generalization ability as Algorithm \ref{alg: cred_il} when the number of random features $m$ is sufficiently large. 

\begin{figure*}[t]
\begin{subfigmatrix}{4}
\subfigure[LR on MNIST]{\includegraphics[width=4.2cm]{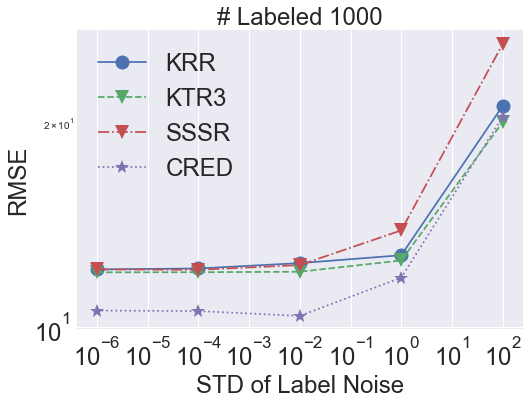}}
\subfigure[LR on Fashion ]{\includegraphics[width=4.2cm]{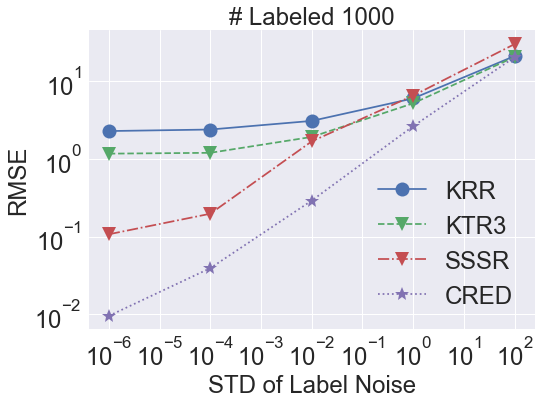}}
\subfigure[NLR on MNIST]{\includegraphics[width=4.2cm]{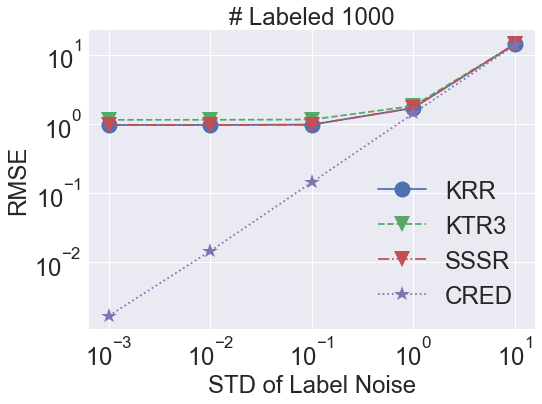}}
\subfigure[NLR on Fashion 
]{\includegraphics[width=4cm]{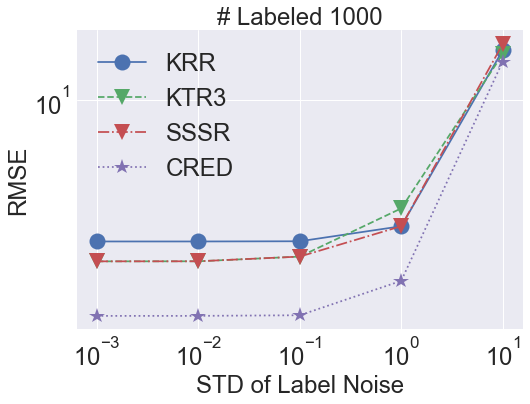}}
\end{subfigmatrix}
\begin{subfigmatrix}{4}
\subfigure[LR on MNIST]{\includegraphics[width=4.2cm]{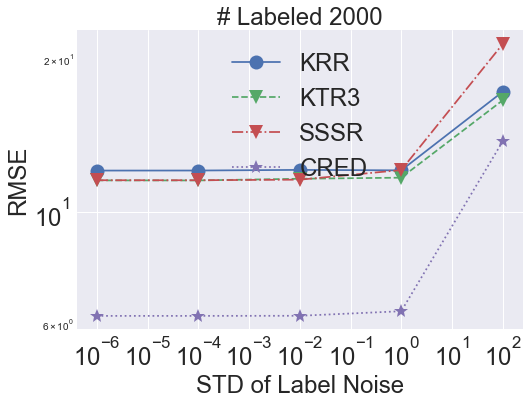}}
\subfigure[LR on Fashion ]{\includegraphics[width=4.2cm]{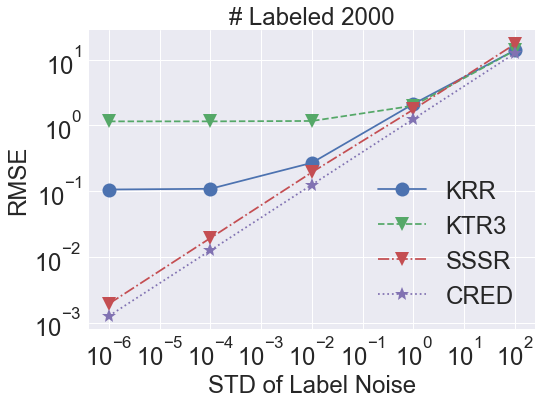}}
\subfigure[NLR on MNIST]{\includegraphics[width=4.2cm]{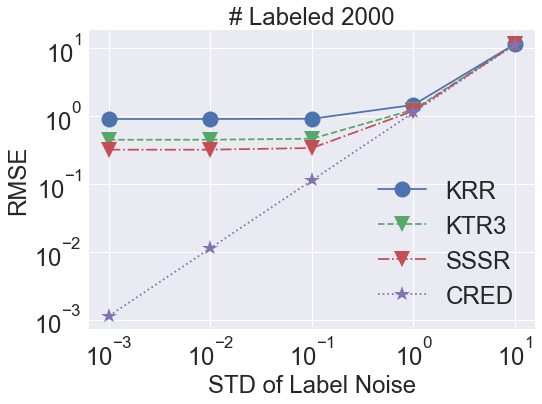}}
\subfigure[NLR on Fashion 
]{\includegraphics[width=4.2cm]{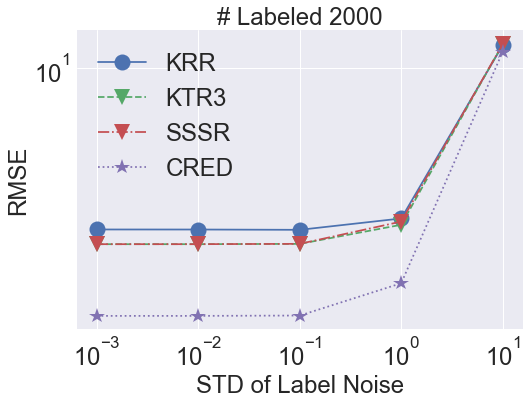}}
\end{subfigmatrix}
\begin{subfigmatrix}{4}
\subfigure[LR on MNIST]{\includegraphics[width=4.2cm]{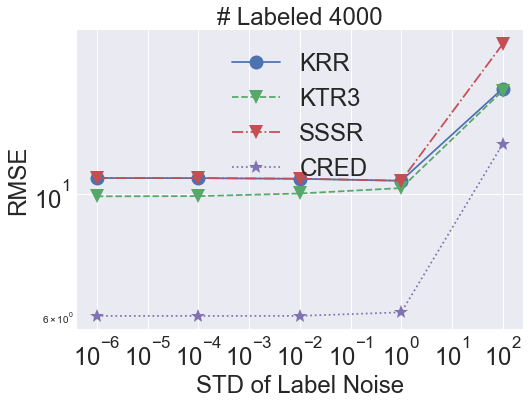}}
\subfigure[LR on Fashion ]{\includegraphics[width=4.2cm]{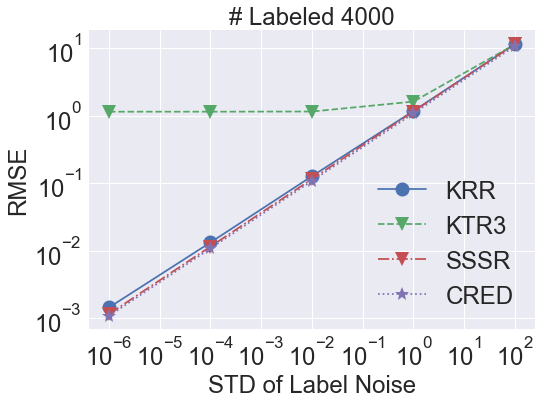}}
\subfigure[NLR on MNIST]{\includegraphics[width=4.2cm]{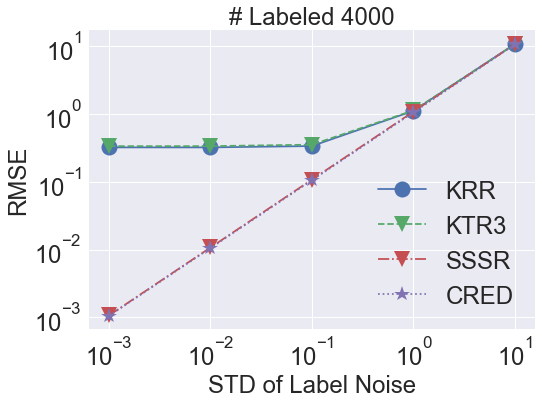}}
\subfigure[NLR on Fashion 
]{\includegraphics[width=4.2cm]{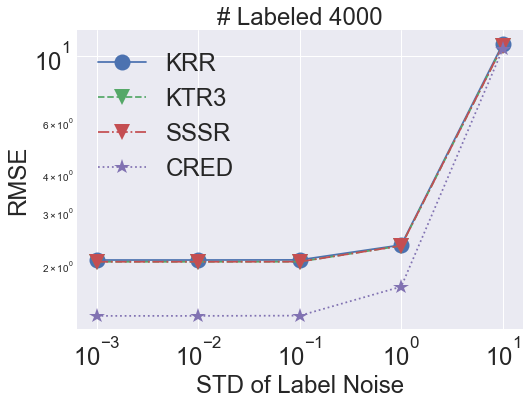}}
\end{subfigmatrix}
\caption{Comparisons of test RMSE of our method with existing methods on linear and nonlinear regression tasks. The first column depicts the results of linear regression task on MNIST. The second column does the ones of linear regression on Fashion MNIST. The third column does the ones of nonlinear regression on MNIST. The last column does the ones of nonlinear regression on Fashion MNIST. From top to bottom, the number of labeled data increases from $1000$ to $4000$.}
\label{fig: comparison}
\end{figure*}
\section{Numerical Experiments}
In this section, numerical results are provided to empirically verify our theoretical findings. \par
{\bf{Experimental Settings. }}In our experiments, the input data of public datasets MNIST and Fashion MNIST \citep{xiao2017fashion} were used. First we randomly split each dataset into train ($60,000$) and test ($10,000$) and normalized input data by dividing $255$. We conducted both linear regression (LR) and nonlinear regression (NLR) tasks. For linear tasks, we used the original inputs with bias as features. For nonlinear tasks, we used a randomly initialized three hidden layered fully connected ReLU network with width $500$ without output layer as features. Here, the random weights were from i.i.d. standard normal distributions. Then we randomly generated true linear function on the feature spaces, whose regression coefficients were defined by $\sum_i a_i e_i/\sqrt{\lambda_i} \in \mathbb{R}^{500}$, where $a_i \stackrel{\mathrm{i.i.d}}{\sim} N(0, 1)$ and $\{(e_i, \lambda_i)\}$ was the eigen-system of the covariance matrix in the correspondence feature space. Finally, we generated noised labels based on them, where the noises were from i.i.d. normal distributions with mean $0$ and variance $\sigma^2 \in \{10^{-6}, 10^{-4}, 10^{-2}, 1, 10^2\}$. 
We compared our proposed method\footnotemark with KRR (Kernel Ridge Regression), KTR$^3$ \citep{jun2019kernel} and SSSR \citep{ji2012simple}. 
\footnotetext{As we mentioned before, we used  very small synthetic label noise in some experiments and then the convergence speed of gradient descent was sometimes quite slow. Hence we decided that optimization methods were replaced with analytical methods. As we pointed out in the end of Section \ref{sec: generalization}, the same generalization error bound is guaranteed for the analytical solution.}
The hyper-parameters were fairly and reasonably determined.\footnotemark \footnotetext{CRED has hyper-parameter $\lambda_q$ and selecting best one requires additional labeling. In our experiments, we recorded the best test error by trying $\lambda_q$ in $\{10^i \mid i \in \{-12, \ldots, -3\}\}$. This potentially violates the fair comparison with the other methods because CRED implicitly uses ten patterns of labeled data. Hence we decided that the other methods were ran ten times with independent uniform labeling and then the best test error was recorded as one experimental trial. }
The train data was used as unlabeled data and the labeled data was selected from it. The number of labeled data was ranged in $\{1000, 2000, 4000\}$. We independently ran each experiment five times and recorded the median of test RMSE on each setting. \par

\setlength{\textfloatsep}{10pt}
{\bf{Results}}\ \ Figure \ref{fig: comparison} shows the comparisons of test RMSE of our proposed method with previous methods. From these results, we make the following observations:
\begin{itemize}
    \item When the label noise $\sigma^2$ was large, all the algorithms have similar performances. 
    \item When the label noise $\sigma^2$ was small, CRED significantly outperformed the other methods overall. SSSR was always comparable to or better than KRR and KTR3, but sometimes significantly worse than CRED. 
\end{itemize}
These observations can be well-explained by the theoretical results that show our proposed CRED achieves much better generalization ability than the other methods when $\sigma^2 \to 0$ as described in Table \ref{tab: theoretical_comparison}.
\section*{Conclusion and Future Work}
In this paper, we proposed a new importance labeling scheme called CRED, which employs the contribution ratio to the effective dimension of the feature space as the importance of each data point. The generalization error of GD with CRED was theoretically analysed and much better bound than previous methods was derived when label noise is small. Further, the algorithm and analysis were extended to random features settings and computational intractability of CRED was resolved. Finally, we provided numerical comparisons with existing methods. The numerical results showed empirical superiority to the other methods and verified our theoretical findings. \par
One direction of future work would be an application of our importance labeling idea to deep learning. Since the feature space of a deep neural network is updated in training time, our importance labeling scheme can be naturally extended to active learning settings. The theoretical and empirical study of the application to active learning of deep neural networks is a promising future work.

\section*{Acknowledgement}
TS was partially supported by JSPS KAKENHI (18K19793, 18H03201, and 20H00576), Japan
DigitalDesign, and JST CREST.

\bibliography{references}
\bibliographystyle{abbrvnat}
\onecolumn
\appendix
\section{Auxiliary Results}
First we introduce GD path on the excess risk:
\begin{align*}
    f_t =&\ f_{t-1} - \eta \nabla \mathcal{E}(f_{t-1}) \\
    =&\ f_{t-1} - \eta \int_\mathcal{Z} (\langle f_{t-1}, K_x\rangle_H - y)K_x d\rho_\mathcal{Z}(x, y) \\
    =&\ f_{t-1} - \eta (\Sigma \mu_{t-1} - S^*f_*)
\end{align*}
with $f_0 = 0 \in H$ for $t \in \mathbb{N}$. 

\begin{lemma}[Proposition 2 and Extension of Lemma 16 in \cite{lin2017optimal}]\label{lem: f_boundedness}
Suppose that Assumptions \ref{assump: kernel_boundedness} and \ref{assump: smoothness_of_true} hold.
Let $\eta = O(1/\kappa^2)$ be sufficiently small. Then, for any $t \in \mathbb{N}$,
\begin{align*}
    \|Sf_t - f_*\|_{L^2(\rho_\mathcal X)}^2 = O\left(R^2\left(\eta t)^{-2r}\right)\right).
\end{align*}
Moreover for any $\lambda > 0$ and $s \in [0, r]$
\begin{align*}
    \|\Sigma_\lambda^{-s}f_t\|_H^2 \leq O(R^2(\kappa^{4(r-s)-2} + (\eta t)^{1-2(r-s)})).
\end{align*}
\end{lemma}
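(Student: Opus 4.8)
The plan is to solve the linear gradient-descent recursion in closed form through the functional calculus of $\Sigma$ and $\mathcal{L}$, and then reduce both claimed bounds to elementary scalar suprema over the spectrum. First I would rewrite the update as $f_t = (I-\eta\Sigma)f_{t-1}+\eta S^* f_*$ with $f_0=0$ and unroll it into the telescoped form
\[
f_t = \eta\sum_{k=0}^{t-1}(I-\eta\Sigma)^k S^* f_* = \Sigma^{-1}\bigl(I-(I-\eta\Sigma)^t\bigr)S^*f_*,
\]
where the operator $\Sigma^{-1}(I-(I-\eta\Sigma)^t)$ is understood via functional calculus (the scalar $\sigma\mapsto(1-(1-\eta\sigma)^t)/\sigma$ extends continuously to $\sigma=0$, so no genuine inversion is needed). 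The two structural identities I rely on are the intertwining relations $S\,h(\Sigma)=h(\mathcal{L})\,S$ and $S^*\mathcal{L}^r=\Sigma^r S^*$, which follow from the shared singular-value decomposition of $S$ and $S^*$. Since $\eta=O(1/\kappa^2)$ and $\|\Sigma\|_{\mathrm{op}}=\|\mathcal{L}\|_{\mathrm{op}}\le\kappa^2$, the spectra lie in $[0,\kappa^2]\subset[0,1/\eta]$, so $(1-\eta\sigma)\in[0,1]$ throughout.

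For the first bound I would push $S$ through using the intertwining relation and $SS^*=\mathcal{L}$ to obtain the clean telescoping cancellation
\[
Sf_t-f_* = \bigl(I-(I-\eta\mathcal{L})^t\bigr)f_*-f_* = -(I-\eta\mathcal{L})^t\mathcal{L}^r\phi,
\]
invoking Assumption \ref{assump: smoothness_of_true}. Then $\|Sf_t-f_*\|_{L^2}^2\le R^2\sup_{\sigma\in[0,\kappa^2]}\bigl[(1-\eta\sigma)^t\sigma^r\bigr]^2$, and using $(1-\eta\sigma)^t\le e^{-\eta t\sigma}$ the scalar maximisation of $e^{-\eta t\sigma}\sigma^r$ (optimiser $\sigma=r/(\eta t)$) yields $(r/e)^r(\eta t)^{-r}$, which gives $O(R^2(\eta t)^{-2r})$.

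For the second bound I would substitute $S^*f_*=\Sigma^r S^*\phi$ to write $\Sigma_\lambda^{-s}f_t=g(\Sigma)S^*\phi$ with $g(\sigma)=(\sigma+\lambda)^{-s}\sigma^{r-1}(1-(1-\eta\sigma)^t)$, and compute the squared $H$-norm as $\|g(\Sigma)S^*\phi\|_H^2=\langle\phi,\mathcal{L}\,g(\mathcal{L})^2\phi\rangle\le R^2\sup_\sigma \sigma g(\sigma)^2$ by self-adjointness and the same intertwining relations. Bounding $(\sigma+\lambda)^{-2s}\le\sigma^{-2s}$ reduces everything to $\sup_{\sigma\in[0,\kappa^2]}\sigma^{2(r-s)-1}(1-(1-\eta\sigma)^t)^2$. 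I expect the main obstacle to be exactly this last scalar supremum, because the exponent $2(r-s)-1$ can be negative, so $\sigma^{2(r-s)-1}$ blows up as $\sigma\to0$; the remedy is the two-regime estimate $1-(1-\eta\sigma)^t\le\min\{1,\eta t\sigma\}$. On $\sigma\le 1/(\eta t)$ the factor $(\eta t\sigma)^2$ kills the singularity and the product is maximised at the endpoint, contributing $(\eta t)^{1-2(r-s)}$; on $1/(\eta t)<\sigma\le\kappa^2$ the bound $1-(1-\eta\sigma)^t\le1$ leaves $\sigma^{2(r-s)-1}$, whose maximum over the spectrum is either the same endpoint value or $\kappa^{4(r-s)-2}$ according to the sign of $2(r-s)-1$. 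Combining the two regimes yields $O(R^2((\eta t)^{1-2(r-s)}+\kappa^{4(r-s)-2}))$; since the resulting estimate no longer depends on $\lambda$, it holds for every $\lambda>0$, completing the proof.
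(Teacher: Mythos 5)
Your proof is correct, and it supplies something the paper itself does not: the paper's entire ``proof'' of Lemma \ref{lem: f_boundedness} is a citation --- Proposition 2 of \cite{lin2017optimal} for the first bound, plus the remark that the second bound is a straightforward extension of Lemma 16 of that reference --- so no argument is actually written out. Your derivation makes the argument explicit, using exactly the machinery that the citation (and the paper's own Lemma \ref{lem: spectral_filters}) encodes: unrolling the recursion into the spectral filter $\sigma\mapsto\sigma^{-1}(1-(1-\eta\sigma)^t)$, transporting functions of $\Sigma$ across $S$ via the intertwining relations $S\,h(\Sigma)=h(\mathcal L)\,S$ and $S^*\mathcal L^r=\Sigma^r S^*$ (legitimately justified by the shared SVD of $S$, which exists because $\Sigma$ is trace class under Assumption \ref{assump: kernel_boundedness}), and reducing both claims to scalar suprema over $[0,\kappa^2]$. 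Your first supremum, $\sup_\sigma (1-\eta\sigma)^t\sigma^r\leq (r/e)^r(\eta t)^{-r}$, is the $u=r$ case of the bound $\sup_x r_t(x)x^u\leq u^u(\eta t)^{-u}$ in Lemma \ref{lem: spectral_filters}; your two-regime estimate $1-(1-\eta\sigma)^t\leq\min\{1,\eta t\sigma\}$ then correctly produces both terms $(\eta t)^{1-2(r-s)}$ and $\kappa^{4(r-s)-2}$ of the second claim, and discarding $(\sigma+\lambda)^{-2s}\leq\sigma^{-2s}$ is precisely what makes the result uniform in $\lambda>0$, as stated. Two harmless loose ends worth recording: when $\eta t\kappa^2\leq 1$ your second regime $(1/(\eta t),\kappa^2]$ is empty, but the first-regime endpoint value is then $(\eta t\kappa^2)^2\kappa^{4(r-s)-2}\leq\kappa^{4(r-s)-2}$, so the stated bound still holds; and the identity $S^*\mathcal L^r=\Sigma^r S^*$ for fractional $r$ genuinely requires the SVD (or polynomial-approximation) argument you allude to, since it is not a purely algebraic consequence of $\Sigma=S^*S$ and $\mathcal L=SS^*$. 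In short, the paper's route buys brevity by leaning on a checked external result, while yours buys a self-contained verification --- including of the extension to $\Sigma_\lambda^{-s}f_t$, which the paper explicitly omits.
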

\begin{proof}
The second statement is a straight forward extension of Lemma 16 in \cite{lin2017optimal} and we omit it.
\end{proof}
\begin{lemma}\label{lem: effective_dimensionality}
Suppose that Assumptions \ref{assump: kernel_boundedness} and \ref{assump: eigen_decay} hold. For any $\lambda > 0$, 
\begin{align*}
    \mathcal N_\infty(\lambda) \stackrel{\mathrm{def}}{=} \mathbb{E}_x\|\Sigma_\lambda^{-1/2}K_x\|_H^2 = \mathrm{Tr}(\Sigma_\lambda^{-1}\Sigma) \leq \mathrm{Tr}(\Sigma^{\frac{1}{\alpha}})\lambda^{-\frac{1}{\alpha}}.
\end{align*}
\end{lemma}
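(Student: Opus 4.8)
The plan is to prove the two claims in the displayed equation separately: first the operator identity $\mathcal N_\infty(\lambda) = \mathrm{Tr}(\Sigma_\lambda^{-1}\Sigma)$, and then the upper bound $\mathrm{Tr}(\Sigma_\lambda^{-1}\Sigma) \le \mathrm{Tr}(\Sigma^{1/\alpha})\lambda^{-1/\alpha}$.

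For the identity, I would first recall that the covariance operator admits the representation $\Sigma = S^*S = \mathbb{E}_x[K_x \otimes K_x]$, which follows from the reproducing property: for any $f, g \in H$ one has $\langle \Sigma f, g\rangle_H = \langle Sf, Sg\rangle_{L^2(\rho_{\mathcal X})} = \mathbb{E}_x[f(x)g(x)] = \mathbb{E}_x[\langle (K_x \otimes K_x) f, g\rangle_H]$, using $(K_x\otimes K_x)f = f(x)K_x$. Then, using self-adjointness of $\Sigma_\lambda^{-1/2}$ and the cyclicity of the trace, I rewrite the integrand as $\|\Sigma_\lambda^{-1/2}K_x\|_H^2 = \langle \Sigma_\lambda^{-1}K_x, K_x\rangle_H = \mathrm{Tr}(\Sigma_\lambda^{-1}(K_x \otimes K_x))$. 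Taking expectations and exchanging the expectation with the trace yields $\mathcal N_\infty(\lambda) = \mathrm{Tr}(\Sigma_\lambda^{-1}\mathbb{E}_x[K_x \otimes K_x]) = \mathrm{Tr}(\Sigma_\lambda^{-1}\Sigma)$.

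For the inequality, I would diagonalize $\Sigma$. Assumption \ref{assump: eigen_decay} forces $\mathrm{Tr}(\Sigma^{1/\alpha}) < \infty$, so $\Sigma$ is compact, positive and self-adjoint and admits a spectral decomposition with eigenvalues $\mu_1 \ge \mu_2 \ge \cdots \ge 0$. Then $\mathrm{Tr}(\Sigma_\lambda^{-1}\Sigma) = \sum_i \mu_i/(\mu_i + \lambda)$, and it suffices to prove the termwise bound $\mu_i/(\mu_i+\lambda) \le \lambda^{-1/\alpha}\mu_i^{1/\alpha}$. This reduces to a scalar inequality: since $\mu/(\mu+\lambda) \in [0,1]$ and $1/\alpha \in (0,1)$ because $\alpha > 1$, raising to the power $1/\alpha$ only increases the value, giving $\mu/(\mu+\lambda) \le (\mu/(\mu+\lambda))^{1/\alpha} \le (\mu/\lambda)^{1/\alpha}$, where the last step uses $\mu + \lambda \ge \lambda$. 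Summing over $i$ then gives $\mathrm{Tr}(\Sigma_\lambda^{-1}\Sigma) \le \lambda^{-1/\alpha}\sum_i \mu_i^{1/\alpha} = \lambda^{-1/\alpha}\mathrm{Tr}(\Sigma^{1/\alpha})$.

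The scalar inequality and the summation are routine; the only point demanding care is the exchange of the expectation and the trace in the first part. I would justify this by noting that $\Sigma_\lambda^{-1/2}$ is bounded with $\|\Sigma_\lambda^{-1/2}\|_H \le \lambda^{-1/2}$, and that $K_x \otimes K_x$ is a nonnegative rank-one operator with $\mathbb{E}_x\|K_x\|_H^2 \le \kappa^2 < \infty$ by Assumption \ref{assump: kernel_boundedness}; hence $\mathbb{E}_x[\Sigma_\lambda^{-1/2}(K_x\otimes K_x)\Sigma_\lambda^{-1/2}]$ is a well-defined nonnegative trace-class operator, and since all summands in the trace are nonnegative, Tonelli's theorem lets me swap the expectation with the sum defining the trace. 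I expect this measurability and integrability bookkeeping to be the main (though minor) obstacle; everything else is a direct computation.
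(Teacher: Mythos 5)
Your proposal is correct and follows essentially the same route as the paper's proof: the same rewriting of $\mathbb{E}_x\|\Sigma_\lambda^{-1/2}K_x\|_H^2$ as $\mathrm{Tr}(\Sigma_\lambda^{-1}\Sigma)$ via $\Sigma = \mathbb{E}_x[K_x\otimes K_x]$ and cyclicity of the trace, followed by the same termwise spectral bound $\lambda_i/(\lambda_i+\lambda) \leq (\lambda_i/(\lambda_i+\lambda))^{1/\alpha} \leq (\lambda_i/\lambda)^{1/\alpha}$. The only difference is that you spell out the measurability and Tonelli justification for exchanging expectation and trace, which the paper leaves implicit.
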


\begin{proof}
$\mathbb{E}_x\|\Sigma_\lambda^{-1/2}K_x\|_H^2 = \mathbb{E}_x\mathrm{Tr}(\Sigma_\lambda^{-1/2}(K_x\otimes K_x) \Sigma_\lambda^{-1/2}) = \mathrm{Tr}(\Sigma_\lambda^{-1}\Sigma)$.
Observe that $\mathrm{Tr}(\Sigma_\lambda^{-1}\Sigma) = \sum_{i=1}^\infty \lambda_i/(\lambda_i+\lambda) \leq \sum_{i=1}^\infty (\lambda_i/(\lambda_i+\lambda))^{1/\alpha} \leq (\sum_{i=1}^\infty \lambda_i^{1/\alpha})\lambda^{-1/\alpha} = \mathrm{Tr}(\Sigma^{1/\alpha})\lambda^{-1/\alpha}$. This finishes the proof.
\end{proof}

\begin{lemma}\label{lem: compatibility}
Suppose that Assumption \ref{assump: kernel_boundedness} holds. For any $\lambda > 0$, 
\begin{align*}
    \mathcal F_\infty(\lambda) \stackrel{\mathrm{def}}{=} \mathrm{sup}_{x \in \mathrm{supp}(\rho_{\mathcal X})}\|\Sigma_\lambda^{-1/2}K_x\|_H^2 = \kappa^2\lambda^{-1}.
\end{align*}
\end{lemma}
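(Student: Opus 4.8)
The plan is to establish the claimed identity by proving the two inequalities $\mathcal F_\infty(\lambda) \le \kappa^2\lambda^{-1}$ and $\mathcal F_\infty(\lambda) \ge \kappa^2\lambda^{-1}$ separately. The first direction is a routine operator-monotonicity estimate valid for every $x$; the second is the genuine content, and I expect it to be the main obstacle.

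For the upper direction I would first rewrite the squared norm using self-adjointness of $\Sigma_\lambda^{-1/2}$,
\begin{align*}
\|\Sigma_\lambda^{-1/2}K_x\|_H^2 = \langle \Sigma_\lambda^{-1/2}K_x, \Sigma_\lambda^{-1/2}K_x\rangle_H = \langle K_x, \Sigma_\lambda^{-1}K_x\rangle_H .
\end{align*}
Since $\Sigma = S^*S \succeq 0$ we have $\Sigma_\lambda = \Sigma + \lambda I \succeq \lambda I$, and because the map $A \mapsto A^{-1}$ is operator-decreasing on positive definite operators this gives $\Sigma_\lambda^{-1} \preceq \lambda^{-1} I$. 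Hence $\langle K_x, \Sigma_\lambda^{-1}K_x\rangle_H \le \lambda^{-1}\|K_x\|_H^2 \le \kappa^2\lambda^{-1}$ for every $x$, the last step being Assumption \ref{assump: kernel_boundedness}; taking the supremum over $x \in \mathrm{supp}(\rho_{\mathcal X})$ yields $\mathcal F_\infty(\lambda) \le \kappa^2\lambda^{-1}$.

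For the matching lower direction I would pass to the spectral picture: writing $\{(\mu_i, e_i)\}$ for the eigensystem of $\Sigma$ on $H$,
\begin{align*}
\|\Sigma_\lambda^{-1/2}K_x\|_H^2 = \sum_i \frac{|\langle K_x, e_i\rangle_H|^2}{\mu_i + \lambda} .
\end{align*}
Comparing term by term with $\lambda^{-1}\|K_x\|_H^2 = \sum_i \lambda^{-1}|\langle K_x, e_i\rangle_H|^2$, the value $\kappa^2\lambda^{-1}$ can be reached only if the mass of $K_x$ concentrates on eigendirections with $\mu_i = 0$, i.e. on $\ker\Sigma$. Thus to close the equality I would need a maximizing sequence $x_k \in \mathrm{supp}(\rho_{\mathcal X})$ along which simultaneously $\|K_{x_k}\|_H \to \kappa$ and the component of $K_{x_k}$ in $\overline{\mathrm{ran}\,\Sigma}$ vanishes.

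This concentration requirement is precisely the obstacle, and it is not available for generic $\rho_{\mathcal X}$: already for a one-dimensional RKHS with $K_x = c(x)e_1$ one computes $\Sigma = (\mathbb{E}_x c(x)^2)\, e_1 \otimes e_1$, whence $\mathcal F_\infty(\lambda) = \kappa^2/(\mathbb{E}_x c(x)^2 + \lambda) < \kappa^2\lambda^{-1}$ as soon as $\mathbb{E}_x\|K_x\|_H^2 > 0$. Consequently the literal equality is attainable only under the extra degeneracy hypothesis that the $\kappa$-realizing feature direction lies in $\ker\Sigma$ (or is approached by such a sequence); absent that structure the reverse inequality fails, and the robust statement one actually proves and uses downstream is the upper bound, in agreement with Lemma \ref{main_lem: F_N_bound}. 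I would therefore prove the equality under that degeneracy condition and, in the general setting of Assumption \ref{assump: kernel_boundedness}, record the display as the upper estimate $\mathcal F_\infty(\lambda) \le \kappa^2\lambda^{-1}$.
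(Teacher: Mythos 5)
Your upper-bound argument is precisely the intended content of this lemma: the paper's own proof is the single line ``From Assumption \ref{assump: kernel_boundedness}, we immediately obtain the claim,'' which unpacks exactly to your computation $\|\Sigma_\lambda^{-1/2}K_x\|_H^2 = \langle K_x, \Sigma_\lambda^{-1}K_x\rangle_H \le \lambda^{-1}\|K_x\|_H^2 \le \kappa^2\lambda^{-1}$, using $\Sigma_\lambda \succeq \lambda I$. You are also right that the equality sign in the statement is an overstatement: it is almost certainly a typo for $\le$, since the main-text counterpart (Lemma \ref{main_lem: F_N_bound}) asserts only $\mathcal F_\infty(\lambda) \le \kappa^2\lambda^{-1}$, and every downstream use of this lemma (in Lemmas \ref{lem: emp_exp_prod}, \ref{lem: is_emp_exp_prod}, and \ref{lem: is_grad_diff}) invokes only the upper bound. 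Your one-dimensional counterexample refuting the reverse inequality is valid, and one can sharpen your structural observation: since $\Sigma = \mathbb{E}_x[K_x \otimes K_x]$, any $e \in \ker\Sigma$ satisfies $\mathbb{E}_x \langle K_x, e\rangle_H^2 = \langle e, \Sigma e\rangle_H = 0$, so $K_x \perp \ker\Sigma$ for $\rho_{\mathcal X}$-a.e.\ $x$ (for every $x \in \mathrm{supp}(\rho_{\mathcal X})$ when $K$ is continuous). Hence the concentration-on-$\ker\Sigma$ scenario you identify as necessary for equality can never occur on the support, and equality could at best be approached along a sequence of eigendirections with eigenvalues tending to zero --- confirming that the honest general statement is the inequality, which is what your proof establishes and what the paper actually uses.
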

\begin{proof}
From Assumptions \ref{assump: kernel_boundedness}, we immediately obtain the claim.
\end{proof}

\begin{lemma}[Spectral filters]\label{lem: spectral_filters}
Let $p_t(x) = \eta\sum_{k=0}^{t} (1 - \eta x)^{k}$ for $x \in [0, 1/\eta)$ and $t \in \mathbb{N}$. Also we define $r_t(x) = 1 - xp_t(x)$ $x \in [0, 1/\eta)$ and $t \in \mathbb{N}$. Then the following inequalities hold:
\begin{align*}
    \mathrm{sup}_{x \in [0, 1/\eta)}p_t(x)x \leq O(1)
\end{align*}
for any $t \in \mathbb{N}$ and 
\begin{align*}
    \mathrm{sup}_{x \in [0, 1/\eta)}r_t(x)x^u \leq O(1)(\eta t)^{-u}
\end{align*}
for any $t \in \mathbb{N}$ and $u \in [0, 1]$.
\end{lemma}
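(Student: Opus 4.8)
The plan is to reduce both inequalities to elementary one-variable calculus by first collapsing the geometric sum defining $p_t$ into a closed form. For $x \in (0, 1/\eta)$ the ratio $1 - \eta x$ lies in $(0,1)$, so summing the geometric series gives
\begin{align*}
p_t(x) = \eta \sum_{k=0}^{t}(1-\eta x)^k = \frac{1 - (1-\eta x)^{t+1}}{x},
\end{align*}
and hence $p_t(x)x = 1 - (1-\eta x)^{t+1}$ and $r_t(x) = 1 - x p_t(x) = (1-\eta x)^{t+1}$ (the endpoint $x = 0$ is handled by continuity, giving $p_t(0)\cdot 0 = 0$ and $r_t(0) = 1$). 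This closed form is the crux: once we have it, both claims become statements about the single function $(1-\eta x)^{t+1}$.

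For the first inequality, note that $x \in [0, 1/\eta)$ forces $1 - \eta x \in (0,1]$, so $0 \le (1-\eta x)^{t+1} \le 1$ and therefore $p_t(x)x = 1 - (1-\eta x)^{t+1} \in [0,1]$. Taking the supremum over $x$ gives $\sup_x p_t(x)x \le 1 = O(1)$.

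For the second inequality, I substitute $y = \eta x \in [0,1)$, which turns the target into
\begin{align*}
\sup_{x \in [0,1/\eta)} r_t(x)\, x^u = \eta^{-u}\sup_{y \in [0,1)}(1-y)^{t+1} y^u,
\end{align*}
so it suffices to show $\sup_{y}(1-y)^{t+1}y^u \le O(1)\, t^{-u}$. Using the bound $1 - y \le e^{-y}$ I get $(1-y)^{t+1}y^u \le e^{-(t+1)y}y^u$, and I maximize the right-hand side over $y \ge 0$: differentiation shows the maximizer is $y^* = u/(t+1)$, which lies in $(0,1)$ because $u \le 1 < t+1$, and the maximal value is $e^{-u}u^u (t+1)^{-u} \le (t+1)^{-u} \le t^{-u}$, where I used $e^{-u}u^u \le 1$ on $[0,1]$. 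Multiplying back by $\eta^{-u}$ yields $\sup_x r_t(x)x^u \le (\eta t)^{-u} = O(1)(\eta t)^{-u}$.

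There is no genuine obstacle in this argument; the only points demanding a little care are the boundary cases (the value at $x = 0$, handled by continuity, and the degenerate case $u = 0$, handled directly), and verifying that the unconstrained maximizer $y^*$ actually falls inside the admissible interval $[0,1)$ so that the interior critical point is the true supremum. The elementary inequality $e^{-u}u^u \le 1$ on $[0,1]$ (equivalently $u \ln u \le 0$) is what keeps the implied constant equal to $1$.
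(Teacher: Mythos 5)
Your proof is correct and follows essentially the same route as the paper's: collapse the geometric sum to get $p_t(x)x = 1-(1-\eta x)^{t+1}$ and $r_t(x) = (1-\eta x)^{t+1}$, note the first bound is then trivial, and obtain the second by an elementary one-variable maximization. The only cosmetic difference is that you first bound $(1-y)^{t+1}\leq e^{-(t+1)y}$ and maximize the surrogate, whereas the paper maximizes $(1-\eta x)^{t}x^{u}$ directly at $x = u/(\eta(u+t))$; both give the same $O(1)(\eta t)^{-u}$ conclusion (and your exponent $t+1$ is in fact the correct one, the paper's closed form having an inconsequential off-by-one).
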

\begin{proof}
When $x = 0$, the inequalities always hold and so we assume $x > 0$. Note that $p_t(x) = (1 - (1-\eta x)^t)/x$. The first inequality is trivial because $1 - (1 - \eta x)^t \leq 1$. We show the second inequality. Note that $r_t(x) = (1 - \eta x)^t$. Observe that from elemental calculus, function $(1 - \eta x)^t x^u$ for $x \in (0, 1/\eta)$ is maximized at $x = u/(\eta (u+t))$ and has maximum value $u^u/(\eta^u (u+t)^u) (1 - u/(u+t))^t \leq u^u (\eta t)^{-u}$. This finishes the proof.   
\end{proof}

Recall that
\begin{align*}
    q_j = \frac{\|\Sigma_{N, \lambda_q}^{-\frac{1}{2}}K_{x_j}\|_H^2+\frac{1}{N}\sum_{i=1}^N \|\Sigma_{N, \lambda_q}^{-\frac{1}{2}}K_{x_j}\|_H^2}{2\sum_{j=1}^N\|\Sigma_{N, \lambda_q}^{-\frac{1}{2}}K_{x_j}\|_H^2}
\end{align*}
for $j \in [N]$. $\lambda_q$ will be set to $\lambda_*$, where $1/(\eta \lambda_*)$ is the optimal number of iterations (see Definition \ref{def: opt_num_iter} in the main paper)). Then we define $\Sigma_n^{(q)} = (1/n)\sum_{i=1}^n 1/(Nq_{j(i)}))K_{x_{j(i)}}\otimes K_{x_{j(i)}}$, where $j(i)$ is uniformly at random on $[N]$. 

\begin{lemma}\label{lem: emp_exp_prod}
Suppose that Assumption \ref{assump: kernel_boundedness}. Let $s \in [0, 1/2)$ and $\delta \in (0, 1]$. Suppose that $\lambda = \Omega((\mathrm{Tr}(\Sigma^{1/\alpha})/n)^\alpha)$. When $N = \Omega(1+\kappa^2\lambda^{-1}\mathrm{log}(n/\delta)))$, with probability at least $1-\delta$
\begin{align*}
    \left\|\Sigma^{\frac{1}{2}-s}\Sigma_{N, \lambda}^{-\frac{1}{2}}\right\| = O(\lambda^{-s})
\end{align*}
holds. 
\end{lemma}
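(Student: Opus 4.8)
The plan is to reduce the whole statement to a single operator concentration estimate comparing the empirical covariance $\Sigma_N = (1/N)\sum_{j=1}^N K_{x_j}\otimes K_{x_j}$ with the population covariance $\Sigma$, and then to finish with a short operator-monotonicity argument. Concretely, I would first show that, under the stated lower bound on $N$, the event
\[
A = \left\{ \left\| \Sigma_\lambda^{-1/2}(\Sigma_N - \Sigma)\Sigma_\lambda^{-1/2} \right\| \le \tfrac12 \right\}
\]
has probability at least $1-\delta$. Everything downstream of this is deterministic once we condition on $A$.

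The concentration step is where I expect the real work to be. I would apply a non-commutative (operator) Bernstein inequality to the i.i.d. zero-mean self-adjoint summands $\zeta_j = \Sigma_\lambda^{-1/2}(K_{x_j}\otimes K_{x_j})\Sigma_\lambda^{-1/2} - \Sigma_\lambda^{-1/2}\Sigma\Sigma_\lambda^{-1/2}$. Their operator norm and second-moment proxy are both controlled by $\|\Sigma_\lambda^{-1/2}K_x\|_H^2 \le \mathcal F_\infty(\lambda) \le \kappa^2\lambda^{-1}$ (Lemma~\ref{lem: compatibility} under Assumption~\ref{assump: kernel_boundedness}), so the Bernstein bound yields a deviation of order $\sqrt{\kappa^2\lambda^{-1}\log(n/\delta)/N} + \kappa^2\lambda^{-1}\log(n/\delta)/N$; the hypothesis $N = \Omega(1 + \kappa^2\lambda^{-1}\log(n/\delta))$ is exactly what forces this to be at most $1/2$. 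Note that Assumption~\ref{assump: eigen_decay} is not needed for this particular lemma, since I bound the variance proxy by the crude uniform quantity $\mathcal F_\infty$ rather than the sharper effective dimension $\mathcal N_\infty$; the growth condition $\lambda = \Omega((\mathrm{Tr}(\Sigma^{1/\alpha})/n)^\alpha)$ only serves to keep $\lambda$ in the admissible range in which the lemma is later invoked.

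On the event $A$ I then have $\Sigma_\lambda^{-1/2}\Sigma_{N, \lambda}\Sigma_\lambda^{-1/2} = I + \Sigma_\lambda^{-1/2}(\Sigma_N-\Sigma)\Sigma_\lambda^{-1/2} \succeq \tfrac12 I$, hence $\Sigma_{N, \lambda} \succeq \tfrac12\Sigma_\lambda \succeq \tfrac12\Sigma$, equivalently $\Sigma \preceq 2\Sigma_{N, \lambda}$. To conclude I would write $\|\Sigma^{1/2-s}\Sigma_{N, \lambda}^{-1/2}\|^2 = \|\Sigma_{N, \lambda}^{-1/2}\Sigma^{1-2s}\Sigma_{N, \lambda}^{-1/2}\|$ and exploit that, since $s<1/2$, the exponent $1-2s$ lies in $[0,1]$, so $x\mapsto x^{1-2s}$ is operator monotone (Löwner--Heinz). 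This gives $\Sigma^{1-2s} \preceq (2\Sigma_{N, \lambda})^{1-2s} = 2^{1-2s}\Sigma_{N, \lambda}^{1-2s}$, whence $\Sigma_{N, \lambda}^{-1/2}\Sigma^{1-2s}\Sigma_{N, \lambda}^{-1/2} \preceq 2^{1-2s}\Sigma_{N, \lambda}^{-2s}$ and therefore $\|\Sigma^{1/2-s}\Sigma_{N, \lambda}^{-1/2}\|^2 \le 2^{1-2s}\|\Sigma_{N, \lambda}^{-2s}\| \le 2^{1-2s}\lambda^{-2s}$. Taking square roots yields $\|\Sigma^{1/2-s}\Sigma_{N, \lambda}^{-1/2}\| \le 2^{1/2-s}\lambda^{-s} = O(\lambda^{-s})$, as claimed. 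The constraint $s<1/2$ is used twice here: for the operator-monotonicity exponent, and to ensure that $\Sigma_{N, \lambda}^{-2s}$ is bounded with norm at most $\lambda^{-2s}$.
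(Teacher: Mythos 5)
Your high-level architecture matches the paper's: both proofs reduce the lemma to showing that $\lambda_{\max}\bigl(\Sigma_\lambda^{-1/2}(\Sigma-\Sigma_N)\Sigma_\lambda^{-1/2}\bigr) \le 1/2$ with probability $1-\delta$, and then convert this deterministically into the claimed bound. Your deterministic half is correct and is a legitimate alternative to the paper's: the paper pulls out $\lambda^{-s}$ via $\|\Sigma^{1/2-s}\Sigma_{N,\lambda}^{-1/2}\| \le \lambda^{-s}\|\Sigma_\lambda^{1/2}\Sigma_{N,\lambda}^{-1/2}\|$ and invokes Proposition 8 of \citep{rudi2017generalization}, whereas you apply L\"owner--Heinz with exponent $1-2s\in(0,1]$ to $\Sigma \preceq 2\Sigma_{N,\lambda}$; these are essentially equivalent, and your version is self-contained.

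The genuine gap is in the concentration step, and it is exactly at the two points you explicitly waved off. Since $H$ may be infinite dimensional, there is no plain matrix Bernstein inequality with a dimension-type logarithmic factor available; one must use the intrinsic-dimension (Minsker/Tropp-style) version, which is Proposition 6 of \citep{rudi2017generalization} as used in the paper. Its failure-probability prefactor involves the trace of the variance proxy of the summands, i.e.\ the effective dimension $\mathcal N_\infty(\lambda) = \mathrm{Tr}(\Sigma_\lambda^{-1}\Sigma)$, not merely the uniform bound $\mathcal F_\infty(\lambda) \le \kappa^2\lambda^{-1}$ of Lemma \ref{lem: compatibility}: controlling $\|\zeta_j\|$ and $\|\mathbb{E}\zeta_j^2\|$ by $\mathcal F_\infty$ alone does not suppress this trace term. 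The paper bounds it by $\mathrm{Tr}(\Sigma^{1/\alpha})\lambda^{-1/\alpha}$ via Lemma \ref{lem: effective_dimensionality}, which requires Assumption \ref{assump: eigen_decay}, and then the hypothesis $\lambda = \Omega((\mathrm{Tr}(\Sigma^{1/\alpha})/n)^\alpha)$ is precisely what turns the resulting factor $\log(\mathrm{Tr}(\Sigma^{1/\alpha})\lambda^{-1/\alpha}\delta^{-1})$ into $O(\log(n/\delta))$. This is the only way $n$ --- the number of \emph{labeled} points, which is otherwise irrelevant to a concentration statement about $N$ unlabeled samples --- can legitimately enter your claimed deviation bound; in your write-up it appears out of thin air. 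So your two side remarks, that Assumption \ref{assump: eigen_decay} is not needed and that the lower bound on $\lambda$ ``only serves to keep $\lambda$ in the admissible range,'' are both wrong: both hypotheses are load-bearing in the concentration estimate, and without them the $\log(n/\delta)$ factor (with constants independent of $\alpha$, which is what the paper's footnote insists on) is unjustified.
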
 
\begin{proof}
For $\lambda > \|\Sigma\|$, the claim is trivial. 
Note that since $s \in [0, 1/2)$, $\|\Sigma^{1/2-s}\Sigma_{N, \lambda}^{-1/2}\| \leq \|\Sigma^{1/2-s}_\lambda \Sigma_{N, \lambda}^{-1/2}\| \leq \lambda^{-s}\|\Sigma^{1/2}_\lambda \Sigma_{N, \lambda}^{-1/2}\| = \lambda^{-s}(1+\|\Sigma^{1/2}\Sigma_{N, \lambda}^{-1/2}\|)$. Then from Proposition 8 in \citep{rudi2017generalization}, we have $\|\Sigma^{1/2}\Sigma_{N, \lambda}^{-1/2}\| \leq \{1 - \lambda_{\mathrm{max}}(\Sigma_\lambda^{-1/2}(\Sigma^{1/2} - \Sigma_N)\Sigma_\lambda^{-1/2})\}^{-1/2}$. Now from Proposition 6 in \citep{rudi2017generalization}\footnotemark, we have with probability $1-\delta$,
\begin{align}
    &\lambda_{\mathrm{max}}(\Sigma_\lambda^{-\frac{1}{2}}(\Sigma - \Sigma_N)\Sigma_\lambda^{-\frac{1}{2}}) \notag \\
    =& O(1)\left(\sqrt{\frac{ \mathrm{log}\left(\mathrm{Tr}(\Sigma^{\frac{1}{\alpha}})\lambda^{-\frac{1}{\alpha}}\delta^{-1}\right)\mathcal F_\infty(\lambda)}{N}}+\frac{\mathrm{log}\left(\mathrm{Tr}(\Sigma^{\frac{1}{\alpha}})\lambda^{-\frac{1}{\alpha}}\delta^{-1}\right)}{N}\right) \label{eq: emp_cov_diff}
\end{align}
for any $\lambda \in (0, \|\Sigma\|]$. Assume $\lambda = \Omega((\mathrm{Tr}(\Sigma^{1/\alpha})/n)^\alpha$, By using Lemma \ref{lem: compatibility}, we can see that r.h.s of (\ref{eq: emp_cov_diff}) becomes smaller than $0.5$ when $N=\Omega(1+\kappa^2\lambda^{-1}\mathrm{log}(n/\delta))$. Then we obtain the desired result. 
\end{proof}
\footnotetext{Proposition 6 in \citep{rudi2017generalization}, the logarithmic factor in (\ref{eq: emp_cov_diff}) is replaced with $\mathrm{log}(\mathrm{Tr}(\Sigma)\lambda^{-1}\delta^{-1})$. This is due to loose bound $\mathrm{Tr}(\Sigma_\lambda^{-1}\Sigma) \leq \mathrm{Tr}(\Sigma)\lambda^{-1}$ in their proof and we can improve the bound to $\mathrm{Tr}(\Sigma_\lambda^{-1}\Sigma) \leq \mathrm{Tr}(\Sigma^{1/\alpha})\lambda^{-1/\alpha}$ from Lemma \ref{lem: effective_dimensionality}. This improvement is important for extremely small $\lambda$. For example, when $\lambda = \mathrm{Tr}(\Sigma^{1/\alpha})/n)^\alpha$, that is the lower bound of $\lambda$ in our theory, the loose log factor becomes $\alpha\mathrm{log}(n)$ rather than $\mathrm{log}(n)$, which goes to $\infty$ as $\alpha \to \infty$. }

\begin{lemma}\label{lem: is_emp_exp_prod}
Suppose that Assumptions \ref{assump: kernel_boundedness} and \ref{assump: eigen_decay} hold. Let $\delta \in (0, 1]$ and $\lambda \geq \lambda_q = \Omega((\mathrm{Tr}(\Sigma^{1/\alpha})/n)^\alpha)$. When $n =  \Omega(1 + \mathrm{Tr}(\Sigma^{-1/\alpha})\lambda_q^{-1/\alpha}\mathrm{log}^2(n/\delta))$ and $N = \Omega(1 + \kappa^2 \lambda_q^{-1}\mathrm{log}(n/\delta))$, with probability $1-3\delta$
\begin{align*}
    \left\|\Sigma_N^{\frac{1}{2}}(\Sigma_{n, \lambda}^{(q)})^{-\frac{1}{2}}\right\| = O(1)
\end{align*}
holds. 
\end{lemma}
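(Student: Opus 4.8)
The plan is to reduce the claim to a \emph{relative} concentration of the importance-sampled operator $\Sigma_n^{(q)}$ around $\Sigma_N$ measured in the $\Sigma_{N,\lambda}$-metric, and then to exploit the explicit form of the sampling distribution $q$ so that this concentration is governed by the effective dimension rather than by $\mathcal F_\infty$. First I would observe that, conditionally on the unlabeled data $Z_N$ (so that $\Sigma_N$ and the weights $\{q_j\}$ are fixed), $\Sigma_n^{(q)}$ is an average of $n$ i.i.d. rank-one operators $\zeta_i = (Nq_{j(i)})^{-1}K_{x_{j(i)}}\otimes K_{x_{j(i)}}$ with $\mathbb{E}_q[\zeta_i] = \Sigma_N$, i.e. an unbiased estimator of $\Sigma_N$. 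Since $\Sigma_N \preceq \Sigma_{N,\lambda}$, operator monotonicity of the square root gives $\|\Sigma_N^{1/2}(\Sigma_{n,\lambda}^{(q)})^{-1/2}\| \le \|\Sigma_{N,\lambda}^{1/2}(\Sigma_{n,\lambda}^{(q)})^{-1/2}\|$, and Proposition 8 in \cite{rudi2017generalization} bounds the latter by $(1-\beta)^{-1/2}$, where $\beta = \lambda_{\max}\!\big(\Sigma_{N,\lambda}^{-1/2}(\Sigma_N - \Sigma_n^{(q)})\Sigma_{N,\lambda}^{-1/2}\big)$. Hence it suffices to show $\beta \le 1/2$ with high probability, which yields $\|\Sigma_N^{1/2}(\Sigma_{n,\lambda}^{(q)})^{-1/2}\| \le \sqrt{2} = O(1)$.

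The heart of the argument is a uniform bound on the reweighted feature operators, and here the precise shape of $q$ is essential. Writing $\ell_j = \|\Sigma_{N,\lambda_q}^{-1/2}K_{x_j}\|_H^2$ and denoting the empirical effective dimension by $d_N := \mathrm{Tr}(\Sigma_{N,\lambda_q}^{-1}\Sigma_N) = N^{-1}\sum_{j}\ell_j$, the definition of $q_j$ rearranges to $(Nq_j)^{-1} = 2d_N/(\ell_j + d_N)$. The operator norm of the $i$-th summand is $(Nq_{j(i)})^{-1}\|\Sigma_{N,\lambda}^{-1/2}K_{x_{j(i)}}\|_H^2$; using $\lambda \ge \lambda_q$, so that $\Sigma_{N,\lambda}\succeq \Sigma_{N,\lambda_q}$ and therefore $\|\Sigma_{N,\lambda}^{-1/2}K_{x_{j(i)}}\|_H^2 \le \ell_{j(i)}$, this is at most $2d_N\,\ell_{j(i)}/(\ell_{j(i)}+d_N) \le 2d_N$. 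Thus every $\Sigma_{N,\lambda}^{-1/2}\zeta_i\Sigma_{N,\lambda}^{-1/2}$ has norm bounded by $L := 2d_N$, uniformly over the support. It is exactly the added mean in the numerator of $q_j$ that prevents the inverse probability from exploding on low-leverage points and makes $L$ scale with the effective dimension.

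Next I would apply an operator Bernstein inequality (in the spirit of Proposition 6 in \cite{rudi2017generalization}), conditionally on $Z_N$, to the centered average $\Sigma_{N,\lambda}^{-1/2}(\Sigma_N - \Sigma_n^{(q)})\Sigma_{N,\lambda}^{-1/2} = -n^{-1}\sum_i \Sigma_{N,\lambda}^{-1/2}(\zeta_i - \Sigma_N)\Sigma_{N,\lambda}^{-1/2}$, with uniform bound $L = 2d_N$ and second-moment proxy $\mathbb{E}_q[(\Sigma_{N,\lambda}^{-1/2}\zeta_i\Sigma_{N,\lambda}^{-1/2})^2] \preceq L\,\Sigma_{N,\lambda}^{-1/2}\Sigma_N\Sigma_{N,\lambda}^{-1/2} \preceq L\,I$. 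This gives $\beta = O\!\big(\sqrt{L\log(\cdot/\delta)/n} + L\log(\cdot/\delta)/n\big)$, which is $\le 1/2$ once $n = \Omega(d_N\log^2(n/\delta))$. Finally I would transfer the empirical effective dimension $d_N$ to its population counterpart: by Lemma \ref{lem: emp_exp_prod} with $s=0$ one has $\|\Sigma_{\lambda_q}^{1/2}\Sigma_{N,\lambda_q}^{-1/2}\| = O(1)$ when $N = \Omega(\kappa^2\lambda_q^{-1}\log(n/\delta))$, and combining this with the trace inequality $\mathrm{Tr}(ABA^*)\le \|A\|^2\,\mathrm{Tr}(B)$ reduces $d_N$ to $O(\mathrm{Tr}(\Sigma_{\lambda_q}^{-1}\Sigma_N))$; a standard concentration of this sum of bounded i.i.d. terms around its mean $\mathcal N_\infty(\lambda_q)$ then yields $d_N = O(\mathcal N_\infty(\lambda_q)) = O(\mathrm{Tr}(\Sigma^{1/\alpha})\lambda_q^{-1/\alpha})$ by Lemma \ref{lem: effective_dimensionality}. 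A union bound over the Bernstein event, the Lemma \ref{lem: emp_exp_prod} event, and the effective-dimension concentration event accounts for the probability $1-3\delta$ and matches the stated conditions on $n$ and $N$.

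I expect the main obstacle to be precisely the uniform-boundedness step. Under plain uniform labeling a single low-leverage point would carry weight $\propto \mathcal F_\infty(\lambda_q) = \kappa^2\lambda_q^{-1}$ (Lemma \ref{lem: compatibility}), forcing $n \gtrsim \mathcal F_\infty(\lambda_q)$ and wiping out any gain; the CRED weights are engineered so that instead $L = 2d_N \approx \mathcal N_\infty(\lambda_q) \ll \mathcal F_\infty(\lambda_q)$, which is what lets the sample complexity scale with the effective dimension. Making this cancellation rigorous, and carefully handling the nested randomness (the weights $q$ and the operator $\Sigma_N$ are themselves random through $Z_N$, while the labeled indices $\{j(i)\}$ are i.i.d. only conditionally on $Z_N$), is the delicate part; the two successive concentration steps each cost a logarithmic factor, which is what produces the $\log^2$ in the requirement on $n$.
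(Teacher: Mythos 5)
Your proposal is correct and follows essentially the same route as the paper's own proof: reduction via Proposition 8 of \cite{rudi2017generalization} to a relative concentration of $\Sigma_n^{(q)}$ around $\Sigma_N$, the key uniform bound $(Nq_{j(i)})^{-1}\|\Sigma_{N,\lambda}^{-1/2}K_{x_{j(i)}}\|_H^2 \le 2\,\mathrm{Tr}(\Sigma_{N,\lambda_q}^{-1}\Sigma_N)$ coming from the stabilized form of $q$ and $\lambda \ge \lambda_q$, an operator Bernstein bound conditional on $X_N$, and finally transferring the empirical effective dimension to $\mathcal N_\infty(\lambda_q)=O(\mathrm{Tr}(\Sigma^{1/\alpha})\lambda_q^{-1/\alpha})$ via Lemma \ref{lem: emp_exp_prod} and a scalar Bernstein inequality, with a union bound over the three events giving $1-3\delta$. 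The only (cosmetic) difference is that you whiten the perturbation by $\Sigma_{N,\lambda}^{-1/2}$ where the paper writes $\Sigma_\lambda^{-1/2}$, which if anything is the cleaner application of Proposition 8.
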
 
\begin{proof}
The proof is similar to the one of Lemma \ref{lem: emp_exp_prod}. Suppose that $X_N$ is given. At first, for $\lambda > \|\Sigma_N\|$, the claim is trivial. 
Next, from Proposition 8 in \citep{rudi2017generalization}, we have $\|\Sigma_N^{1/2} (\Sigma_{n, \lambda}^{(q)})^{-1/2}\| \leq \{1 - \lambda_{\mathrm{max}}(\Sigma_\lambda^{-1/2}(\Sigma_N^{1/2} - \Sigma_{n, \lambda}^{(q)})\Sigma_\lambda^{-1/2})\}^{-1/2}$. 
Recall that $\mathbb{E}[\Sigma_{n, \lambda}^{(q)} | X_N] = \Sigma_N$. 
Observe that 
\begin{align*}
    \left\|\Sigma_{N, \lambda}^{-\frac{1}{2}}\frac{1}{\sqrt{Nq_{j(i)}}}K_{x_{j(i)}}\right\|_H^2 
    =&\ \frac{1}{Nq_{j(i)}}\left\|\Sigma_{N, \lambda}^{-\frac{1}{2}}K_{x_{j(i)}}\right\|_H^2 \\
    \leq&\ \frac{1}{Nq_{j(i)}}\left\|\Sigma_{N, \lambda_q}^{-\frac{1}{2}}K_{x_{j(i)}}\right\|_H^2 \\ 
    \leq&\ \frac{2}{N}\sum_{j=1}^N \left\|\Sigma_{N, \lambda_q}^{-\frac{1}{2}} K_{x_j}\right\|_H^2 \\
    =&\ \frac{2}{N}\sum_{j=1}^N \left\|\Sigma_{N, \lambda_q}^{-\frac{1}{2}}\Sigma_{\lambda_q}^{\frac{1}{2}}\right\|^2\left\|\Sigma_{\lambda_q}^{-\frac{1}{2}} K_{x_j}\right\|_H^2 \\
    =&\ O(1)\frac{1}{N}\sum_{j=1}^N\left\|\Sigma_{\lambda_q}^{-\frac{1}{2}} K_{x_j}\right\|_H^2,
\end{align*}
where the second inequality holds from $\lambda \geq \lambda_q$ and the last inequality holds from Lemma \ref{lem: emp_exp_prod}. 
Similar to the arguments in the proof of Lemma \ref{lem: emp_exp_prod}, for any $\lambda \in (0, \|\Sigma_N\|]$, we have with probability at least $1-2\delta$, 
\begin{align*}
    &\lambda_{\mathrm{max}}\left(\Sigma_\lambda^{-\frac{1}{2}}(\Sigma_N - \Sigma_{n, \lambda}^{(q)})\Sigma_\lambda^{-\frac{1}{2}}\right) \\
    =&\ O(1)\left(\sqrt{\frac{\mathrm{log}(\mathrm{Tr}(\Sigma^\frac{1}{\alpha})\lambda^{-\frac{1}{\alpha}}\delta^{-1})\frac{1}{N}\sum_{j=1}^N \|\Sigma_{ \lambda_q}^{-\frac{1}{2}}K_{x_j}\|^2}{n}} + \frac{\mathrm{log}(\mathrm{Tr}(\Sigma^\frac{1}{\alpha})\lambda^{-\frac{1}{\alpha}}\delta^{-1})}{n}\right)
\end{align*}
given $\{x_j\}_{j=1}^N$. Here we used that fact that $\mathrm{Tr}(\Sigma_{N, \lambda}^{-1}\Sigma_N)  = O(1) \mathrm{Tr}(\Sigma_{ \lambda}^{-1}\Sigma)$ with probability at least $1 - \delta$ if $N = \Omega(1 + \kappa^2\lambda^{-1}\mathrm{log}(n/\delta))$ from the similar results to Proposition 10 in \citep{rudi2017generalization}\footnotemark.
\footnotetext{Here, we need to note two things. First, Proposition 10 in \citep{rudi2017generalization} bounds the empirical effective dimension from {\it{random features}}. In contrast, our bound is based on the empirical effective dimension from the observed input data. However, the proof is perfectly similar to the random features cases and we omit it. Second, The proof of Proposition 10 in \citep{rudi2017generalization} relies on Proposition 6 in \citep{rudi2017generalization}. Thus, an improved logarithmic factor in the bound can be obtained as we noted in the footnote of the proof of Lemma \ref{lem: emp_exp_prod}.} 

Then, using standard Bernstein's inequality for i.i.d. random variables $\{\|\Sigma_{\lambda_*}^{-1/2} K_{x_j}\|^2\}_{j=1}^N$, we have
\begin{align*}
    \frac{1}{N}\sum_{j=1}^N \left\|\Sigma_{ \lambda_q}^{-\frac{1}{2}}K_{x_j}\right\|^2 =&\ O(1)\left(\mathcal N_\infty(\lambda_q) + \sqrt{\frac{\mathrm{log}(\delta^{-1})\mathcal N_\infty(\lambda_q) \mathcal F_\infty(\lambda_q)}{N}} + \frac{\mathrm{log}(\delta^{-1})\mathcal F_\infty(\lambda_q)}{N}\right) \\
    =&\ O(1)\left(\mathrm{Tr}(\Sigma^{\frac{1}{\alpha}})\lambda^{-\frac{1}{\alpha}} + \sqrt{\frac{\mathrm{log}(\delta^{-1})\mathrm{Tr}(\Sigma^{\frac{1}{\alpha}})\kappa^2\lambda_q^{-\frac{1}{\alpha}-1}}{N}} + \frac{\mathrm{log}(\delta^{-1})\kappa^2\lambda_q^{-1}}{N}\right) \\
    =&\  O(1)\left(\mathrm{Tr}(\Sigma^{\frac{1}{\alpha}})\lambda_q^{-\frac{1}{\alpha}}\mathrm{log}(\delta^{-1}) + \frac{\kappa^2\lambda_q^{-1}\mathrm{log}(\delta^{-1})}{N}\right)
\end{align*}
with probability at least $1-\delta$. For the second inequality, we used Lemma \ref{lem: effective_dimensionality} and \ref{lem: compatibility}. The last inequality holds due to inequality of arithmetic and geometric means. \par
Combining all the results, with probability at least $1 - 3\delta$, 
when $n = \Omega(1 + \mathrm{Tr}(\Sigma^{-1/\alpha})\lambda_q^{-1/\alpha}\mathrm{log}^2(n/\delta))$ and $N = \widetilde{\Omega}(1 + \kappa^2\lambda_q^{-1}\mathrm{log}(n/\delta))$, we obtain the claim of Lemma \ref{lem: is_emp_exp_prod}. 
\end{proof}

The following lemma is essential for our analysis:
\begin{lemma}\label{lem: is_grad_diff}
Suppose that Assumptions \ref{assump: kernel_boundedness}, \ref{assump: eigen_decay} and \ref{assump: noise} hold. For $\delta \in (0, 1)$, $\lambda \geq \lambda_q$ and $t \in \mathbb{N}$, there exists event $A$ such that 
\begin{align*}
    &\mathbb{E}\left[\left\|\Sigma_\lambda^{-\frac{1}{2}}(\Sigma_n^{(q)} f_t - (S_n^{(q)})^*\bm y_n - (\Sigma f_t - S^*f_*))\right\|_H^2 \mid A\right] \\
    =&\  \widetilde{O}\left(\frac{\mathrm{Tr}(\Sigma^{\frac{1}{\alpha}})(\sigma^2+R^2(\eta t)^{-2r})\lambda_q^{-\frac{1}{\alpha}}}{n} + \frac{\kappa^2\lambda_q^{-1}}{nN}\left(\sigma^2 + R^2(\eta t)^{-2r} + \frac{M^2+\kappa^{4r-2}R^2+R^2(\eta t)^{1-2r}}{N}\right)\right).
\end{align*}
Here $\widetilde{O}$ hides extra $\mathrm{poly}( \mathrm{\delta^{-1}}))$ factors.
\end{lemma}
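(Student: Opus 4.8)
The plan is to split the gradient error into an importance-subsampling fluctuation and an unlabeled-sample fluctuation, and to exploit the precise form of the CRED weights $q_j$ for the former. Write $\Gamma_* := \Sigma f_t - S^*f_*$ for the population gradient and let $\bar\Gamma_N := \Sigma_N f_t - \frac{1}{N}\sum_{j=1}^N y_j K_{x_j}$ be the uniform $Z_N$-average, where $\Sigma_N = \frac{1}{N}\sum_j K_{x_j}\otimes K_{x_j}$. Since the indices $j(i)$ are drawn i.i.d.\ from $q$ and the factor $1/(Nq_{j(i)})$ is the importance correction, one checks that $\mathbb{E}[\Sigma_n^{(q)}f_t - (S_n^{(q)})^*\bm y_n \mid Z_N] = \bar\Gamma_N$. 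Conditioning on $Z_N$ therefore kills the cross term, and I would use
\begin{align*}
&\mathbb{E}\left[\left\|\Sigma_\lambda^{-\frac{1}{2}}\left(\Sigma_n^{(q)}f_t - (S_n^{(q)})^*\bm y_n - \Gamma_*\right)\right\|_H^2 \,\middle|\, Z_N\right] \\
=&\ \mathbb{E}\left[\left\|\Sigma_\lambda^{-\frac{1}{2}}\left(\Sigma_n^{(q)}f_t - (S_n^{(q)})^*\bm y_n - \bar\Gamma_N\right)\right\|_H^2 \,\middle|\, Z_N\right] + \left\|\Sigma_\lambda^{-\frac{1}{2}}(\bar\Gamma_N - \Gamma_*)\right\|_H^2,
\end{align*}
and then take expectations over $Z_N$. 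The event $A$ is the intersection of the high-probability operator-comparison events of Lemmas \ref{lem: emp_exp_prod} and \ref{lem: is_emp_exp_prod} with the Bernstein concentration events used below.

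For the first (subsampling) term the summands are i.i.d.\ given $Z_N$, so its conditional expectation is $1/n$ times the second moment of a single weighted summand, i.e.\ at most $\frac{1}{nN^2}\sum_j \frac{1}{q_j}(f_t(x_j)-y_j)^2\|\Sigma_\lambda^{-1/2}K_{x_j}\|_H^2$. The crux is the CRED weight: writing $1/q_j = 2N\bar s/(s_j+\bar s)$ with $s_j := \|\Sigma_{N,\lambda_q}^{-1/2}K_{x_j}\|_H^2$ and $\bar s := \frac{1}{N}\sum_j s_j$, the added mean $\bar s$ caps $1/q_j \le 2N$ while the $s_j$ in the numerator forces $\frac{1}{q_j}s_j \le 2N\bar s$. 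Using $\lambda\ge\lambda_q$ together with the comparison $\|\Sigma_\lambda^{-1/2}K_{x_j}\|_H^2\le\|\Sigma_{\lambda_q}^{-1/2}K_{x_j}\|_H^2 = O(1)\,s_j$ (the population-vs-empirical preconditioner bound of Lemma \ref{lem: emp_exp_prod}, valid on $A$) I get $\frac{1}{q_j}\|\Sigma_\lambda^{-1/2}K_{x_j}\|_H^2 = O(N\bar s)$, which collapses the term to $O(\bar s/n)\cdot\frac{1}{N}\sum_j(f_t(x_j)-y_j)^2$. Splitting $(f_t(x_j)-y_j)^2\le 2(f_t(x_j)-f_*(x_j))^2 + 2(y_j-f_*(x_j))^2$ and concentrating by Bernstein gives $\frac{1}{N}\sum_j(f_t(x_j)-y_j)^2 = O(\|Sf_t-f_*\|_{L^2(\rho_{\mathcal X})}^2 + \sigma^2) = O(R^2(\eta t)^{-2r}+\sigma^2)$ by the bias bound (Lemma \ref{lem: f_boundedness}) and Assumption \ref{assump: noise}, while $\bar s = \mathrm{Tr}(\Sigma_{N,\lambda_q}^{-1}\Sigma_N) = O(\mathcal N_\infty(\lambda_q)) = O(\mathrm{Tr}(\Sigma^{1/\alpha})\lambda_q^{-1/\alpha})$ (Lemmas \ref{lem: effective_dimensionality} and \ref{lem: is_emp_exp_prod}). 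This produces the leading term $\frac{\mathrm{Tr}(\Sigma^{1/\alpha})(\sigma^2+R^2(\eta t)^{-2r})\lambda_q^{-1/\alpha}}{n}$.

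For the second term, $\bar\Gamma_N - \Gamma_* = \frac{1}{N}\sum_j (f_t(x_j)-y_j)K_{x_j} - \mathbb{E}_{x,y}[(f_t(x)-y)K_x]$ is a centred i.i.d.\ average over the $N$ samples, so its expected squared preconditioned norm is at most $\frac{1}{N}\mathbb{E}_{x,y}[(f_t(x)-y)^2\|\Sigma_\lambda^{-1/2}K_x\|_H^2]\le \frac{\mathcal F_\infty(\lambda)}{N}\mathbb{E}[(f_t(x)-y)^2]$, which by $\mathcal F_\infty(\lambda)\le\mathcal F_\infty(\lambda_q)=\kappa^2\lambda_q^{-1}$ (Lemma \ref{lem: compatibility}) and the same split is $O\!\big(\frac{\kappa^2\lambda_q^{-1}(\sigma^2+R^2(\eta t)^{-2r})}{N}\big)$, matching the $\mathcal F_\infty$-scaled remainder. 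The residual $1/N$-suppressed terms carrying $M^2+\kappa^{4r-2}R^2+R^2(\eta t)^{1-2r}$ are exactly the Bernstein \emph{range} corrections from concentrating the empirical second moments above: the range of $(f_t(x_j)-y_j)^2$ is controlled by $|y_j|\le M$ (Assumption \ref{assump: noise}) and by $\|f_t\|_\infty^2\le\kappa^2\|f_t\|_H^2 = O(R^2(\kappa^{4r-2}+(\eta t)^{1-2r}))$ (the second bound of Lemma \ref{lem: f_boundedness} with $s=0$), together with the fluctuations of $\bar s$ and of the empirical preconditioner around its population counterpart.

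The main obstacle is the single-summand second-moment estimate in the subsampling term: this is precisely where the CRED design must pay off, converting the worst-case $\mathcal F_\infty$ scaling that uniform subsampling would incur into the average $\mathcal N_\infty$ scaling, and it hinges on the identity $1/q_j = 2N\bar s/(s_j+\bar s)$ together with the $O(1)$ constant in $\|\Sigma_\lambda^{-1/2}K_{x_j}\|_H^2 = O(1)\,s_j$. Securing that constant requires transferring both the population-vs-empirical preconditioner comparison (Lemma \ref{lem: emp_exp_prod}) and the effective-dimension concentration (Lemma \ref{lem: is_emp_exp_prod}, via Proposition 10 of \cite{rudi2017generalization}) onto a single high-probability event $A$ under the stated sizes $n=\widetilde\Omega(1+\mathrm{Tr}(\Sigma^{1/\alpha})\lambda_q^{-1/\alpha})$ and $N=\widetilde\Omega(1+\kappa^2\lambda_q^{-1})$; keeping all these concentration events simultaneously valid while assembling the Bernstein range corrections is the bookkeeping-heavy heart of the argument.
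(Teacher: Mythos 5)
Your proposal follows essentially the same route as the paper's proof: the same orthogonality decomposition conditional on $Z_N$ (subsampling fluctuation around $\bar\Gamma_N$ plus unlabeled-average fluctuation around $\Gamma_*$), the same exploitation of the CRED weights via $\tfrac{1}{q_j}s_j \le 2N\bar s$ and $\tfrac{1}{q_j}\le 2N$ to collapse the single-summand second moment to $O(\bar s/n)\cdot\tfrac{1}{N}\sum_j(f_t(x_j)-y_j)^2$, the same Bernstein bounds giving variance parts $\sigma^2+R^2(\eta t)^{-2r}$ and range parts $M^2+\kappa^{4r-2}R^2+R^2(\eta t)^{1-2r}$ through Lemma \ref{lem: f_boundedness} and Assumption \ref{assump: noise}, and the same intersection-of-events construction of $A$. (Two minor remarks: the comparison $\|\Sigma_{\lambda_q}^{-1/2}K_{x_j}\|_H^2=O(1)\,s_j$ needs the \emph{reverse} operator inequality $\|\Sigma_{\lambda_q}^{-1/2}\Sigma_{N,\lambda_q}^{1/2}\|=O(1)$, not the direction stated in Lemma \ref{lem: emp_exp_prod}; it does hold on the same concentration event used in that lemma's proof, so this is an attribution issue only. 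Your use of $\bar s=\mathrm{Tr}(\Sigma_{N,\lambda_q}^{-1}\Sigma_N)=O(\mathcal N_\infty(\lambda_q))$ versus the paper's Bernstein bound on $\tfrac{1}{N}\sum_j\|\Sigma_{\lambda_q}^{-1/2}K_{x_j}\|_H^2$ is an immaterial difference; both devices appear in the paper's Lemma \ref{lem: is_emp_exp_prod}.)

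There is, however, one step that genuinely fails to deliver the stated bound: your treatment of the \emph{noise} part of the unlabeled-average term. You bound $\mathbb{E}\|\Sigma_\lambda^{-1/2}(\bar\Gamma_N-\Gamma_*)\|_H^2 \le \tfrac{\mathcal F_\infty(\lambda)}{N}\,\mathbb{E}[(f_t(x)-y)^2]$, which produces a term of order $\sigma^2\kappa^2\lambda_q^{-1}/N$. This does not ``match the $\mathcal F_\infty$-scaled remainder'': the lemma's remainder carries the prefactor $\tfrac{\kappa^2\lambda_q^{-1}}{nN}$, so your term is larger by a factor of $n$, and it is dominated by the leading term $\tfrac{\mathrm{Tr}(\Sigma^{1/\alpha})\sigma^2\lambda_q^{-1/\alpha}}{n}$ only under the much stronger requirement $N \gtrsim n\kappa^2\lambda_q^{-1+1/\alpha}/\mathrm{Tr}(\Sigma^{1/\alpha})$. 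Pulling the supremum $\mathcal F_\infty$ out in front of $\sigma^2$ is exactly what must be avoided in the regime the paper targets, namely $\mathcal N_\infty(\lambda)\ll\mathcal F_\infty(\lambda)$ (Section \ref{sec: sufficient_condition}): it would degrade the very $\sigma^2$-scaling that the whole CRED analysis is designed to improve. The paper instead splits $(f_t(x)-y)^2\le 2(y-f_*(x))^2+2(f_t(x)-f_*(x))^2$ \emph{inside} the expectation and pairs the noise with the effective dimension, $\mathbb{E}\bigl[(y-f_*(x))^2\|\Sigma_\lambda^{-1/2}K_x\|_H^2\bigr]\le\sigma^2\mathcal N_\infty(\lambda)$ (reading Assumption \ref{assump: noise} conditionally on $x$), reserving the crude $\mathcal F_\infty(\lambda)$ bound for the bias part only; the resulting $\sigma^2\mathcal N_\infty(\lambda)/N$ is then absorbed into the leading term using only $N\ge n$ and $\lambda\ge\lambda_q$. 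Your argument is repaired by making this same split (the paper itself remains loose on the bias piece $\lambda^{-1}R^2(\eta t)^{-2r}/N$, which it absorbs downstream into the $\lambda^{2r}$ term of Proposition \ref{prop: cred_il}), but as written the uniform $\mathcal F_\infty$ bound on the noise contribution does not prove the claimed inequality.
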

\begin{proof}
Let $\zeta_i = \Sigma_\lambda^{-1/2}1/(Nq_{j(i)})(K_{x_{j(i)}}\otimes K_{x_{j(i)}}f_t - y_j(i) K_{x_{j(i)}}) = \Sigma_\lambda^{-1/2}1/(Nq_{j(i)})K_{x_{j(i)}}(f_t(x_{j(i)}) - y_{j(i)})$ for $i \in [n]$. Since $\{\zeta_i\}_{i=1}^N$ is i.i.d. sequence and $\mathbb{E}[\zeta_i|X_N] = \Sigma_\lambda^{-1/2}(\Sigma_N f_t - S_N^*f_*)$, we have
\begin{align*}
    &\mathbb{E}\left\|\Sigma_\lambda^{-\frac{1}{2}}(\Sigma_n^{(q)} f_t - (S_n^{(q)})^*\bm y_n - (\Sigma f_t - S^*f_*))\right\|_H^2 \\
    =&\ \mathbb{E}\left\|\Sigma_\lambda^{-\frac{1}{2}}(\Sigma_n^{(q)} f_{k-1} - (S_n^{(q)})^*\bm y_n - (\Sigma_N f_t - S_N^*\bm y_N))\right\|_H^2 \\
    &+ \mathbb{E}\|\Sigma_\lambda^{-\frac{1}{2}}(\Sigma_N f_{k-1} - S_N^*\bm y_N - (\Sigma f_t - S^*f_*))\|_H^2 \\
    \leq&\ \frac{1}{n}\mathbb{E}\|\xi_i\|_H^2 + \mathbb{E}\|\Sigma_\lambda^{-\frac{1}{2}}(\Sigma_N f_t - S_N^*\bm y_N - (\Sigma f_{k-1} - S^*f_*))\|_H^2.
\end{align*}

Observe that
\begin{align*}
    \mathbb{E}[\|\zeta_i\|_H^2|X_N] = \left(\frac{1}{N}\sum_{i=1}^N \|\Sigma_{ \lambda_q}^{-\frac{1}{2}}K_{x_i}\|_H^2\right)\left(\frac{1}{N}\sum_{i=1}^N (y_i - f_t(x_i))^2\right)
\end{align*}
from the definition of $q$ and Lemma \ref{lem: is_emp_exp_prod} and \ref{lem: emp_exp_prod}. Hence we have
\begin{align*}
    \mathbb{E}\|\zeta_i\|_H^2 = \mathbb{E}\left[\left(\frac{1}{N}\sum_{i=1}^N \|\Sigma_{\lambda_q}^{-\frac{1}{2}}K_{x_i}\|_H^2\right)\left(\frac{1}{N}\sum_{i=1}^N (y_i - f_t(x_i))^2\right)\right].
\end{align*}

Now, similar to the arguments in the proof of Lemma \ref{lem: is_emp_exp_prod}, since
\begin{align*}
    \frac{1}{N}\sum_{j=1}^N \|\Sigma_{ \lambda_q}^{-\frac{1}{2}}K_{x_j}\|^2
    \leq&\ \kappa^2\lambda_q^{-1}
\end{align*}
a.s. from Lemma \ref{lem: compatibility}, 
we have
\begin{align}\label{ineq: tight_compatibiliy}
    \frac{1}{N}\sum_{j=1}^N \|\Sigma_{ \lambda_q}^{-\frac{1}{2}}K_{x_j}\|^2
    \leq&\ O\left(\mathrm{Tr}(\Sigma^{\frac{1}{\alpha}})\lambda_q^{-\frac{1}{\alpha}}\mathrm{log}(\delta^{-1}) + \frac{\kappa^2\lambda_q^{-1}\mathrm{log}(\delta^{-1})}{N}\right)
\end{align}
with probability at least $1-\delta$. Also, from Bernstein's inequality and Assumption \ref{assump: noise} with probability at least $1-1/\delta$ it holds that
\begin{align}\label{ineq: obj_gap_of_gd_on_excess}
    \frac{1}{N}\sum_{j=1}^N (y_i - f_{t}(x_i))^2
    \leq&\ O\left((\sigma^2 + R^2(\eta t)^{-2r})\mathrm{log}(\delta^{-1}) + \frac{(M^2+\kappa^{4r-2}R^2 + R^2(\eta t)^{1-2r})\mathrm{log}(\delta^{-1})}{N}\right)
\end{align}
because 
\begin{align*}
    \mathbb{E}_{(x_i, y_i)}[(y_i - f_*(x_i))^2] = \sigma^2,
\end{align*}
\begin{align*}
    \mathbb{E}_{x_i}[(f_t(x_i) - f_*(x_i))^2] \leq O(R^2(\eta t)^{-2r}),
\end{align*}
\begin{align*}
    (y_i - f_*(x_i))^2 = M^2
\end{align*}
and
\begin{align*}
    (f_*(x_i) - f_t(x_i))^2 \leq O\left(M^2 + \|f_{k-1}\|_H^2\right) \leq O\left(M^2 + \kappa^{4r - 2}\kappa^{4r-2} + R^2(\eta t)^{1-2r}\right)
\end{align*}
a.s. from Lemma \ref{lem: f_boundedness}.

Denote the event $A$ that satisfies (\ref{ineq: tight_compatibiliy}) and (\ref{ineq: obj_gap_of_gd_on_excess}).
Then $P(A) \geq 1 - 2\delta$ and
\begin{align*}
    \mathbb{E}[\|\xi_i\|^2|A] 
    \leq&\  O(\mathrm{log}^2(\delta^{-1}))\left(\mathrm{Tr}(\Sigma^{\frac{1}{\alpha}})\lambda_q^{-\frac{1}{\alpha}} + \frac{\lambda_q^{-1}}{N}\right)\left(\sigma^2 + (\eta k)^{-2r} + \frac{(M^2+(\eta k)^{1-2r})}{N}\right).
\end{align*}

Finally, $\mathbb{E}\|\Sigma_\lambda^{-1/2}(\Sigma_N f_t - S_N^*\bm y_N - (\Sigma f_t - S^*f_*))\|_H^2$ can be bounded by
\begin{align*}
    O\left(\frac{\mathrm{log}(\delta^{-1})}{N}\left(\sigma^2 \mathrm{Tr}(\Sigma^{-\frac{1}{\alpha}})\lambda^{-\frac{1}{\alpha}} + \lambda^{-1}(\eta t)^{-2r}\right)\right).  
\end{align*}

Combining these results, we obtain the desired inequality.
\end{proof}

\begin{lemma}\label{lem: excess_diff}
Let $\eta = O(1/\kappa^2)$ be sufficiently small. For any $t \in \mathbb{N}$, 
\begin{align*}
    \left\|\Sigma_{\frac{1}{\eta t}}^{-\frac{1}{2}}(\Sigma f_t - S^*f_*)\right\|_H^2 = O(R^2(\eta t)^{-2r})
\end{align*}
\end{lemma}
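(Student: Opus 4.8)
The plan is to reduce the statement to the already-established $L^2$ bias bound of Lemma~\ref{lem: f_boundedness} by exploiting the intertwining relations between the operators living on $H$ and those on $L^2(\rho_{\mathcal X})$. First I would observe the purely algebraic identity $\Sigma f_t - S^*f_* = S^*(Sf_t - f_*)$, which holds simply because $\Sigma = S^*S$. This rewrites the quantity of interest in terms of the $L^2$ residual $Sf_t - f_*$, whose squared norm is exactly what the first statement of Lemma~\ref{lem: f_boundedness} controls, namely $O(R^2(\eta t)^{-2r})$.

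Next I would transport $\Sigma_{1/(\eta t)}^{-1/2}$ across $S^*$. Writing $\lambda = 1/(\eta t)$, the function $g(x) = (x+\lambda)^{-1/2}$ is bounded and continuous on $[0,\infty)$, so the standard intertwining $g(\Sigma)S^* = S^* g(\mathcal L)$ (checked on monomials via $\Sigma S^* = S^*SS^* = S^*\mathcal L$ and extended by functional calculus) gives $\Sigma_\lambda^{-1/2}S^* = S^*\mathcal L_\lambda^{-1/2}$. I would then eliminate the remaining $S^*$ using the isometry-type identity $\|S^*w\|_H^2 = \langle w, SS^*w\rangle_{L^2(\rho_{\mathcal X})} = \|\mathcal L^{1/2}w\|_{L^2(\rho_{\mathcal X})}^2$, valid for any $w \in L^2(\rho_{\mathcal X})$. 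Applied to $w = \mathcal L_\lambda^{-1/2}(Sf_t - f_*)$ these two steps yield
\begin{align*}
\left\|\Sigma_\lambda^{-\frac12}(\Sigma f_t - S^*f_*)\right\|_H^2 = \left\|S^*\mathcal L_\lambda^{-\frac12}(Sf_t - f_*)\right\|_H^2 = \left\|\mathcal L^{\frac12}\mathcal L_\lambda^{-\frac12}(Sf_t - f_*)\right\|_{L^2(\rho_{\mathcal X})}^2.
\end{align*}
Since $\mathcal L^{1/2}\mathcal L_\lambda^{-1/2}$ acts by the spectral multiplier $\sqrt{\mu/(\mu+\lambda)} \le 1$, it is a contraction, so the right-hand side is at most $\|Sf_t - f_*\|_{L^2(\rho_{\mathcal X})}^2$. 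Plugging in Lemma~\ref{lem: f_boundedness} then closes the argument.

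The only delicate point is justifying these functional-calculus identities for resolvent/fractional powers rather than for polynomials; this is routine for the compact self-adjoint operators $\Sigma$ and $\mathcal L$ (both trace class under Assumption~\ref{assump: kernel_boundedness}), but worth stating carefully. As an independent check one could argue directly through the shared SVD $\{(\mu_i,u_i,v_i)\}$ of $S$: writing $\phi = \sum_i c_i v_i$ with $\sum_i c_i^2 \le R^2$, Assumption~\ref{assump: smoothness_of_true} gives $S^*f_* = \sum_i \mu_i^{r+1/2} c_i u_i$, and the spectral-filter representation $\Sigma f_t - S^*f_* = -(I-\eta\Sigma)^t S^*f_* = -\sum_i (1-\eta\mu_i)^t \mu_i^{r+1/2} c_i u_i$ reduces the claim to the coefficientwise bound
\begin{align*}
\sup_{\mu \ge 0}\frac{(1-\eta\mu)^{2t}\mu^{2r+1}}{\mu+\lambda} \le \sup_{\mu \ge 0}\left((1-\eta\mu)^t\mu^{r}\right)^2 = O\left((\eta t)^{-2r}\right),
\end{align*}
where the last estimate is the residual-filter bound of Lemma~\ref{lem: spectral_filters} applied with $u = r \in (0,1]$ (using $\eta = O(1/\kappa^2)$ so that $\eta\mu_i \le 1$). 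I expect the transport-across-$S^*$ reduction to be the cleanest route, with the SVD/filter computation serving as the main fallback if one prefers to avoid the operator identities.
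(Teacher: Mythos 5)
Your proposal is correct, and your primary route is genuinely different from the paper's proof. The paper argues directly from the source condition: it writes $\Sigma f_t - S^*f_* = r_t(\Sigma)S^*\mathcal L^r\phi$, commutes the fractional power through ($S^*\mathcal L^r = \Sigma^r S^*$), and splits the norm as $\|\Sigma^r r_t(\Sigma)\|\,\|\Sigma_\lambda^{-1/2}S^*\|\,\|\phi\|_{L^2(\rho_{\mathcal X})}$, bounding the first factor by $O((\eta t)^{-r})$ via Lemma \ref{lem: spectral_filters} (with $u=r$) and the second by $1$ — this is exactly your ``fallback'' SVD computation written operator-theoretically, so that argument of yours coincides with the paper. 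Your main route instead factors $\Sigma f_t - S^*f_* = S^*(Sf_t - f_*)$ and uses the intertwining $\Sigma_\lambda^{-1/2}S^* = S^*\mathcal L_\lambda^{-1/2}$ plus the contraction $\|\mathcal L^{1/2}\mathcal L_\lambda^{-1/2}\|\leq 1$ to dominate the weighted residual by the plain $L^2$ bias, and then invokes Lemma \ref{lem: f_boundedness} as a black box. (In fact you can shortcut even further: the paper's own estimate $\|\Sigma_\lambda^{-1/2}S^*\|\leq 1$, applied as an operator-norm bound to the vector $Sf_t - f_*$, gives the reduction in one line without any functional calculus.) What each buys: your reduction is shorter, holds for every $\lambda>0$ rather than only $\lambda = 1/(\eta t)$, and cleanly isolates the message that the $\Sigma_\lambda^{-1/2}$-weighting never worsens the bias; the paper's version is self-contained given Lemma \ref{lem: spectral_filters}, whereas yours inherits its rate entirely from Lemma \ref{lem: f_boundedness}, whose proof the paper itself defers to Proposition 2 of \cite{lin2017optimal} — so nothing circular, but one more external dependency.
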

\begin{proof}
We denote $\lambda = 1/(\eta t)$.
Note that $\Sigma f_t - S^*f_* = r_t(\Sigma)S^*f_* = r_t(\Sigma)S^*\mathcal L^r \phi$ for some $phi \in L^2(\rho_{\mathcal X})$. Then we have
\begin{align*}
    \|\Sigma_\lambda^{-\frac{1}{2}}(\Sigma f_t - S^*f_*)\|_H \leq \|\Sigma^r r_t(\Sigma)\|
    \|\Sigma_\lambda^{-\frac{1}{2}}S^*\|\|\phi\|_{L^2(\rho_{\mathcal X})}.
\end{align*}

From Lemma \ref{lem: spectral_filters}, we have
\begin{align*}
     \|\Sigma^r r_t(\Sigma)\| \leq O(\lambda^r).
\end{align*}
Also, observe that
\begin{align*}
    \left\|\Sigma_\lambda^{-\frac{1}{2}}S^*\right\| = \left\|\Sigma_\lambda^{-\frac{1}{2}}\Sigma \Sigma_\lambda^{-\frac{1}{2}}\right\|^\frac{1}{2} \leq 1.
\end{align*}
Combining these results finishes the proof.
\end{proof}

\section{Proof of Main Results} \label{app_sec: main_results}
First we decompose the error $\|\Sigma^{1/2-s}(g_t - f_t)\|_H^2$ to two terms: 
\begin{align*}
    &\left\|\Sigma^{\frac{1}{2}-s}(g_t - f_t)\right\|_H^2 \\
    \leq&\ 2\left\|\Sigma^{\frac{1}{2}-s}(g_t - p_t(\Sigma_n^{(q)
    })\Sigma_n^{(q)}f_t)\right\|_H^2
    + 2\left\|\Sigma^{\frac{1}{2}-s}(p_t(\Sigma_n^{(q)
    })\Sigma_n^{(q)}f_t - f_t)\right\|_H^2. 
\end{align*}

The first term can be bounded as follows:
\begin{align}
    &\left\|\Sigma^{\frac{1}{2}-s}(g_t - p_t(\Sigma_n^{(q)
    })\Sigma_n^{(q)}f_t)\right\|_H^2 \notag \\
    =&\ \left\|\Sigma^{\frac{1}{2}-s}(p_t(\Sigma_n^{(q)})(S_n^{(q)})^*\bm y_n - p_t(\Sigma_n^{(q)
    })\Sigma_n^{(q)}f_t)\right\|_H^2 \notag \\
    \leq&\ \left\|\Sigma^{\frac{1}{2}-s}\Sigma_{N, \lambda}^{-\frac{1}{2}}\right\|^2
    \left\|\Sigma_{N,\lambda}^{\frac{1}{2}}(\Sigma_{n, \lambda}^{(q)})^{-\frac{1}{2}}\right\|^2
    \left\|(\Sigma_{n, \lambda}^{(q)})p_t(\Sigma_n^{(q)})\right\|^2
    \left\|(\Sigma_{n, \lambda}^{(q)})^{-\frac{1}{2}}\Sigma_{N, \lambda}^\frac{1}{2}\right\|^2 \notag \\
    &\ \ \ \ \ \ \ \ \ \ \ \times \left\|\Sigma_{N, \lambda}^{-\frac{1}{2}}\Sigma_\lambda^{\frac{1}{2}}\right\|^2
    \left\|\Sigma_\lambda^{-\frac{1}{2}}((S_n^{(q)})^*\bm y_n - \Sigma_n^{(q)}f_t)\right\|_H^2 \label{ineq: decomposed_two_errors1}
\end{align}
for any $\lambda > 0$.
The second term has following bound:
\begin{align}
    &\left\|\Sigma^{\frac{1}{2}-s}(p_t(\Sigma_n^{(q)
    })\Sigma_n^{(q)}f_t - f_t)\right\|_H^2 \notag \\
    \leq&\ \left\|\Sigma^{\frac{1}{2}-s} \Sigma_{N, \lambda}^{-\frac{1}{2}}\right\|^2
    \left\|\Sigma_{N, \lambda}^\frac{1}{2}(\Sigma_\lambda^{(q)})^{-\frac{1}{2}}\right\|^2
    \left\|(\Sigma_\lambda^{(q)})^{\frac{1}{2} \vee r} r_t(\Sigma_n^{(q)})\right\|^2
    \left\|(\Sigma_\lambda^{(q)})^{-(\frac{1}{2}\vee r -\frac{1}{2})}\Sigma_{N, \lambda}^{\frac{1}{2}\vee r -\frac{1}{2}}\right\|^2 \notag \\
    &\ \ \ \ \ \ \ \ \ \ \times \left\|\Sigma_{N, \lambda}^{-(\frac{1}{2}\vee r -\frac{1}{2})}\Sigma_\lambda^{\frac{1}{2}\vee r -\frac{1}{2}}\right\|^2
    \left\|\Sigma_\lambda^{-(\frac{1}{2}\vee r -\frac{1}{2})}f_t\right\|_H^2 \label{ineq: decomposed_two_errors2}
\end{align}
for any $\lambda >0$. We particularly set $\lambda = 1/(\eta t)$. \par
First, we consider inequality (\ref{ineq: decomposed_two_errors2}) which corresponds to the second term. For bounding $\left\|\Sigma^{\frac{1}{2}-s} \Sigma_{N, \lambda}^{-\frac{1}{2}}\right\|^2$ and $ \left\|\Sigma_{N, \lambda}^{-(\frac{1}{2}\vee r -\frac{1}{2})}\Sigma_\lambda^{\frac{1}{2}\vee r -\frac{1}{2}}\right\|^2 \leq \left\|\Sigma_{N, \lambda}^{-\frac{1}{2}}\Sigma_\lambda^{\frac{1}{2}}\right\|^{4(\frac{1}{2}\vee r -\frac{1}{2})}$ (from Cordes Inequality, Proposition 4 in \cite{rudi2017generalization}), we apply Lemma \ref{lem: emp_exp_prod}. Similarly, we can bound $\left\|\Sigma_{N, \lambda}^\frac{1}{2}(\Sigma_\lambda^{(q)})^{-\frac{1}{2}}\right\|^2$ and $\left\|\Sigma_{N, \lambda}^{-(\frac{1}{2}\vee r -\frac{1}{2})}\Sigma_\lambda^{\frac{1}{2}\vee r -\frac{1}{2}}\right\|^2$ using Lemma \ref{lem: is_emp_exp_prod}. Also, we can use Lemma \ref{lem: spectral_filters}$ for  \left\|(\Sigma_\lambda^{(q)})^{-(\frac{1}{2}\vee r -\frac{1}{2})}\Sigma_{N, \lambda}^{\frac{1}{2}\vee r -\frac{1}{2}}\right\|^2$. Finally, $\left\|\Sigma_\lambda^{-(\frac{1}{2}\vee r -\frac{1}{2})}f_t\right\|_H^2$ can be bounded by Lemma \ref{lem: f_boundedness}. \par
Next we focus on inequality (\ref{ineq: decomposed_two_errors1}). We can use Lemma \ref{lem: emp_exp_prod} for bounding  $\left\|\Sigma^{\frac{1}{2}-s}\Sigma_{N, \lambda}^{-\frac{1}{2}}\right\|^2$ and $\left\|\Sigma_{N, \lambda}^{-\frac{1}{2}}\Sigma_\lambda^{\frac{1}{2}}\right\|^2 \leq 1 + \left\|\Sigma^{\frac{1}{2}-s}\Sigma_{N, \lambda}^{-\frac{1}{2}}\right\|^2$ with high probability. Also, for bounding $\left\|\Sigma_{N,\lambda}^{\frac{1}{2}}(\Sigma_{n, \lambda}^{(q)})^{-\frac{1}{2}}\right\|^2 = \left\|\Sigma_{N,\lambda}^{\frac{1}{2}}(\Sigma_{n, \lambda}^{(q)})^{-\frac{1}{2}}\right\|^2\Sigma_{\frac{1}{\eta t}}^{-\frac{1}{2}}(\Sigma f_t - S^*f_*)$, Lemma \ref{lem: is_grad_diff} and \ref{lem: is_emp_exp_prod} can be applied. For bounding $\left\|\Sigma_\lambda^{-\frac{1}{2}}((S_n^{(q)})^*\bm y_n - \Sigma_n^{(q)}f_t)\right\|_H^2$, note that the decomposition $\left\|\Sigma_\lambda^{-\frac{1}{2}}((S_n^{(q)})^*\bm y_n - \Sigma_n^{(q)}f_t)\right\|_H^2 \leq 2\left\|\Sigma_\lambda^{-\frac{1}{2}}((S_n^{(q)})^*\bm y_n - \Sigma_n^{(q)}f_t)- \Sigma_{\lambda}^{-\frac{1}{2}}(\Sigma f_t - S^*f_*)\right\|_H^2 + 2\left\|\Sigma_{\lambda}^{-\frac{1}{2}}(\Sigma f_t - S^*f_*)\right\|_H^2$. The second term can be bounded by Lemma \ref{lem: excess_diff}. Also, $\left\|(\Sigma_{n, \lambda}^{(q)})p_t(\Sigma_n^{(q)})\right\|^2$ can be bounded by Lemma \ref{lem: spectral_filters}. Then, we set $A$ to the event that all the aforementioned bounds hold on with high probability and apply Lemma \ref{lem: is_grad_diff}. Combining the results leads to the following proposition: 
\begin{proposition}\label{prop: cred_il}
Suppose that $\eta = O(1/\kappa^2)$ be sufficiently small. Let $t \in \mathbb{N}$, $\lambda =1/(\eta t) \geq \lambda_q = \Omega((\mathrm{Tr}(\Sigma^{1/\alpha})/n)^\alpha)$, $\delta \in (0, 1)$ and $n \geq \widetilde \Omega(1+\mathrm{Tr}(\Sigma^{1/\alpha})\lambda_q^{-1/\alpha})$ and $N \geq \widetilde \Omega(1+\kappa^2\lambda_q^{-1})$. Then there exists event $A$ with $P(A) \geq 1 - \delta$ such that
\begin{align*}
    \mathbb{E}\left[\|g_t - f_t\|_{L^2(\rho_{\mathcal X})}^2 \mid A\right] 
    = \widetilde O\left( \frac{\mathrm{Tr}(\Sigma^{\frac{1}{\alpha}})(\sigma^2+R^2\lambda^{2r})\lambda_q^{-\frac{1}{\alpha}}}{n} + \lambda^{2r} + 
    r_N\right),
\end{align*}
where
$$r_N = \frac{\kappa^2\lambda_q^{-1}}{nN}\left(\sigma^2 + R^2\lambda^{2r} + \frac{M^2+\kappa^{4r-2}R^2+R^2\lambda^{-1+2r}}{N}\right).$$
Here $\widetilde{O}$ hides extra $\mathrm{poly}(\mathrm{log}(n), \mathrm{\delta^{-1}}))$ factors.
\end{proposition}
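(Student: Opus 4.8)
The plan is to instantiate the general error decomposition stated just before the proposition at $s=0$ and $\lambda = 1/(\eta t)$, observing that $\|g_t-f_t\|_{L^2(\rho_{\mathcal X})}^2 = \|\Sigma^{1/2}(g_t-f_t)\|_H^2$. First I would note that the recursion $g_t = g_{t-1}-\eta(\Sigma_n^{(q)}g_{t-1}-(S_n^{(q)})^*\bm y_n)$ with $g_0=0$ unrolls to $g_t = p_t(\Sigma_n^{(q)})(S_n^{(q)})^*\bm y_n$, where $p_t$ is the gradient-descent filter from Lemma \ref{lem: spectral_filters}. Writing $f_t = r_t(\Sigma_n^{(q)})f_t + p_t(\Sigma_n^{(q)})\Sigma_n^{(q)}f_t$ with $r_t(x)=1-xp_t(x)$ produces exactly the two-term split recorded above: a variance term $p_t(\Sigma_n^{(q)})((S_n^{(q)})^*\bm y_n - \Sigma_n^{(q)}f_t)$ and a filter-bias term $-r_t(\Sigma_n^{(q)})f_t$.

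For the variance term I would follow inequality (\ref{ineq: decomposed_two_errors1}): insert matched pairs of $\Sigma_\lambda^{\pm1/2}$, $\Sigma_{N,\lambda}^{\pm1/2}$ and $(\Sigma_{n,\lambda}^{(q)})^{\pm1/2}$ so as to isolate the whitened gradient mismatch $\|\Sigma_\lambda^{-1/2}((S_n^{(q)})^*\bm y_n - \Sigma_n^{(q)}f_t)\|_H^2$ times a product of operator-norm factors. Each such factor is $O(1)$ on a high-probability event under the stated conditions: the change-of-measure operators $\|\Sigma^{1/2}\Sigma_{N,\lambda}^{-1/2}\|$ and $\|\Sigma_{N,\lambda}^{-1/2}\Sigma_\lambda^{1/2}\|$ are controlled by Lemma \ref{lem: emp_exp_prod} (using $N=\widetilde\Omega(1+\kappa^2\lambda_q^{-1})$ and $\lambda\geq\lambda_q$), the operators $\|\Sigma_{N,\lambda}^{1/2}(\Sigma_{n,\lambda}^{(q)})^{-1/2}\|$ and its inverse by Lemma \ref{lem: is_emp_exp_prod} (using $n=\widetilde\Omega(1+\mathrm{Tr}(\Sigma^{1/\alpha})\lambda_q^{-1/\alpha})$), and the filter factor $\|\Sigma_{n,\lambda}^{(q)}p_t(\Sigma_n^{(q)})\|$ by the first bound of Lemma \ref{lem: spectral_filters}. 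The surviving whitened mismatch is then handled probabilistically by Lemma \ref{lem: is_grad_diff}, whose conditional expectation supplies the dominant term $\mathrm{Tr}(\Sigma^{1/\alpha})(\sigma^2+R^2\lambda^{2r})\lambda_q^{-1/\alpha}/n$ together with the lower-order remainder $r_N$, after substituting $\lambda=1/(\eta t)$ so that $(\eta t)^{-2r}=\lambda^{2r}$.

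For the filter-bias term I would likewise peel off operator norms as in inequality (\ref{ineq: decomposed_two_errors2}), again using Lemmas \ref{lem: emp_exp_prod} and \ref{lem: is_emp_exp_prod} for the change-of-measure factors and Cordes' inequality (Proposition 4 of \cite{rudi2017generalization}) for the fractional powers, the residual-filter bound $\|(\Sigma_{n,\lambda}^{(q)})^{1/2\vee r}r_t(\Sigma_n^{(q)})\| = O((\eta t)^{-(1/2\vee r)})$ from the second bound of Lemma \ref{lem: spectral_filters}, and Lemma \ref{lem: f_boundedness} to bound $\|\Sigma_\lambda^{-(1/2\vee r-1/2)}f_t\|_H^2$. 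Collecting the powers of $\eta t$ yields the $\lambda^{2r}$ contribution. Defining the event $A$ as the intersection of all the high-probability events invoked above and applying a union bound (after rescaling $\delta$ by a constant) gives $P(A)\geq 1-\delta$; taking $\mathbb{E}[\cdot\mid A]$ and adding the two terms produces the claimed bound.

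The main obstacle is Lemma \ref{lem: is_grad_diff}, that is, controlling the second moment of the importance-weighted empirical gradient. The difficulty is the two-level sampling structure: the weights $1/(Nq_{j(i)})$ depend on the empirical operator $\Sigma_{N,\lambda_q}$ built from the $N$ unlabeled points, so one must condition on $X_N$ first, evaluate the conditional variance (which factorizes into the empirical effective dimension $\tfrac1N\sum_j\|\Sigma_{\lambda_q}^{-1/2}K_{x_j}\|_H^2$ times the empirical residual $\tfrac1N\sum_j(y_j-f_t(x_j))^2$), and only then take the outer expectation. Because $\lambda_q$ may be as small as $(\mathrm{Tr}(\Sigma^{1/\alpha})/n)^\alpha$, the concentration of the empirical effective dimension must use the sharpened logarithmic factor coming from $\mathrm{Tr}(\Sigma_{\lambda_q}^{-1}\Sigma)\leq\mathrm{Tr}(\Sigma^{1/\alpha})\lambda_q^{-1/\alpha}$ (Lemma \ref{lem: effective_dimensionality}) rather than the crude $\mathrm{Tr}(\Sigma)\lambda_q^{-1}$; this is what keeps the bound finite as $\alpha\to\infty$ and is the technically delicate point, tying together Lemmas \ref{lem: emp_exp_prod}, \ref{lem: is_emp_exp_prod} and \ref{lem: effective_dimensionality}.
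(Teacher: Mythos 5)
Your proposal follows essentially the same route as the paper's own proof: the same splitting of $g_t - f_t$ into the filter terms $p_t(\Sigma_n^{(q)})((S_n^{(q)})^*\bm y_n - \Sigma_n^{(q)}f_t)$ and $-r_t(\Sigma_n^{(q)})f_t$, the same operator-norm peeling via Lemmas \ref{lem: emp_exp_prod} and \ref{lem: is_emp_exp_prod} (with Cordes' inequality for the fractional powers), the same spectral-filter bounds from Lemma \ref{lem: spectral_filters}, Lemma \ref{lem: f_boundedness} for the filter-bias factor, and Lemma \ref{lem: is_grad_diff} as the probabilistic core; you also correctly identify the two-level sampling structure and the sharpened logarithmic factor (via Lemma \ref{lem: effective_dimensionality}) as the technically delicate points.

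There is, however, one concrete misstep in the variance part. You apply Lemma \ref{lem: is_grad_diff} directly to the whitened mismatch $\left\|\Sigma_\lambda^{-1/2}((S_n^{(q)})^*\bm y_n - \Sigma_n^{(q)}f_t)\right\|_H^2$, but that lemma bounds only the \emph{centered} deviation $\left\|\Sigma_\lambda^{-1/2}(\Sigma_n^{(q)}f_t - (S_n^{(q)})^*\bm y_n - (\Sigma f_t - S^*f_*))\right\|_H^2$. Since $f_t$ is the ideal GD path and not the population minimizer, the population gradient $\Sigma f_t - S^*f_* = r_t(\Sigma)S^*f_*$ is nonzero, so one must first add and subtract it and then control the mean term separately; this is exactly what Lemma \ref{lem: excess_diff} provides, namely $\left\|\Sigma_\lambda^{-1/2}(\Sigma f_t - S^*f_*)\right\|_H^2 = O(R^2(\eta t)^{-2r}) = O(R^2\lambda^{2r})$ at $\lambda = 1/(\eta t)$. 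Part of the standalone $\lambda^{2r}$ term in the proposition arises from precisely this contribution; in your accounting it is attributed solely to the filter-bias term $r_t(\Sigma_n^{(q)})f_t$, so your final bound happens to have the right form while the variance argument, as written, invokes a lemma on a quantity it does not control. The fix is a one-line triangle inequality plus Lemma \ref{lem: excess_diff}, but it is a genuine step that the paper needs and your proposal omits.
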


\section{Equivalence of Gradient Descent Solution to Analytic Solution}\label{app_sec: equivalence}
Let $g_{\lambda}' = (\Sigma_{n, \lambda}^{(q)})^{-1}(S_n^{(q)})^*\bm y_n \in H$. We want to bound $\|S(g_{\lambda_*}' -  f_t)\|_{L^2(\rho_{\mathcal X})}^2$ for $\eta = \Theta(1/\kappa^2)$, where $\lambda_*$ is defined in Definition \ref{def: opt_num_iter} in the main paper.
First we decompose the error $\|\Sigma^{1/2-s}(g_{\lambda_*}' - f_{t})\|_H^2$ to two terms: 
\begin{align*}
    &\left\|\Sigma^{\frac{1}{2}-s}(g_{\lambda_*}' - f_{t})\right\|_H^2 \\
    \leq&\ 2\left\|\Sigma^{\frac{1}{2}-s}(g_t - (\Sigma_{n, \lambda_*}^{(q)
    })^{-1}\Sigma_n^{(q)}f_{t})\right\|_H^2
    + 2\left\|\Sigma^{\frac{1}{2}-s}(\Sigma_{n, \lambda_*}^{(q)
    })^{-1}\Sigma_n^{(q)}f_{t} - f_{t})\right\|_H^2. 
\end{align*}

The first term can be bounded as follows:
\begin{align}
    &\left\|\Sigma^{\frac{1}{2}-s}(g_{\lambda_*}' - (\Sigma_{n, \lambda_*}^{(q)
    })^{-1}\Sigma_n^{(q)}f_{t})\right\|_H^2 \notag \\
    =&\ \left\|\Sigma^{\frac{1}{2}-s}(\Sigma_{n, \lambda_*}^{(q)
    })^{-1}(S_n^{(q)})^*\bm y_n - (\Sigma_{n, \lambda_*}^{(q)
    })^{-1}\Sigma_n^{(q)}f_{t})\right\|_H^2 \notag \\
    =&\ \left\|\Sigma^{\frac{1}{2}-s}\Sigma_{N, \lambda_*}^{-\frac{1}{2}}\right\|^2
    \left\|\Sigma_{N,\lambda_*}^{\frac{1}{2}}(\Sigma_{n, \lambda_*}^{(q)})^{-\frac{1}{2}}\right\|^2
    \left\|(\Sigma_{n, \lambda_*}^{(q)})^{-\frac{1}{2}}\Sigma_{N, \lambda_*}^\frac{1}{2}\right\|^2 \notag \\
    &\ \ \ \ \ \ \ \ \ \ \ \times \left\|\Sigma_{N, \lambda_*}^{-\frac{1}{2}}\Sigma_{\lambda_*}^{\frac{1}{2}}\right\|^2
    \left\|\Sigma_{\lambda_*}^{-\frac{1}{2}}((S_n^{(q)})^*\bm y_n - \Sigma_n^{(q)}f_{t})\right\|_H^2. \label{ineq: decomposed_two_errors3}
\end{align}
The second term has following bound:
\begin{align}
    &\left\|\Sigma^{\frac{1}{2}-s}((\Sigma_{n, \lambda_*}^{(q)
    })^{-1}\Sigma_n^{(q)}f_t - f_t)\right\|_H^2 \notag \\
    =&\ \lambda_*^2\left\|\Sigma^{\frac{1}{2}-s}(\Sigma_{n, \lambda_*}^{(q)})^{-1}f_t\right\|_H^2 \notag \\
    \leq&\ \lambda_*^2\left\|\Sigma^{\frac{1}{2}-s} \Sigma_{N, \lambda_*}^{-\frac{1}{2}}\right\|^2
    \left\|\Sigma_{N, \lambda}^\frac{1}{2}(\Sigma_{\lambda_*}^{(q)})^{-\frac{1}{2}}\right\|^2\left\|(\Sigma_{\lambda_*}^{(q)})^{-(1 - \frac{1}{2}\vee r)}\right\|^2
    \left\|(\Sigma_{\lambda_*}^{(q)})^{-(\frac{1}{2}\vee r -\frac{1}{2})}\Sigma_{N, \lambda}^{\frac{1}{2}\vee r -\frac{1}{2}}\right\|^2 \notag \\
    &\ \ \ \ \ \ \ \ \ \ \times \left\|\Sigma_{N, \lambda}^{-(\frac{1}{2}\vee r -\frac{1}{2})}\Sigma_\lambda^{\frac{1}{2}\vee r -\frac{1}{2}}\right\|^2
    \left\|\Sigma_\lambda^{-(\frac{1}{2}\vee r -\frac{1}{2})}f_t\right\|_H^2 \notag \\
    \leq&\ \lambda_*^{1\vee 2r}\left\|\Sigma^{\frac{1}{2}-s} \Sigma_{N, \lambda_*}^{-\frac{1}{2}}\right\|^2
    \left\|\Sigma_{N, \lambda}^\frac{1}{2}(\Sigma_{\lambda_*}^{(q)})^{-\frac{1}{2}}\right\|^2
    \left\|(\Sigma_{\lambda_*}^{(q)})^{-(\frac{1}{2}\vee r -\frac{1}{2})}\Sigma_{N, \lambda}^{\frac{1}{2}\vee r -\frac{1}{2}}\right\|^2 \notag \\
    &\ \ \ \ \ \ \ \ \ \ \times \left\|\Sigma_{N, \lambda}^{-(\frac{1}{2}\vee r -\frac{1}{2})}\Sigma_\lambda^{\frac{1}{2}\vee r -\frac{1}{2}}\right\|^2
    \left\|\Sigma_\lambda^{-(\frac{1}{2}\vee r -\frac{1}{2})}f_t\right\|_H^2. \label{ineq: decomposed_two_errors4}
\end{align}
We particularly set $t = 1/(\eta \lambda_*)$. The only differences from the arguments in Section \ref{app_sec: main_results} are the replacements of $\left\|(\Sigma_{n, \lambda}^{(q)})p_t(\Sigma_n^{(q)})\right\|^2$ (which has bound $O(1)$) with $1$ and $\left\|(\Sigma_\lambda^{(q)})^{\frac{1}{2} \vee r} r_t(\Sigma_n^{(q)})\right\|^2$ (which has a bound $O(\lambda^{1\vee 2r})$) with $\lambda^{1\vee 2r}$. Hence, we obtain the perfectly same variance bound as the one of gradient descent in Theorem \ref{prop: cred_il}.

\section{Sufficient Condition for $\mathcal N_\infty(\lambda) \ll \mathcal F_\infty(\lambda)$}\label{app_sec: sufficient_condition}
\begin{proposition}\label{prop: F_lower_bound}
Let $\{(\lambda_i, \phi_i)\}_{i=1}^d$ ($d \in \mathbb{N}\cup\{\infty\}$) be the eigen-system of $\Sigma$ in $L^2(\rho_{\mathcal X})$, where $\lambda_1 \geq \lambda_2 \geq \ldots > 0$. Assume that $\lambda_i = \Theta(i^{-\alpha})$ and  $\|\phi_i\|_{L^\infty(\rho_{\mathcal X})} = \Omega(i^{p/2})$ for any $i$ for some $\alpha = 1 + \Omega(1)$ and $p \geq 1$. Moreover if $d = \infty$, we additionally assume $\|\phi_i\|_{L^\infty(\rho_{\mathcal X})}^2 = O(i^{\alpha-1-\varepsilon})$ for any $i$ for some $\varepsilon > 0$. Then Assumption \ref{assump: kernel_boundedness} is satisfied and for any $\lambda \in (0, 1)$, 
\begin{align*}
    \mathcal F_\infty(\lambda) = \Omega\left(\lambda^{-\frac{p}{\alpha}}\wedge d^p\right).
\end{align*}
\end{proposition}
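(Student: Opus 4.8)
The plan is to first obtain a clean spectral formula for $\mathcal F_\infty(\lambda)$ in terms of the eigen-system $\{(\lambda_i,\phi_i)\}$, then exhibit a single well-chosen eigen-index whose contribution alone forces the claimed lower bound.

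\textbf{Step 1 (spectral representation).} I would first turn the $L^2(\rho_{\mathcal X})$ eigen-system of $\Sigma$ into an orthonormal basis of $H$. Setting $e_i = \lambda_i^{-1/2}S^*\phi_i$, a direct computation using $\Sigma = S^*S$, $\mathcal L = SS^*$ and $\mathcal L\phi_i = \lambda_i\phi_i$ gives $\langle e_i,e_j\rangle_H = \delta_{ij}$, $\Sigma e_i = \lambda_i e_i$ and $Se_i = \lambda_i^{1/2}\phi_i$, so the reproducing property yields $\langle K_x,e_i\rangle_H = e_i(x) = \lambda_i^{1/2}\phi_i(x)$. Expanding $K_x$ in $\{e_i\}$ and applying $\Sigma_\lambda^{-1/2}$ then produces the Mercer-type identity
\begin{align*}
    \left\|\Sigma_\lambda^{-1/2}K_x\right\|_H^2 = \sum_{i=1}^d \frac{\lambda_i}{\lambda_i+\lambda}\,\phi_i(x)^2 ,
\end{align*}
hence $\mathcal F_\infty(\lambda) = \sup_{x\in\mathrm{supp}(\rho_{\mathcal X})}\sum_i \frac{\lambda_i}{\lambda_i+\lambda}\phi_i(x)^2$. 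Letting $\lambda\to 0$ gives $\|K_x\|_H^2 = \sum_i \lambda_i\phi_i(x)^2$; under the extra hypothesis $\|\phi_i\|_{L^\infty(\rho_{\mathcal X})}^2 = O(i^{\alpha-1-\varepsilon})$ for $d=\infty$, combined with $\lambda_i = \Theta(i^{-\alpha})$, each summand is $O(i^{-1-\varepsilon})$ uniformly in $x$, so the series converges and Assumption \ref{assump: kernel_boundedness} holds (for finite $d$ this is automatic).

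\textbf{Step 2 (single-index lower bound).} Since every summand is nonnegative, for each fixed $i\le d$ I would keep only the $i$-th term and bound the residual supremum below by the essential supremum: as every value strictly below $\|\phi_i\|_{L^\infty(\rho_{\mathcal X})}^2$ is attained on a set of positive $\rho_{\mathcal X}$-measure, which necessarily meets $\mathrm{supp}(\rho_{\mathcal X})$, one obtains $\sup_{x\in\mathrm{supp}(\rho_{\mathcal X})}\phi_i(x)^2 \ge \|\phi_i\|_{L^\infty(\rho_{\mathcal X})}^2 = \Omega(i^p)$. Therefore, for every $i\le d$,
\begin{align*}
    \mathcal F_\infty(\lambda) \;\ge\; \frac{\lambda_i}{\lambda_i+\lambda}\,\Omega(i^p).
\end{align*}

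\textbf{Step 3 (optimizing the index).} Finally I would set $i^* = \lceil \lambda^{-1/\alpha}\rceil \wedge d$ and use $\lambda_i = \Theta(i^{-\alpha})$. If $\lambda^{-1/\alpha}\le d$, then $\lambda_{i^*} = \Theta(\lambda)$, the spectral factor is $\Theta(1)$, and the bound reads $\Omega((i^*)^p) = \Omega(\lambda^{-p/\alpha})$; if $\lambda^{-1/\alpha} > d$, taking $i^*=d$ gives $\lambda_d = \Theta(d^{-\alpha})\gtrsim\lambda$, again a $\Theta(1)$ spectral factor, and the bound reads $\Omega(d^p)$. Combining the two regimes gives $\mathcal F_\infty(\lambda) = \Omega(\lambda^{-p/\alpha}\wedge d^p)$. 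The main obstacle is Step 1: carefully justifying the passage from the $L^2(\rho_{\mathcal X})$ eigenfunctions $\phi_i$ to the $H$-orthonormal basis $e_i$, including the \emph{pointwise} identity $e_i(x) = \lambda_i^{1/2}\phi_i(x)$ and the pointwise (not merely $L^2$) convergence of the Mercer series, since the later steps evaluate these quantities at specific points of the support; the index optimization in Step 3 is elementary once the representation and the essential-sup comparison are in place.
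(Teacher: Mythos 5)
Your proposal is correct and follows essentially the same route as the paper's proof: the Mercer-type identity $\|\Sigma_\lambda^{-1/2}K_x\|_H^2 = \sum_i \frac{\lambda_i}{\lambda_i+\lambda}\phi_i(x)^2$, verification of Assumption \ref{assump: kernel_boundedness} via the summand bound $O(i^{-1-\varepsilon})$, and a single-term lower bound at the index $i^* \approx \lceil \lambda^{-1/\alpha}\rceil \wedge d$ where the spectral factor is $\Theta(1)$. Your write-up is in fact somewhat more careful than the paper's, which silently passes from the $L^\infty$-norm assumption to the supremum over $\mathrm{supp}(\rho_{\mathcal X})$ and does not spell out the $H$-orthonormal basis $e_i = \lambda_i^{-1/2}S^*\phi_i$ behind the pointwise expansion.
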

\begin{proof}
First note that from Mercer's theorem, we have  $K(x, x') = \sum_{i=1}^d \lambda_i \phi_i(x)\phi_i(x')$. Assumption \ref{assump: kernel_boundedness} is always satisfied when $d < \infty$ and thus we consider the case $d = \infty$. Since $\|\phi_i\|_{L^\infty(\rho_{\mathcal X})}^2 = O(i^{\alpha - 1 -\varepsilon})$, $\|K_x\|_H^2$ is uniformly bounded and thus Assumption \ref{assump: kernel_boundedness} is satisfied. Let $\lambda > 0$.
\begin{align*}
    \mathcal F_\infty(\lambda) =&\ \mathrm{sup}_{x\in\mathrm{supp}(\rho_{\mathcal X})}\|\Sigma_\lambda^{-\frac{1}{2}}K_x\|_H^2 \\
    =&\ \mathrm{sup}_{x\in\mathrm{supp}(\rho_{\mathcal X})}\left\|\sum_{i=1}^d \lambda_i \phi_i(x) (\lambda_i+\lambda)^{-\frac{1}{2}}\phi_i\right\|_H^2 \\
    =&\ \mathrm{sup}_{x\in\mathrm{supp}(\rho_{\mathcal X})}\sum_{i=1}^d \frac{\lambda_i\phi_i(x)^2}{\lambda_i + \lambda} \\
    \geq&\  \frac{\lceil  \lambda^{-\frac{1}{\alpha}}\wedge d \rceil^{-\alpha + p}}{\lceil  \lambda^{-\frac{1}{\alpha}}\wedge d \rceil^{-\alpha}+\lambda} \\
    =&  \Omega\left(\lambda^{-\frac{p}{\alpha}}\wedge d^p\right) 
\end{align*}
\end{proof}

\section{Extension to Random Features Settings}\label{app_sec: random_features}
The following lemma is analogous to Lemma \ref{lem: f_boundedness}. 
\begin{lemma}\label{lem: rf_f_boundedness}
Suppose that Assumptions \ref{assump: smoothness_of_true} and \ref{assump: rf_boundedness} hold.
Let $\eta = O(1/\kappa^2)$ be sufficiently small and $t \in \mathbb{N}$ such that $m = \widetilde \Omega(1+\kappa^2\eta t)$. Then for any $\delta > 0$, with probability at least $1 - \delta$,   
\begin{align*}
    \left\|\hat S \hat f_t - f_*\right\|_{L^2(\rho_\mathcal X)}^2 = O\left(R^2(\eta t)^{-2r}\right)
\end{align*}
and for any $\lambda > 0$ and $s \in [0, r]$
\begin{align*}
    \|\Sigma_\lambda^{-s}f_t\|_H^2 = O(R^2(\kappa^{4(r-s) - 2}+(\eta t)^{1-2(r-s)})).
\end{align*}
\begin{proof}
Recall that $\hat f_t = \hat f_{t-1} - \eta (\hat \Sigma \hat f_{t-1} - \hat S^*f_*)$ and $\hat f_0 = 0$. Thus we have $\hat S \hat f_t = \hat S \hat f_{t-1} - \eta \hat {\mathcal L}(\hat S \hat f_{t-1} - f_*)$. Hence it holds that
\begin{align*}
    \hat S \hat f_t - f_* = (I - \eta \hat{\mathcal L})(\hat S f_t - f_*).
\end{align*}
Therefore we get
\begin{align*}
    \|\hat S \hat f_t - f_*\|_{L^2(\rho {\mathcal X})}^2 = \|(I - \eta \hat{\mathcal L})f_*\|_{L^2(\rho {\mathcal X})}^2 = \|(I - \eta \hat {\mathcal L})^t\mathcal L^r \phi\|_{L^2(\rho {\mathcal X})}^2 = O(R^2)\|(I - \eta \hat{\mathcal L})^t\mathcal L^r\|^2.
\end{align*}
Let $\lambda' > 0$. Observe that $\|(I - \eta \hat{\mathcal L})^t\mathcal L^r\|^2 \leq \|(I - \eta \hat{\mathcal L})^t \hat {\mathcal L}_{\lambda'}^r {\mathcal L}_{\lambda'}^{-r}\mathcal L^r\|^2 
\leq \|(I - \eta \hat{\mathcal L})^t \hat {\mathcal L}_{\lambda'}^r\|^2\|\hat{\mathcal L}_{\lambda'}^{-r}\mathcal L^r\|^2$. 
We have $\|(I - \eta \hat{\mathcal L})^t \hat {\mathcal L}_{\lambda'}^r\|^2 \leq \|(I - \eta \hat{\mathcal L})^t \hat {\mathcal L}^r\|^2 + {\lambda'}^{2r} = O((\eta t)^{2r} + {\lambda'}^{2r})$ from Lemma \ref{lem: spectral_filters}. Also similar to Lemma \ref{lem: emp_exp_prod} with $s=0$, with high probability we have $\|\hat{\mathcal L}_{\lambda'}^{-r}\mathcal L^r\|^2 = O(1)$ if $m = \widetilde \Omega(1+\kappa^2\lambda'^{-1})$. Finally setting $\lambda' = (\eta t)^{-1}$ yields the first statement. The second statement can be easily proven in a very similar manner to the proof of Lemma 16 in \cite{lin2017optimal} but we need to use the fact that $\|\hat{\mathcal L}_{\lambda'}^{-r}\mathcal L^r\|^2 = O(1)$ with high probability as in the proof of the first statement. This finishes the proof.
\end{proof}
\end{lemma}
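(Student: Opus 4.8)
The natural first move is to push the recursion for $\hat f_t$ forward through $\hat S$ so that the bias lives entirely in $L^2(\rho_{\mathcal X})$. Applying $\hat S$ to $\hat f_t = \hat f_{t-1} - \eta(\hat\Sigma\hat f_{t-1} - \hat S^*f_*)$ and using $\hat S\hat\Sigma = \hat{\mathcal L}\hat S$ and $\hat S\hat S^* = \hat{\mathcal L}$ gives the clean residual recursion $\hat S\hat f_t - f_* = (I-\eta\hat{\mathcal L})(\hat S\hat f_{t-1} - f_*)$, hence, since $\hat f_0 = 0$, $\hat S\hat f_t - f_* = -(I-\eta\hat{\mathcal L})^t f_*$. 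Substituting the source condition $f_* = \mathcal L^r\phi$ of Assumption \ref{assump: smoothness_of_true} with $\|\phi\|_{L^2(\rho_{\mathcal X})}\le R$ then reduces the first claim to the operator-norm estimate $\|\hat S\hat f_t - f_*\|_{L^2(\rho_{\mathcal X})}^2 \le R^2\|(I-\eta\hat{\mathcal L})^t\mathcal L^r\|^2$, where the residual filter $(I-\eta\hat{\mathcal L})^t = r_t(\hat{\mathcal L})$ is exactly the object of Lemma \ref{lem: spectral_filters}.

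The difficulty is that the filter is built from the random-feature operator $\hat{\mathcal L}$ while the smoothness is expressed through the \emph{true} operator $\mathcal L$, so I would decouple them with a regularizer $\hat{\mathcal L}_{\lambda'} = \hat{\mathcal L}+\lambda' I$. Writing $r_t(\hat{\mathcal L})\mathcal L^r = \big(r_t(\hat{\mathcal L})\hat{\mathcal L}_{\lambda'}^r\big)\big(\hat{\mathcal L}_{\lambda'}^{-r}\mathcal L^r\big)$ and taking norms splits the bound into a deterministic spectral factor and a stochastic factor. For the deterministic one, working spectrally in $\hat{\mathcal L}$ and using subadditivity $(x+\lambda')^r \le x^r + \lambda'^r$ (valid for $r\in(0,1]$) together with Lemma \ref{lem: spectral_filters} at $u=r$ and $\|r_t(\hat{\mathcal L})\|\le 1$ (guaranteed by $\eta=O(1/\kappa^2)$ and $\|\hat{\mathcal L}\|\le\kappa^2$ under Assumption \ref{assump: rf_boundedness}) yields $\|r_t(\hat{\mathcal L})\hat{\mathcal L}_{\lambda'}^r\| = O((\eta t)^{-r} + \lambda'^r)$.

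The main obstacle is the stochastic factor $\|\hat{\mathcal L}_{\lambda'}^{-r}\mathcal L^r\|$, which must be forced to $O(1)$ with high probability. I would first remove the fractional power via Cordes' inequality (Proposition 4 in \cite{rudi2017generalization}), bounding it by a power of the regularized cross term $\|\mathcal L_{\lambda'}^{1/2}\hat{\mathcal L}_{\lambda'}^{-1/2}\|$, and then control that term exactly as in Lemma \ref{lem: emp_exp_prod} with $s=0$: the Rudi--Rosasco decomposition (Propositions 6 and 8 in \cite{rudi2017generalization}) bounds it through $\lambda_{\max}(\mathcal L_{\lambda'}^{-1/2}(\mathcal L-\hat{\mathcal L})\mathcal L_{\lambda'}^{-1/2})$, and a concentration inequality for the random-feature average $\hat{\mathcal L}$ drives this below $1/2$ once $m=\widetilde\Omega(1+\kappa^2\lambda'^{-1})$, giving $\|\hat{\mathcal L}_{\lambda'}^{-r}\mathcal L^r\| = O(1)$ on an event of probability at least $1-\delta$. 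Choosing $\lambda' = (\eta t)^{-1}$ turns the sampling requirement into $m=\widetilde\Omega(1+\kappa^2\eta t)$ and collapses both factors to $O((\eta t)^{-r})$, which establishes the first statement.

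For the companion norm bound $\|\Sigma_\lambda^{-s}f_t\|_H^2$, I would transcribe the argument of Lemma 16 in \cite{lin2017optimal} into the random-feature space, the only substantive change being that wherever the true covariance is inverted against the smoothness power one inserts the same high-probability estimate $\|\hat{\mathcal L}_{\lambda'}^{-r}\mathcal L^r\|=O(1)$ obtained above; reusing the identical event keeps the whole statement on probability at least $1-\delta$ and reproduces $O(R^2(\kappa^{4(r-s)-2}+(\eta t)^{1-2(r-s)}))$. I expect the bookkeeping of the cross term under fractional powers to be the only delicate point, the remainder being a routine port of the RKHS bias analysis into $\mathbb{R}^m$.
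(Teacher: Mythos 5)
Your proposal follows essentially the same route as the paper's own proof: push the recursion through $\hat S$ to get the residual $(I-\eta\hat{\mathcal L})^t f_*$, split via $\hat{\mathcal L}_{\lambda'}^r\cdot\hat{\mathcal L}_{\lambda'}^{-r}\mathcal L^r$, bound the filter factor by Lemma \ref{lem: spectral_filters}, force $\|\hat{\mathcal L}_{\lambda'}^{-r}\mathcal L^r\|=O(1)$ with high probability via the Lemma \ref{lem: emp_exp_prod}-type concentration argument under $m=\widetilde\Omega(1+\kappa^2\lambda'^{-1})$, set $\lambda'=(\eta t)^{-1}$, and port Lemma 16 of \cite{lin2017optimal} for the second claim. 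The only difference is that you spell out steps the paper leaves implicit (the subadditivity $(x+\lambda')^r\le x^r+\lambda'^r$ and the Cordes-inequality reduction of the fractional-power cross term), which is consistent with how the paper handles the analogous terms in Section \ref{app_sec: main_results}.
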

\begin{lemma}[Proposition 10 in \citep{rudi2017generalization}] \label{lem: rf_N_diff_bound} Suppose that Assumption \ref{assump: rf_boundedness} holds.
We denote 
$\hat{\mathcal N}_\infty(\lambda) = \mathbb{E}_x\|\hat \Sigma_\lambda^{-1/2} \phi_{m, x}\|_2^2$ for $\lambda > 0$.
For any $\delta \in (0, 1)$ and sufficiently small $\lambda = O(1)$, if $m = \widetilde \Omega (1 + \kappa^2\lambda^{-1})$, with probability at least $1-\delta$ it holds that
\begin{align*}
    \hat{\mathcal N}_\infty(\lambda) \leq 1.55 \mathcal N_\infty(\lambda).
\end{align*}
\end{lemma}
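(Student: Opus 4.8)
The plan is to reproduce the mechanism behind Proposition~10 of \cite{rudi2017generalization}, which rests on viewing the random-feature operator $\hat L = \hat S\hat S^*$ as an i.i.d.\ average that concentrates around $\mathcal L = SS^*$. First I would record the spectral identities $\hat{\mathcal N}_\infty(\lambda) = \mathrm{Tr}(\hat\Sigma_\lambda^{-1}\hat\Sigma) = \mathrm{Tr}(\hat L_\lambda^{-1}\hat L)$ and $\mathcal N_\infty(\lambda) = \mathrm{Tr}(\Sigma_\lambda^{-1}\Sigma) = \mathrm{Tr}(\mathcal L_\lambda^{-1}\mathcal L)$, both of which depend only on the shared nonzero spectra. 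Writing $\psi_\omega := \psi(\cdot,\omega) \in L^2(\rho_{\mathcal X})$, a direct computation gives $\hat L = \frac1m\sum_{k=1}^m \psi_{\omega_k}\otimes\psi_{\omega_k}$ with $\mathbb E_\omega[\psi_\omega\otimes\psi_\omega] = \mathcal L$, so $\hat L$ is an empirical mean of i.i.d.\ rank-one operators. Assumption~\ref{assump: rf_boundedness} supplies the uniform bound $\sup_\omega\|\mathcal L_\lambda^{-1/2}\psi_\omega\|_{L^2}^2 \le \kappa^2\lambda^{-1}$, which plays here the role that $\mathcal F_\infty(\lambda)$ plays for exact features.

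The core of the argument is a weighted operator comparison that sidesteps the noncommutativity of $\hat L$ and $\mathcal L$. Setting $\beta := \lambda_{\max}\big(\mathcal L_\lambda^{-1/2}(\mathcal L - \hat L)\mathcal L_\lambda^{-1/2}\big)$, the identity $\mathcal L_\lambda^{-1/2}\hat L_\lambda\mathcal L_\lambda^{-1/2} = I - \mathcal L_\lambda^{-1/2}(\mathcal L - \hat L)\mathcal L_\lambda^{-1/2}$ shows that $\hat L_\lambda \succeq (1-\beta)\mathcal L_\lambda$ whenever $\beta < 1$, and hence $\hat L_\lambda^{-1} \preceq (1-\beta)^{-1}\mathcal L_\lambda^{-1}$. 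Applying trace monotonicity (for $0\preceq A\preceq B$ and $C\succeq 0$ one has $\mathrm{Tr}(AC)\le\mathrm{Tr}(BC)$) against the PSD operator $\hat L$ then yields $\hat{\mathcal N}_\infty(\lambda) = \mathrm{Tr}(\hat L_\lambda^{-1}\hat L) \le (1-\beta)^{-1}\mathrm{Tr}(\mathcal L_\lambda^{-1}\hat L)$. The remaining factor $\mathrm{Tr}(\mathcal L_\lambda^{-1}\hat L) = \frac1m\sum_k \|\mathcal L_\lambda^{-1/2}\psi_{\omega_k}\|_{L^2}^2$ is an average of $[0,\kappa^2\lambda^{-1}]$-valued i.i.d.\ variables with mean exactly $\mathcal N_\infty(\lambda)$, so a scalar Bernstein inequality controls it by $(1+\varepsilon)\mathcal N_\infty(\lambda)$ for large $m$.

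It then remains to bound $\beta$ with high probability. Here I would invoke the operator Bernstein inequality (Proposition~6 in \cite{rudi2017generalization}) for the centered i.i.d.\ sum $\mathcal L_\lambda^{-1/2}(\mathcal L - \hat L)\mathcal L_\lambda^{-1/2}$, whose summands are bounded in operator norm by $\kappa^2\lambda^{-1}$ and whose intrinsic-dimension parameter is governed by $\mathcal N_\infty(\lambda)$. This delivers $\beta = \widetilde O\big(\sqrt{\kappa^2\lambda^{-1}/m} + \kappa^2\lambda^{-1}/m\big)$, so that the hypothesis $m = \widetilde\Omega(1 + \kappa^2\lambda^{-1})$ drives both $\beta$ and $\varepsilon$ small enough that $(1-\beta)^{-1}(1+\varepsilon) \le 1.55$, giving the claim.

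I expect the main obstacle to be the constant bookkeeping rather than the structure: obtaining the sharp factor $1.55$ requires calibrating the Bernstein failure probability and the deviation threshold on $\beta$ so that $(1-\beta)^{-1}(1+\varepsilon)$ stays below $1.55$ while keeping the requirement on $m$ at the stated $\widetilde\Omega(1+\kappa^2\lambda^{-1})$. The noncommutativity is fully absorbed by the weighted inequality $\hat L_\lambda^{-1}\preceq(1-\beta)^{-1}\mathcal L_\lambda^{-1}$, so the only genuinely delicate point is this concentration calibration, which is precisely what Proposition~6 of \cite{rudi2017generalization} provides; accordingly I would ultimately cite that result rather than re-derive the matrix Bernstein bound from scratch.
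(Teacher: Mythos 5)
The paper offers no proof of this lemma: it is imported wholesale from the cited reference, with the bracketed attribution ``Proposition 10 in \citep{rudi2017generalization}'' serving as the entire justification, both in the main text (Lemma \ref{main_lem: rf_N_diff_bound}) and in the appendix (Lemma \ref{lem: rf_N_diff_bound}). Your proposal therefore cannot be compared against an in-paper argument; what it does is reconstruct the mechanism inside the cited result, and the reconstruction is sound. The identities $\hat{\mathcal N}_\infty(\lambda)=\mathrm{Tr}(\hat\Sigma_\lambda^{-1}\hat\Sigma)=\mathrm{Tr}(\hat L_\lambda^{-1}\hat L)$ and $\mathcal N_\infty(\lambda)=\mathrm{Tr}(\mathcal L_\lambda^{-1}\mathcal L)$, the representation $\hat L=\frac{1}{m}\sum_{k}\psi_{\omega_k}\otimes\psi_{\omega_k}$ with $\mathbb{E}_\omega[\psi_\omega\otimes\psi_\omega]=\mathcal L$, the weighted comparison $\hat L_\lambda^{-1}\preceq(1-\beta)^{-1}\mathcal L_\lambda^{-1}$ for $\beta=\lambda_{\mathrm{max}}\left(\mathcal L_\lambda^{-1/2}(\mathcal L-\hat L)\mathcal L_\lambda^{-1/2}\right)<1$, trace monotonicity against $\hat L\succeq 0$, and the split into an operator Bernstein bound on $\beta$ plus a scalar Bernstein bound on $\frac{1}{m}\sum_{k}\|\mathcal L_\lambda^{-1/2}\psi_{\omega_k}\|_{L^2(\rho_{\mathcal X})}^2$ (whose summands lie in $[0,\kappa^2\lambda^{-1}]$ with mean exactly $\mathcal N_\infty(\lambda)$) are precisely the ingredients of the proof in the cited source; the paper itself tacitly confirms this structure in the footnote attached to the proof of Lemma \ref{lem: is_emp_exp_prod}, which notes that Proposition 10 of the reference rests on its Proposition 6. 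One point you leave implicit deserves to be made explicit: the hypothesis ``sufficiently small $\lambda=O(1)$'' is not decorative, since the multiplicative constant $1.55$ is only reachable because $\mathcal N_\infty(\lambda)\geq\lambda_1/(\lambda_1+\lambda)=\Omega(1)$ for such $\lambda$; only with this lower bound can the additive deviations $\sqrt{\kappa^2\lambda^{-1}\mathcal N_\infty(\lambda)\log(\delta^{-1})/m}+\kappa^2\lambda^{-1}\log(\delta^{-1})/m$ and the bound on $\beta$ be absorbed into $(1-\beta)^{-1}(1+\varepsilon)\leq 1.55$ with $m=\widetilde\Omega(1+\kappa^2\lambda^{-1})$ and no dependence of $m$ on $1/\mathcal N_\infty(\lambda)$. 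With that calibration spelled out your argument is complete, and your closing decision to cite the concentration step rather than re-derive it is simply a milder version of the paper's own choice of citing the entire statement.
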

Combining the bias and variance bounds with Lemma \ref{lem: rf_N_diff_bound} yields the following theorem:
\begin{theorem}[Generalization Error of CRED-GD with RF]\label{thm: rf_cred_il}
Suppose that Assumptions \ref{assump: smoothness_of_true}, \ref{assump: eigen_decay}, \ref{assump: noise} and \ref{assump: rf_boundedness} hold. Let $\eta = \Theta(1/\kappa^2)$ be sufficiently small, $\lambda_q = \lambda_*$ and $T = \widetilde \Theta (t_\eta^*)$. For any $\delta \in (0, 1)$, if $m \geq \widetilde O(1 + \kappa^2\lambda_*^{-1})$, there exists event $A$ with $P(A) \geq 1 - \delta$ such that RF-CRED-GD satisfies
\begin{align*}
    \mathbb{E}\left[\left\|\hat S \hat g_T - f_*\right\|_{L^2(\rho_{\mathcal X})}^2 \mid A \right] 
    = \widetilde{O}\left(
    \left(\frac{\sigma^2\mathrm{Tr}(\Sigma^{\frac{1}{\alpha}})}{n}\right)^{\frac{2r\alpha}{2r\alpha+1}}
    +    
    \left(\frac{R^2\mathrm{Tr}(\Sigma^{\frac{1}{\alpha}})}{n}\right)^{2r\alpha} 
    +
    \lambda_N^{2r}
    \right),
\end{align*}
where $\lambda_N$ is defined in Definition \ref{def: opt_num_iter} in Section \ref{sec: generalization} of the main paper.
\end{theorem}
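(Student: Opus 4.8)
The plan is to follow the same bias--variance strategy used for Theorem~\ref{main_thm: cred_il}, carrying every operator-concentration argument over to the finite-dimensional random-features operators $\hat\Sigma$, $\hat{\mathcal L}$, $\hat S$. Starting from the decomposition $\|\hat S\hat g_t - f_*\|_{L^2(\rho_{\mathcal X})}^2 \le 2\|\hat S\hat f_t - f_*\|_{L^2(\rho_{\mathcal X})}^2 + 2\|\hat S(\hat g_t - \hat f_t)\|_{L^2(\rho_{\mathcal X})}^2$ already recorded in the main text, I would bound the bias and the variance separately and then balance them by choosing $t = t_\eta^*$ and $\lambda_q = \lambda_*$.

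The bias term is handled directly by Lemma~\ref{lem: rf_f_boundedness}, which gives $\|\hat S\hat f_t - f_*\|_{L^2(\rho_{\mathcal X})}^2 = O(R^2(\eta t)^{-2r})$ provided $m = \widetilde\Omega(1 + \kappa^2\eta t)$; since $t \approx 1/(\eta\lambda_*)$, this is exactly the requirement $m = \widetilde\Omega(1 + \kappa^2\lambda_*^{-1})$ in the hypothesis. The second statement of Lemma~\ref{lem: rf_f_boundedness} also supplies the weighted-norm control of $\hat f_t$ that is needed inside the variance estimate.

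For the variance, the strategy is to reproduce the chain of estimates leading to Proposition~\ref{prop: cred_il} essentially verbatim, but with $K_x$ replaced by $\phi_{m,x}$, $\Sigma$ by $\hat\Sigma$, and the population quantities $\mathcal N_\infty,\mathcal F_\infty$ by their random-features analogues $\hat{\mathcal N}_\infty,\hat{\mathcal F}_\infty$. Concretely, I would re-derive the finite-dimensional versions of Lemmas~\ref{lem: emp_exp_prod}, \ref{lem: is_emp_exp_prod} and \ref{lem: is_grad_diff}: because these arguments only use Proposition~6 and Proposition~8 of \citep{rudi2017generalization}, which are stated for general self-adjoint operators, they transfer without essential change once $\hat\Sigma$ is substituted. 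This yields, conditioned on the draw $\{\omega_k\}_{k=1}^m$, a variance bound of the form $\widetilde O\big((\sigma^2 + R^2\lambda^{2r})\hat{\mathcal N}_\infty(\lambda_q)/n + \lambda^{2r} + \hat r_N\big)$, using the trivial estimate $\hat{\mathcal F}_\infty(\lambda_q) = O(\lambda_q^{-1})$ in the lower-order remainder. The one genuinely new ingredient is the passage from $\hat{\mathcal N}_\infty$ back to the population effective dimension: here I would invoke Lemma~\ref{lem: rf_N_diff_bound} (Proposition~10 of \citep{rudi2017generalization}) to conclude $\hat{\mathcal N}_\infty(\lambda_q) \le 1.55\,\mathcal N_\infty(\lambda_q)$, valid precisely when $m = \widetilde\Omega(1 + \kappa^2\lambda_q^{-1})$, and then Lemma~\ref{lem: effective_dimensionality} to bound $\mathcal N_\infty(\lambda_q) \le \mathrm{Tr}(\Sigma^{1/\alpha})\lambda_q^{-1/\alpha}$.

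Finally, with both pieces in hand I would set $\lambda = \lambda_q = \lambda_*$, so that $t = t_\eta^* = \lceil 1/(\eta\lambda_*)\rceil$, at which point the bias $R^2\lambda_*^{2r}$ and the leading variance term $\widetilde O(\sigma^2\mathrm{Tr}(\Sigma^{1/\alpha})\lambda_*^{-1/\alpha}/n)$ balance and the remainder collapses into $\lambda_N^{2r}$, exactly as in the proof of Theorem~\ref{main_thm: cred_il}; substituting the definition of $\lambda_*$ from Definition~\ref{def: opt_num_iter} gives the stated rate. The main obstacle I anticipate is bookkeeping rather than conceptual: I must check that each operator-norm factor appearing in the random-features analogues of inequalities~(\ref{ineq: decomposed_two_errors1}) and (\ref{ineq: decomposed_two_errors2}) stays $O(1)$ under the \emph{single} sampling budget $m = \widetilde\Omega(1 + \kappa^2\lambda_*^{-1})$, i.e.\ that the same $m$ simultaneously controls the bias (via Lemma~\ref{lem: rf_f_boundedness}), the empirical-covariance concentration, and the effective-dimension transfer (via Lemma~\ref{lem: rf_N_diff_bound}), and that intersecting all these high-probability events still leaves total failure probability at most $\delta$.
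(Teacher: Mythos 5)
Your proposal follows exactly the same route as the paper's own proof: the bias--variance decomposition with the bias handled by Lemma~\ref{lem: rf_f_boundedness}, the variance obtained by repeating the argument of Proposition~\ref{prop: cred_il} with $K_x$ replaced by $\phi_{m,x}$ and $\hat{\mathcal F}_\infty(\lambda) = O(\lambda^{-1})$ used trivially, the transfer $\hat{\mathcal N}_\infty(\lambda_q) \leq 1.55\,\mathcal N_\infty(\lambda_q)$ via Lemma~\ref{lem: rf_N_diff_bound}, and the final balancing with $\lambda_q = \lambda_*$, $T = \widetilde\Theta(t_\eta^*)$. Your closing remark about verifying that a single budget $m = \widetilde\Omega(1+\kappa^2\lambda_*^{-1})$ and a union bound over the high-probability events suffice is precisely the bookkeeping the paper leaves implicit, so the proposal is correct and essentially identical to the paper's argument.
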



\end{document}